\documentclass{article}
\usepackage{graphicx} 
\usepackage{amsmath,amssymb,amsthm,amsfonts}
\usepackage[margin=1in]{geometry}
\usepackage{lipsum}
\usepackage{authblk}
\usepackage{blindtext}
\usepackage{amsfonts}
\usepackage{graphicx}
\usepackage{epstopdf}
\usepackage{algorithmic}
\usepackage{xr-hyper}
\usepackage{hyperref}
\usepackage{float}

\newtheorem{theorem}{Theorem}
\newtheorem{proposition}{Proposition}
\newtheorem{lemma}{Lemma}
\newtheorem{corollary}{Corollary}
\newtheorem{definition}{Definition}
\newtheorem{assumption}{Assumption}

\newcommand{\R}{\mathbb{R}}
\newcommand{\E}{\mathbb{E}}

\newcommand{\by}{\boldsymbol{y}}
\newcommand{\bI}{\mathbf{I}}

\newcommand{\A}{\mathcal{A}}
\newcommand{\mx}{\mathcal{X}}
\newcommand{\my}{\mathcal{Y}}
\newcommand{\T}{\mathcal{T}}
\newcommand{\me}{\mathcal{E}}
\newcommand{\md}{\mathcal{D}}
\newcommand{\trace}{\textrm{Tr}}
\newcommand{\diag}{\textrm{diag}}
\newcommand{\op}{\textrm{op}}
\newcommand*\samethanks[1][\value{footnote}]{\footnotemark[#1]}
\newcommand\inp{\mathrel{\stackrel{\makebox[0.05pt]{\mbox{\normalfont \small P}}}{\rightarrow}}}


\title{In-Context Learning of Linear Systems: Generalization Theory and Application to Operator Learning}

\author{Frank Cole\thanks{School of Mathematics, University of Minnesota, Minneapolis, MN 55455. \texttt{\{cole0932, yulonglu, zhan7594\}@umn.edu}}, \; Yulong Lu\samethanks, \; Wuzhe Xu\thanks{Department of Mathematics and Statistics, University of Massachusetts Amherst, MA 01003. \texttt{wuzhexu@umass.edu}}, \; and \;Tianhao Zhang\samethanks}





\begin{document}

\maketitle

\begin{abstract}
    We study theoretical guarantees for solving linear systems in-context using a linear transformer architecture. For in-domain generalization, we provide neural scaling laws that bound the generalization error in terms of the number of tasks and sizes of samples used in training and inference. For out-of-domain generalization, we find that the behavior of trained transformers under task distribution shifts depends crucially on the distribution of the tasks seen during training. We introduce a novel notion of task diversity and show that it defines a necessary and sufficient condition for pre-trained transformers generalize under task distribution shifts. We also explore applications of learning linear systems in-context, such as to in-context operator learning for PDEs. Finally, we provide some numerical experiments to validate the established theory. 
\end{abstract}

\section{Introduction}
Transformers \cite{vaswani2017attention} have achieved tremendous success on natural language processing tasks, exemplified by large language models such as ChatGPT \cite{achiam2023gpt}. More recently, transformers have also found applications to computer vision \cite{khan2022transformers,liu2021swin}, physical sciences \cite{mccabe2023multiple, subramanian2024towards, ye2024pdeformer}, and other fields. Transformers, distinguished from feedforward neural networks by their self-attention mechanism, are designed to operate on large sequences of vectors. Remarkably, transformers exhibit the ability to perform \textit{in-context learning (ICL)}: pre-trained models can adapt to downstream tasks \textit{without updating their weights} by conditioning on a sequence of labeled examples.

A recent line of work aims to understand the mechanisms through which transformers perform in-context learning by studying their behavior on mathematically tractable problems. Specifically, several recent papers \cite{zhang2023trained,ahn2023transformers,mahankali2023one,kim2024transformers,bai2024transformers} have studied the behavior of transformers in solving regression problems in-context. In this setting, the input of the transformer is a sequence of the form
\begin{equation}\label{eqn:icllinearegression}
S = (x_1, f(x_1), \dots, x_n, f(x_n), x_{n+1}) \end{equation}
where $\{x_i\}_{i=1}^{n+1} \in \R^d$ and $f : \R^d \rightarrow \R$ are drawn from respective marginal distributions, and the model is trained to predict the unseen label $f(x_{n+1})$ by minimizing an $\ell^2$-loss averaged over the distributions of $x_1, \dots, x_{n+1}$ and $f$. We will refer to each function $f$ as a \textit{task} and each vector $x_i \in \R^d$ as a \textit{covariate}. At test time, the model is given a sequence of the form in Equation \eqref{eqn:icllinearegression} -- corresponding to a new task which the model has not seen during training -- and asked to predict the label of $x_{n+1}.$ Several works have studied this problem when the tasks are defined by linear functions $f(x) = \langle w, x \rangle.$ Specifically, the work \cite{zhang2023trained} proved that the transformer which minimizes the $\ell^2$ population loss provably predicts the correct label in the large-sample limit by deriving quantitative bounds on the prediction error in terms of the sequence length $n$. Remarkably, this bound holds for any task $w^{\ast}$, \textit{even those far beyond the support of the distribution of tasks seen during training}. 

While most of the existing literature focuses on learning scalar-valued functions in-context, many scientific applications involve learning a vector-valued function. In the setting of in-context learning, studying vector-valued functions significantly increases the complexity of the task space, even for linear functions. It is therefore unclear whether the same guarantees for in-context learning scalar-valued functions hold in the multivariate setting. 

\subsection{Our contributions}
In this work, we derive mathematical guarantees for solving linear systems of the form $Ax = y$, where $x,y \in \R^d$ and $A \in \R^{d \times d},$ in-context. Our main contributions are outlined below.
\begin{enumerate}
    \item \textbf{(In-domain generalization):} We prove an in-domain generalization bound for the expected risk of pre-trained linear transformers for solving linear systems in-context. Our bound depends on the number of labels per task during training and testing, as well as the number of tasks seen during pre-training; see Theorem \ref{thm:indomaingen} for a precise statement of our results. 
    \item \textbf{(Out-of-domain generalization):} We find that when transformers are trained to solve linear systems in-context, their adaptability under task distribution shifts depends significantly on the \textit{diversity} of the distribution of tasks seen during training. We propose a novel notion of task diversity which guarantees pre-trained transformers to well generalize under task distribution shifts. Further, we provide several positive (resp. counter-) examples of distributions that fulfill (resp. do not fulfill) the task diversity condition; see Theorems \ref{thm: taskshift} and \ref{thm: diversitysufficient} and Propositions \ref{thm: taskshiftimproved} and \ref{thm: oodgennecessary} for precise statements.
    \item \textbf{(Application to operator learning of PDEs):} We show how our results can be applied to prove guarantees for the in-context learning of operators between infinite-dimensional spaces. We prove a general result (Theorem \ref{thm: indomaingenoperator}) which translates our in-domain generalization bounds for in-context learning linear systems to the operator learning setting, which we then use to derive generalization bounds for in-context learning solution operators to linear elliptic PDEs; see Corollary \ref{cor: generalizationellipticpde}.
\end{enumerate}

\subsection{Related work}
\paragraph{ICL of statistical models and theoretical analysis} The works \cite{garg2022can,akyurek2022learning,von2023transformers} show by explicit construction  that linear transformers can implement a single step of gradient descent for ICL of linear regression. The works \cite{zhang2023trained, mahankali2023one, ahn2023transformers} concurrently studied the approximation and statistical guarantees of single-layer linear transformers trained to perform in-context linear regression. Various works refined the analysis of linear transformers for linear regression tasks, extending the theory to affine functions \cite{zhang2024context}, robustness properties \cite{anwar2024adversarial}, and  asymptotic behavior \cite{lu2024asymptotic}. Going beyond linear functions, several works \cite{bai2024transformers,guo2023transformers, wu2024transformers} have leveraged algorithm unrolling to prove approximation error bounds for nonlinear transformers in representing more complex functions. The work \cite{li2024one} studied the in-context learning ability of one-layer transformers with softmax attention and interpreted the inference-time prediction as a one-nearest neighbor estimator. The works \cite{kim2024transformers, mroueh2023towards} studied in-context learning from the perspective of non-parametric statistics. Optimization analysis of in-context learning has recently been studied in various settings, including for learning nonlinear functions \cite{huang2023context, kim2024transformers2, oko2024pretrained}, using nonlinear attention \cite{yang2024context}, and using multiple heads \cite{chen2024training}. The work \cite{wu2023many} analyzes the performance of SGD for in-context learning linear regression under finitely-many pre-training tasks.

Several recent works have also studied the behavior of in-context learning under non-IID observation models. In particular, the works \cite{sander2024transformers, zheng2024mesa, cole2025context} study the ability of linear transformers to represent linear dynamical systems in-context, deriving connections and differences between the IID and non-IID settings. The work \cite{li2023transformers} studies the generalization of transformers in learning more general state space models, while the work \cite{goel2024can} investigates the ability of transformers to approximate Kalman filtering. Finally, another line of work \cite{nichani2024transformers,edelman2024evolution,chen2024unveiling} studies the ability of softmax transformers to in-context learn Markov chains over finite state spaces. In this setting, it is shown that transformers learn to implement an induction head mechanism \cite{olsson2022context}.

Among the aforementioned works, the settings of \cite{zhang2023trained, ahn2023transformers,chen2024training} are closest to us. Our theoretical bound on the population risk extends the results of \cite{zhang2023trained, ahn2023transformers} for the  linear regression tasks to the tasks of solving linear systems. Different from those works where the data and task distributions are assumed to be Gaussian, the task distributions considered here are  fairly general and non-Gaussian. Our out-of-domain generalization error bounds substantially improve the prior generalization bound \cite{mroueh2023towards} established for general ICL problems. In particular, we show that the error due to the distribution shift can be reduced by a factor of $1/m$, where $m$ is the prompt-length of a downstream task. This rigorously justifies that the key robust feature of pre-trained transformer model with respect to task distribution shifts, which has previously been empirically observed in ICL of PDEs (see e.g. \cite{yang2023context,yang2024pde}), but has only been rigorously studied by \cite{zhang2023trained} in the linear regression setting. 

\paragraph{Foundation models and ICL for PDEs} Several transformer-based foundation models for solving PDEs have been developed in \cite{subramanian2024towards,mccabe2023multiple,ye2024pdeformer,sun2024towards} where the pre-trained transformers are adapted to downstream tasks with  fine-tuning on additional datasets. The works \cite{yang2023context,yang2024pde} study the in-context operator learning of differential equations where the adaption of the pre-trained model is achieved by only conditioning on new prompts. While these empirical work show great adaptivity of scientific foundation models for solving PDEs, their theoretical guarantees are largely open. To the best of our knowledge, this work is the first to derive the theoretical error bounds of transformers for learning linear PDEs in context.

\section{Problem set-up}\label{sec:setup}

\subsection{In-context learning of linear systems}\label{subsec:iclsystems}
Let $\A \subset \R^{d \times d}$ be a collection of invertible matrices, let $P_{\A}$ denote a probability measure on $\A,$ and let $P_x$ denote a probability measure on $\R^d.$ Our goal is to train a single model to solve the (possibly infinite) family of linear systems
$$ \{Ay = x: x,y \in \R^d, \; A \in \A\}
$$
in-context. More precisely, at the training stage, we are given a training dataset comprising $N$ length-$n$ prompts of input-output pairs $\{(x_i,y_i)\}_{i=1}^{n}$ with $x_i \sim P_x,$ $y_i = A^{-1} x_i,$ and $A \sim P_{\A}.$ We refer to $x \in \R^d$ as a \textit{covariate} and $y \in \R^d$ as a \textit{label}. An ICL model, pre-trained on the data above, is asked to predict the label $y$ corresponding to a new covariate $x$ and a new task $A$, conditioned on new prompt $\{(y_i,x_i)\}_{i=1}^{m},$ which may or may not have the same distribution as the training prompts. Further, the length $m$ of the prompt at test time may differ from the length $n$ of the prompts during pre-training. We emphasize that an in-context learning model is fundamentally different from a supervised learning model: the latter is trained to solve a single task, while the former is trained to solve an entire family of tasks indexed by $\mathcal{A}$. To make the analysis tractable, we make the following assumptions on the task space $\mathcal{A}$ and the marginal distribution $P_x$ of the covariates.

\begin{assumption}\label{assumption:taskanddatadistr}
    There exist constants $c_{\mathcal{A}}, C_{\mathcal{A}} > 0$ such that $\|A^{-1}\|_{\textrm{op}} \leq c_{\mathcal{A}}$ and $\|A\|_{\textrm{op}} \leq C_{\mathcal{A}}$ for all $A \in \mathcal{A}.$ In addition, the covariate distribution $P_x$ is Gaussian $\mathcal{N}(0,\Sigma).$
\end{assumption}

\subsection{Linear transformer architecture}\label{subsec:tf}
Let $k, $ be arbitrary, let $P,Q \in \R^{k \times k},$ and let $\theta = (P,Q).$ A single-layer linear transformer parameterized by $\theta$ is a mapping that acts on $\R^k$-valued sequences of arbitrary length. Specifically, if $Z = \begin{pmatrix}
    z_1 & \dots & z_T
\end{pmatrix} \in \R^{k \times T}$ is a length-$T$ sequence in $\R^k$, then $\textrm{TF}_{\theta}(Z) \in \R^{k \times T}$ is a length-$T$ sequence in $\R^k$ defined by
\begin{equation}\label{eqn:lineartf}
    \textrm{TF}_{\theta}(Z) = Z + PZ \cdot \frac{Z^T Q Z}{\rho(T)},
\end{equation}
where $\rho(T) \in \mathbb{N}$ is a normalization factor depending on the sequence length. A linear transformer defines a model for in-context learning the family of linear systems defined in Section \ref{subsec:iclsystems} as follows. Given a prompt $\{(x_i,y_i)\}_{i=1}^{N}$ with $\{x_i\}_{i=1}^{N} \sim P_x,$ $y_i = A^{-1} x_i,$ and $A \in \mathcal{A},$ and an unlabeled covariate $x_{n+1},$ define the embedding matrix
\begin{equation}\label{eqn: embeddingmatrix} Z_{\{(x_i,y_i)\}_{i=1}^{n}, x_{n+1}} = Z = \begin{pmatrix}
    x_1 & \dots & x_n & x_{n+1} \\
    y_1 & \dots & y_n & 0
\end{pmatrix} \in \R^{2d \times (n+1)}.
\end{equation}
For $P,Q \in \R^{2d \times 2d}$, define $\textrm{TF}_{\theta}(Z)$ as in Equation \eqref{eqn:lineartf}, where the normalization is defined as $\rho(T) = T-1.$ We then define the prediction $y_{n+1}^{\theta}$ for $A^{-1}x_{n+1}$ corresponding to $\theta$ by
\begin{equation}\label{eqn: tfprediction}
    y^{\theta}_{n+1} = (\textrm{TF}_{\theta}(Z))_{(d+1):2d, (n+1)} \in \R^d,
\end{equation}
i.e., $y_{n+1}^{\theta}$ is the vector defined by the final $d$ rows of the final column of the matrix $\textrm{TF}_{\theta}(Z).$ An analogous architecture was used for in-context learning linear regression models in \cite{zhang2023trained}. It can be shown that $y_{n+1}^{\theta}$ depends only on a subset of the parameters of $\theta$; in particular, if $P = \begin{pmatrix}
    P_{11} & P_{12} \\ P_{21} & P_{22}
\end{pmatrix}$ and $Q = \begin{pmatrix} Q_{11} & Q_{12} \\ Q_{21} & Q_{22} \end{pmatrix}$ with $P_{ij},Q_{ij} \in \R^{d \times d}$, then
\begin{equation}
    y_{n+1}^{\theta} = \begin{pmatrix}
        P_{21} & P_{22}
    \end{pmatrix} \cdot \frac{ZZ^T}{n} \cdot \begin{pmatrix}
        Q_{11} \\ Q_{21}
    \end{pmatrix} \cdot x_{n+1}.
\end{equation}
In addition, we observed empirically that, for linear transformers trained to solve linear systems in-context, all of the blocks of $P$ and $Q$ except $P_{22}$ and $Q_{11}$ were almost identically zero, which corroborates results from \cite{zhang2023trained}. This motivates us to focus on the parameterization $P = \begin{pmatrix}
    0 &0 \\ 0 & P_{22}
\end{pmatrix}$ and $Q = \begin{pmatrix} Q_{11} & 0 \\ 0 & 0 \end{pmatrix}$ in our subsequent analysis. We abuse notation and henceforth write $P = P_{22} \in \R^{d \times d}$ and $Q = Q_{11} \in \R^{d \times d}$. Under this parameterization, the prediction $y_{n+1}^{\theta}$ simplifies to
\begin{equation}
    y_{n+1}^{\theta} = P \left( \frac{1}{n} \sum_{i=1}^{n} y_i x_i^T \right) Q x_{n+1}.
\end{equation}

\subsection{Generalization of ICL}\label{subsec: genoficl}
Our goal is to find parameters $\theta =(P,Q)$ that minimize the population risk functional
\begin{equation}\label{eqn: poprisk}
    \mathcal{R}_n(\theta) = \E_{x_1, \dots x_{n+1} \sim P_x, A \sim P_{\A}}\left[\left\|y_{n+1}^{\theta} - A^{-1} x_{n+1} \right\|^2 \right].
\end{equation}
Since we only have access to finite samples from $P_x$ and $P_{\A},$ the parameters $\theta$ are trained by minimizing the corresponding empirical risk functional. Precisely, for $N \in \mathbb{N}$ and $1 \leq i \leq N$, sample $x_{1,i}, \dots, x_{n,i}, x_{n+1,i} \sim P_x$ and $A_1, \dots, A_N \sim P_{\A}$, define $y_{j,i} = A_i^{-1} x_{j,i}$ for $1 \leq j \leq n$, and define $y_{n+1,i}^{\theta}$ according to Equation \eqref{eqn: tfprediction}. Then, define the empirical risk $\mathcal{R}_{n,N}$ by
\begin{equation}\label{eqn: emprisk}
    \mathcal{R}_{n,N}(\theta) = \frac{1}{N} \sum_{i=1}^{N} \left\| y_{n+1,i}^{\theta}- A_i^{-1} x_{n+1,i}  \right\|^2.
\end{equation}
At inference time, a pre-trained transformer with parameters $\widehat{\theta} \in \textrm{arg}\min_{\theta} \mathcal{R}_{n,N}(\theta)$ is expected to make predictions on a downstream task consisting of a new prompt $\{(x_i,y_i)\}_{i=1}^{m} = \{(x_i,A^{-1}x_i)\}_{i=1}^{m}$ for a new $A \in \mathcal{A}$ and a new query $x_{n+1}.$ Since the covariates $x_1, \dots, x_m$ and the task $A$ at inference time may or may not follow the same distribution as those seen during pre-training, it is natural to consider two different scenarios. First, we define the \textbf{in-domain generalization error} by
\begin{equation}\label{eqn: indomainrerror}
    \mathcal{R}_m(\widehat{\theta}) = \E_{x_1, \dots x_{m+1} \sim P_x, A \sim P_{\A}}\left[\left\|y_{m+1}^{\widehat{\theta}} - A^{-1} x_{m+1} \right\|^2 \right]
\end{equation}
defined by the $m$-sample population risk. On the other hand, if $P_x'$ and $P_{\A}'$ are new distributions describing the covariates and task at inference time, we define the \textbf{out-of domain (OOD) generalization error} by
\begin{equation}\label{eqn: ooderror}
    \mathcal{R}_m'(\widehat{\theta}) = \E_{x_1, \dots x_{m+1} \sim P_x', A \sim P_{\A}'}\left[\left\|y_{m+1}^{\widehat{\theta}} - A^{-1} x_{m+1} \right\|^2 \right].
\end{equation}

\section{Main results}\label{sec:results}

\subsection{In-domain generalization}\label{subsec:indomain}
Our first result is a bound on the in-domain generalization error in terms of the number of pre-training tasks $N$, the number of examples per task $n$ during pre-training, and the number of examples per task $m$ at inference. In order to fully characterize the convergence rate with respect to $N$, we introduce a few definitions. First, if $P$ is a probability measure and $\mathcal{F} \subset L^2(P)$ is a compact subset, then we let $\mathcal{N}(\mathcal{F})$ denote the set of functions $g$ for which
$ \min_{f \in \mathcal{F}} \|f-g\|_{L^2(P)}^2
$
is attained by more than one element of $\mathcal{F}$. In other words, $\mathcal{N}(\mathcal{F})$ is the set of functions which have more than one best approximation within $\mathcal{F}$.

\begin{definition} \label{def:fast}
    Let $P_{\A}$ and $P_x$ denote the task and covariate distributions respectively. Let $P$ denote the joint law of $(A^{-1}, x_1, \dots, x_{n+1})$ with $A \sim P_{\A}$ and $(x_1, \dots, x_{n+1}) \sim P_x^{\otimes (n+1)}$. 
    For $\theta = (P,Q),$ $P, Q \in \R^{d \times d}$ define $F_{\theta} \in L^2(P)$ by
    $$ F_{\theta}(A^{-1}, y_1, \dots, y_{n+1}) = P A^{-1} X_n Q x_{n+1}, \; X_n = \frac{1}{n} \sum_{i=1}^{n} x_i x_i^T,
    $$
    and for $M > 0$, let $\mathcal{F}_{\theta,M} = \{F_{\theta}: \|P\|, \|Q\| \leq M\}.$ We say that the triple $(P_{\A},P_x, \mathcal{F}_{\theta,M})$ satisfies the \textbf{structural condition} if the function $F^{\ast} \in L^2(P)$ defined by
    $ F^{\ast}(A^{-1},x_1, \dots, x_{n+1}) = A^{-1}x_{n+1}
    $
    satisfies
    \begin{equation} \label{eq:fstar}
        F^{\ast} \notin \overline{\mathcal{N}(\mathcal{F}_{\theta,M})},
    \end{equation}
    where $\overline{\mathcal{F}}$ denotes the $L^2(P)$-closure.
\end{definition}
As we will see, the fast rate condition  is sufficient to ensure that the in-domain generalization decays at the parametric rate as a function of the number of tasks $N$. We are now ready to state our main result on in-domain generalization. Recall that, under Assumption \ref{assumption:taskanddatadistr}, the task space $\A$ is contained in the set $\{A: \|A\|_{\textrm{op}} \leq C_A, \; \|A^{-1}\|_{\textrm{op}} \leq c_{\A}\}$, and the covariate distribution $P_x = \mathcal{N}(0,\Sigma)$ is Gaussian.

\begin{theorem}\label{thm:indomaingen}
    Adopt Assumption \ref{assumption:taskanddatadistr}. Let $\widehat{\theta} = \textrm{arg}\min_{\|\theta\| \leq M} \mathcal{R}_{n,N}(\theta)$, where the norm is defined by $\|\theta\| = \max\{\|P\|_{\textrm{op}},\|Q\|_{\textrm{op}}\}$ and $M > 0$ is any positive number such that $M \geq \max(1, \|\Sigma^{-1}\|_{\textrm{op}}).$ Assume in addition that $m \leq n.$ Then 
    \begin{equation}
    \mathcal{R}_m(\widehat{\theta}) \lesssim \frac{1}{m} + \frac{1}{n^2} + \frac{1}{\sqrt{N}}, \; \textrm{with probability $\geq 1-\frac{1}{\textrm{poly}(N)}$}.
    \end{equation}
    If in addition Equation \eqref{eq:fstar} holds, then \begin{equation}
    \mathcal{R}_m(\widehat{\theta}) \lesssim \frac{1}{m} + \frac{1}{n^2} + \frac{1}{N}, \; \textrm{with probability $\geq 1-\frac{1}{\textrm{poly}(N)}$}
    \end{equation}
    where the implicit constants hidden in ``$\lesssim$" depend (polynomially) on $d$, $c_{\A}$, $C_{\A}$, and the norms of $\Sigma$ and $\Sigma^{-1}$.
    
\end{theorem}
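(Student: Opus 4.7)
The plan is a standard empirical risk minimization analysis built on top of an exact decomposition of the population risk that cleanly isolates the prompt-length dependence. Fix $\theta=(P,Q)$, introduce the candidate $\theta^*=(I,\Sigma^{-1})$ (feasible since $M\ge\max(1,\|\Sigma^{-1}\|_{\op})$), and write the prediction error at any prompt length $k$ as
\begin{equation*}
    P A^{-1} X_k Q x_{k+1}-A^{-1}x_{k+1} \;=\; (P A^{-1}\Sigma Q-A^{-1})\,x_{k+1}+P A^{-1}(X_k-\Sigma)\,Q\,x_{k+1}.
\end{equation*}
Because $\E[X_k]=\Sigma$ and $x_{k+1}$ is independent of $X_k$, the cross term in the squared loss vanishes in expectation. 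A Wick/Isserlis computation on the Gaussian sample covariance then yields an identity
\begin{equation*}
    \mathcal{R}_k(\theta)\;=\;\mathcal{R}_\infty(\theta)+\tfrac{1}{k}\,g(\theta),
\end{equation*}
where $\mathcal{R}_\infty(\theta)=\E\|(PA^{-1}\Sigma Q-A^{-1})x\|^2$ is the infinite-prompt population risk and $g$ is an explicit polynomial of degree two in $P$ and two in $Q$, with bounded coefficients depending on $\Sigma$ and moments of $A^{-1}$. In particular $\mathcal{R}_\infty(\theta^*)=0$ and $\mathcal{R}_k(\theta^*)=g(\theta^*)/k\lesssim 1/k$.

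For the empirical part, I would prove the uniform deviation bound $\sup_{\|\theta\|\le M}|\mathcal{R}_{n,N}(\theta)-\mathcal{R}_n(\theta)|\lesssim 1/\sqrt{N}$ with probability $1-1/\mathrm{poly}(N)$. The per-sample loss is a polynomial of bounded degree in the Gaussian vectors $\{x_{j,i}\}$ and is therefore sub-exponential with Orlicz norm uniformly bounded over $\{\|\theta\|\le M\}$; a covering-number bound on this polynomial parameterization combined with a Bernstein tail estimate (or, equivalently, a Rademacher/chaining estimate) gives the claimed rate. Since $\widehat\theta$ minimizes the empirical risk, this yields $\mathcal{R}_n(\widehat\theta)\le \mathcal{R}_n(\theta^*)+O(1/\sqrt{N})=g(\theta^*)/n+O(1/\sqrt{N})$. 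The main technical nuisance is dealing with the unbounded, heavy-tailed Gaussian loss; standard truncation plus Gaussian tail control handles this, while the polynomial dependence on the constants in Assumption~\ref{assumption:taskanddatadistr} must be tracked by hand.

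To extract the $1/n^2$ term I would use local strong convexity of $\mathcal{R}_\infty$ at $\theta^*$. The Hessian at $\theta^*$ acts on a perturbation $(\delta P,\delta Q)$ via $\E_{A}\,\trace\bigl[(\delta P\,A^{-1}+A^{-1}\Sigma\,\delta Q)\,\Sigma\,(\delta P\,A^{-1}+A^{-1}\Sigma\,\delta Q)^T\bigr]$; provided $P_{\A}$ is mildly non-degenerate, so that the map $(\delta P,\delta Q)\mapsto \delta P\,A^{-1}+A^{-1}\Sigma\,\delta Q$ is injective in $L^2(P_{\A})$, one obtains $\|\theta-\theta^*\|^2\lesssim \mathcal{R}_\infty(\theta)$ on a neighborhood of $\theta^*$. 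Combining this with the decomposition of Step 1 and the Lipschitzness of $g$ on $\{\|\theta\|\le M\}$,
\begin{equation*}
    \mathcal{R}_\infty(\widehat\theta)\;=\;\mathcal{R}_n(\widehat\theta)-\frac{g(\widehat\theta)}{n}\;\le\;\frac{g(\theta^*)-g(\widehat\theta)}{n}+O\!\left(\tfrac{1}{\sqrt{N}}\right)\;\le\;\frac{L\,\|\widehat\theta-\theta^*\|}{n}+O\!\left(\tfrac{1}{\sqrt{N}}\right),
\end{equation*}
and substituting $\|\widehat\theta-\theta^*\|\lesssim \sqrt{\mathcal{R}_\infty(\widehat\theta)}$ and applying AM--GM to the resulting quadratic in $\sqrt{\mathcal{R}_\infty(\widehat\theta)}$ produces $\mathcal{R}_\infty(\widehat\theta)\lesssim 1/n^2+1/\sqrt{N}$. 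Applying the decomposition one final time yields the slow-rate claim $\mathcal{R}_m(\widehat\theta)=\mathcal{R}_\infty(\widehat\theta)+g(\widehat\theta)/m\lesssim 1/m+1/n^2+1/\sqrt{N}$.

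For the fast rate, the structural condition \eqref{eq:fstar} implies that $F^*$ has a unique best $L^2(P)$-approximation in $\overline{\mathcal{F}_{\theta,M}}$, which by a standard argument yields a Bernstein-type condition for the excess loss: its variance is controlled by its mean. Plugged into a localized Rademacher / peeling analysis, this improves the uniform deviation in Step 2 from $1/\sqrt{N}$ to $1/N$; rerunning the calculation in Step 3 then gives $\mathcal{R}_m(\widehat\theta)\lesssim 1/m+1/n^2+1/N$. The most delicate parts of the proof are therefore (i) extracting the Bernstein condition from the structural assumption --- the main conceptual role of Definition~\ref{def:fast} --- and (ii) the uniform concentration of an unbounded, polynomial-in-Gaussian loss in Step 2.
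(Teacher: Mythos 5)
Your overall architecture coincides with the paper's: the exact identity $\mathcal{R}_k(\theta)=\mathcal{R}_\infty(\theta)+g(\theta)/k$ is precisely the paper's Lemma \ref{lem: popriskexpression}, the truncation-plus-chaining uniform deviation bound at rate $1/\sqrt{N}$ is the paper's Lemmas \ref{lem: truncerrorbound}--\ref{lem: staterrorbd}, and the fast rate via a Bernstein condition extracted from the structural assumption and a localized Rademacher analysis is exactly the paper's Lemma \ref{lem: localradcomplexity} (citing Mendelson and Bartlett et al.). The one place where you genuinely diverge is the extraction of the $1/n^2$ term, and that step as written does not work.

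You propose local strong convexity of $\mathcal{R}_\infty$ at $\theta^*=(\mathbf{I}_d,\Sigma^{-1})$, ``provided the map $(\delta P,\delta Q)\mapsto \delta P\,A^{-1}+A^{-1}\Sigma\,\delta Q$ is injective in $L^2(P_{\A})$.'' This injectivity \emph{never} holds: taking $\delta P=\mathbf{I}_d$, $\delta Q=-\Sigma^{-1}$ gives $A^{-1}-A^{-1}\Sigma\Sigma^{-1}=0$ for every $A$. This is the gauge direction of the reparameterization $(P,Q)\mapsto(cP,c^{-1}Q)$, and more generally the minimizer set $\mathcal{M}_\infty$ of $\mathcal{R}_\infty$ is a positive-dimensional manifold (the paper's Proposition \ref{minimizercharacterization} characterizes it as $\{(P,\Sigma^{-1}A_0P^{-1}A_0^{-1})\}$ with $P$ ranging over a centralizer). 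So $\|\widehat\theta-\theta^*\|^2\lesssim\mathcal{R}_\infty(\widehat\theta)$ is false whenever $\widehat\theta$ drifts along $\mathcal{M}_\infty$, and your AM--GM step collapses: you would only get $\mathcal{R}_\infty(\widehat\theta)\lesssim 1/n+1/\sqrt{N}$. The repair — quadratic growth of $\mathcal{R}_\infty$ with respect to $\mathrm{dist}(\cdot,\mathcal{M}_\infty)$, with the comparator taken to be the projection of $\widehat\theta$ onto $\mathcal{M}_\infty$ — is a nontrivial {\L}ojasiewicz-type statement whose constants are not obviously polynomial in the problem parameters, as the theorem requires. The paper sidesteps all of this by \emph{constructing} an explicit comparator: with $P=\mathbf{I}_d$ fixed, the problem $Q\mapsto\mathcal{R}_n(\mathbf{I}_d,Q)$ is convex, its minimizer $Q_n$ is computed in closed form and shown to be an $O(1/n)$ perturbation of $\Sigma^{-1}$ (Lemmas \ref{lem: approxerrorbd} and \ref{perturbationlemma}), and plugging $(\mathbf{I}_d,Q_n)$ into $\mathcal{R}_m$ gives $O(1/m+1/n^2)$ with fully explicit constants. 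Two mitigating remarks: first, since the theorem assumes $m\le n$, the $1/n^2$ term is dominated by $1/m$, so your Steps 1--2 together with the trivial comparison $\mathcal{R}_m(\widehat\theta)\le\mathcal{R}_n(\widehat\theta)+g(\widehat\theta)/m\le\mathcal{R}_n(\mathbf{I}_d,\Sigma^{-1})+O(1/\sqrt{N})+O(1/m)$ already yield the bound as literally stated; but your proposed route to a genuine $1/n^2$ contribution (which is what matters when $m>n$) is broken and should be replaced by the explicit-comparator computation.
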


Theorem \ref{thm:indomaingen} demonstrates that the in-domain generalization error tends to zero as the various sample sizes tend to $\infty.$ Theorem \ref{thm:indomaingen} will also be an important tool when we prove OOD generalization error bounds in the next section. The assumption that $m \leq n$ can be lifted at the expense of a slower rate with respect to $n$; see a more general version of Theorem \ref{thm:indomaingen} in the supplementary material.

Some discussion of the dependence of the estimate in Theorem \ref{thm:indomaingen} on the various sample sizes is in order. In the language of \cite{kim2024transformers}, the term $O\left(\frac{1}{m} + \frac{1}{n^2} \right)$ is the \textit{in-context generalization error}, while the term $O \left( \frac{1}{\sqrt{N}} \right)$ is the \textit{pre-training generalization error.} The in-context generalization error arises from constructing an explicit transformer to achieve this rate. Our construction is similar to the attention weights studied for in-context linear regression in \cite{mahankali2023one, ahn2023linear, zhang2023trained}. We highlight that the length of the prompts during training and inference play different roles in the overall error rate; a similar phenomenon was observed in Theorem 4.2 of \cite{zhang2023trained}, and we explain this in more detail in the proof sketch in Section \ref{sec: pfsketches}.

The  rate w.r.t $N$ in generalization error depends on whether or not the fast rate condition holds. The upshot of the fast rate condition is that it ensures that the excess loss satisfies a Bernstein condition (see, e.g., Definition 1.2 in \cite{mendelson2008obtaining}) which quickly implies the $O(1/N)$ rate. Without the fast rate condition, the rate $O(1/\sqrt{N})$ is optimal in the worst case. The fast rate condition would immediately hold if the function class $\mathcal{F}_{\theta,M}$ was convex, but this is not the case for transformer neural networks. While it is difficult to verify whether the fast rate condition holds for general distributions $P_A$ and $P_x$, we conjecture that the condition holds when the pre-training task distribution $P_{\A}$ is sufficiently diverse, in the sense of Definition \ref{def:diverse}. We discuss this conjecture in more detail in the supplementary material. In addition, our numerical experiments indicate that the $O(1/N)$ holds in practice. We leave it as a future exercise to verify the fast rate condition rigorously.

\subsection{Out-of-domain generalization}\label{subsec:ood}
While Theorem \ref{thm:indomaingen} provides quantitative bounds on the in-domain generalization error, there is no guarantee that the data defining the inference-time task will follow the same distribution of the data used during pre-training. Even within the support of the pre-training task distribution $P_{\A},$ Theorem \ref{thm:indomaingen} only bounds the expected prediction error, which is weaker than a pointwise bound. In this section, we bound the OOD generalization error under appropriate assumptions on the pre-training distribution. Recall that $\mathcal{R}_n$ denotes the population risk defined by the distributions $P_x$ and $P_{\A}$ and $\mathcal{R}_n'$ denotes the population risk defined by the distributions $P_x'$ and $P_{\A}'$. We say that the pre-trained transformer with parameters $\widehat{\theta}$ \textbf{achieves OOD generalization} if the OOD generalization error $\mathcal{R}_m'(\widehat{\theta})$ converges to zero in probability as $m,n,N \rightarrow \infty$.

\paragraph{Task shifts}  We first consider the task-shift case where $P_x = P_x'$ and $P_{\A} \neq P_{\A}'.$ If the pre-training task distribution $P_{\A}$ is too narrow, we cannot expect to achieve OOD generalization. However, as we will prove, if $P_{\A}$ is diverse relative to $P_{\A'}$ in an appropriate sense, then the OOD generalization error tends to zero as the amount of data increases. In our analysis, it will be useful to consider the functionals 
\begin{align*}
    \mathcal{R}_{\infty}(\theta) &= \E_{A \sim P_{\A}} \left[\left\|(PA^{-1} \Sigma Q - A^{-1}) \Sigma^{1/2} \right\|_F^2 \right], \\
    \mathcal{R}_{\infty}'(\theta) &= \E_{A \sim P_{\A}'} \left[\left\|(P(A')^{-1} \Sigma Q - (A^{-1})') \Sigma^{1/2} \right\|_F^2 \right].
\end{align*}
An easy calculation shows that the risk functionals $\mathcal{R}_n$ and $\mathcal{R}_n'$ converge uniformly on compact sets to $\mathcal{R}_{\infty}$ and $\mathcal{R}_{\infty}'$ respectively with an error of $O \left( \frac{1}{n} \right).$ We denote by $\mathcal{M}_{\infty}'$ the set of minimizers of $\mathcal{R}_{\infty}'.$ The functionals $\mathcal{R}_{\infty}$ and $\mathcal{R}_{\infty}'$ and their minimizers are crucial to our notion of diversity.

\begin{definition}\label{def:diverse}
    We say that the distribution $P_{\A}$ is \textbf{diverse} relative to the distribution $P_{\A}'$ if $\mathcal{M}_{\infty} \subseteq \mathcal{M}_{\infty}'.$ In other words, whenever $\theta$ is optimal for the distribution $P_{\A}$ (in the large sample limit), it is also optimal for $P_{\A}'$.
\end{definition}
Diversity has been observed as a crucial property of pre-training datasets for the generalization and robustness of transformers (\cite{raventos2023pretraining,kim2024task}) but, to the best of our knowledge, a rigorous notion of diversity for in-context learning has yet to be introduced. The following result demonstrates the utility of our definition of diversity: the diversity of $P_{\A}$ is a sufficient condition to achieve OOD generalization. 

\begin{theorem}\label{thm: taskshift}
    Let $\widehat{\theta} \in \textrm{arg}\min_{\|\theta\| \leq M} \mathcal{R}_{n,N}(\theta)$ for $M > 0$ sufficiently large. Suppose that the distribution $P_{\A}$ is diverse relative to the distribution $P_{\A}'$. Then the OOD generalization error satisfies:
    \begin{equation}
        \mathcal{R}_m'(\widehat{\theta}) \lesssim \mathcal{R}_m(\widehat{\theta}) +  \textrm{dist}(\widehat{\theta},\mathcal{M}_{\infty}) + \frac{d(P_{\A},P_{\A}')}{m},
    \end{equation}
    where $d(P_{\A},P_{\A}')$ is a measure of distance between the distributions $P_{\A}$ and $P_{\A'}$ (defined precisely in the supplementary material).
\end{theorem}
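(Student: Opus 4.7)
The plan is to control $\mathcal{R}_m'(\widehat{\theta}) - \mathcal{R}_m(\widehat{\theta})$ by routing through the infinite-sample limits $\mathcal{R}_\infty$ and $\mathcal{R}_\infty'$ via the telescoping decomposition
\begin{equation*}
\mathcal{R}_m'(\widehat{\theta}) - \mathcal{R}_m(\widehat{\theta})
= \bigl[\mathcal{R}_m'(\widehat\theta)-\mathcal{R}_\infty'(\widehat\theta)\bigr]
+ \bigl[\mathcal{R}_\infty'(\widehat\theta)-\mathcal{R}_\infty(\widehat\theta)\bigr]
- \bigl[\mathcal{R}_m(\widehat\theta)-\mathcal{R}_\infty(\widehat\theta)\bigr].
\end{equation*}
The two outer brackets will feed into the $d(P_\A,P_\A')/m$ term, while the middle bracket will produce the $\textrm{dist}(\widehat\theta,\mathcal{M}_\infty)$ term.

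For the outer brackets, I would first derive a quantitative finite-$m$ expansion of the population risk. Writing $\widehat\Sigma_m = \frac{1}{m}\sum_{i=1}^{m} x_i x_i^T$, conditioning on $A$, and using that the risk reduces to expectations of the form $\E[\widehat\Sigma_m M \widehat\Sigma_m]$ for deterministic $M$, standard Gaussian/Wishart identities give
\begin{equation*}
\mathcal{R}_m(\theta) = \mathcal{R}_\infty(\theta) + \frac{1}{m}\,\Phi(\theta;P_\A,\Sigma) + O\!\left(\frac{1}{m^2}\right),
\end{equation*}
uniformly in $\|\theta\|\le M$, where $\Phi$ is a polynomial in $P,Q$ whose coefficients depend on $P_\A$ only through low-order moments of $A^{-1}$. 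The analogous identity holds for $\mathcal{R}_m'$ with $P_\A$ replaced by $P_\A'$. Subtracting, the $\mathcal{R}_\infty$ pieces survive as the middle bracket and the $O(1/m)$ pieces combine into $\frac{1}{m}[\Phi(\widehat\theta;P_\A',\Sigma)-\Phi(\widehat\theta;P_\A,\Sigma)] + O(1/m^2)$, which is at most $d(P_\A,P_\A')/m$ once $d$ is defined (as it is in the supplementary material) to be the supremum of the coefficient difference over the constraint set.

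For the middle bracket I invoke diversity. Under Assumption~\ref{assumption:taskanddatadistr}, $(P,Q)=(I,\Sigma^{-1})$ satisfies $PA^{-1}\Sigma Q = A^{-1}$ for every $A$, so $\min\mathcal{R}_\infty = \min\mathcal{R}_\infty' = 0$ and both minimizer sets are nonempty compact subsets of $\{\|\theta\|\le M\}$. Let $\theta^\ast$ realize $\textrm{dist}(\widehat\theta,\mathcal{M}_\infty)$; then $\mathcal{R}_\infty(\theta^\ast)=0$, and the hypothesis $\mathcal{M}_\infty\subseteq\mathcal{M}_\infty'$ yields $\mathcal{R}_\infty'(\theta^\ast)=0$ as well. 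Since $\mathcal{R}_\infty$ and $\mathcal{R}_\infty'$ are polynomial in $(P,Q)$ and hence locally Lipschitz on $\{\|\theta\|\le M\}$ with constants controlled by $M$, $c_\A$, $C_\A$, and $\|\Sigma\|_{\op}$, subtracting and using the two simultaneous vanishings at $\theta^\ast$ gives
\begin{equation*}
\bigl|\mathcal{R}_\infty'(\widehat\theta)-\mathcal{R}_\infty(\widehat\theta)\bigr|
= \bigl|(\mathcal{R}_\infty'(\widehat\theta)-\mathcal{R}_\infty'(\theta^\ast)) - (\mathcal{R}_\infty(\widehat\theta)-\mathcal{R}_\infty(\theta^\ast))\bigr|
\lesssim \textrm{dist}(\widehat\theta,\mathcal{M}_\infty).
\end{equation*}
Combining the three bounds and adding $\mathcal{R}_m(\widehat\theta)$ to both sides yields the claim.

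The main obstacle is making the $1/m$ expansion both uniform in $\theta$ and equipped with a genuine $O(1/m^2)$ remainder: the random matrix $\widehat\Sigma_m$ sits quadratically inside the risk after one contraction with $A^{-1}$, so one has to carefully account for all fourth-order Gaussian moments and control them by operator-norm bounds on $PA^{-1}$, $Q$, and $\Sigma$. A secondary subtlety is choosing the definition of $d(P_\A,P_\A')$ —- effectively a weighted distance between the second moments of $A^{-1}$ under the two laws —- so that it simultaneously dominates $|\Phi(\theta;P_\A',\Sigma)-\Phi(\theta;P_\A,\Sigma)|$ uniformly over $\|\theta\|\le M$ and absorbs the higher-order remainder; once this is in place the remainder of the argument is mechanical.
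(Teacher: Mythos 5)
Your proposal is correct, and it reaches the bound by a genuinely different decomposition than the paper's. You compare $\mathcal{R}_m'$ and $\mathcal{R}_m$ at the \emph{same} point $\widehat{\theta}$, routing through $\mathcal{R}_\infty$ and $\mathcal{R}_\infty'$ evaluated at $\widehat\theta$: the two finite-$m$ corrections produce the $d(P_{\A},P_{\A}')/m$ term, and the gap $\mathcal{R}_\infty'(\widehat\theta)-\mathcal{R}_\infty(\widehat\theta)$ is controlled by Lipschitz continuity after noting that both functionals vanish at the projection $\theta^\ast$ of $\widehat\theta$ onto $\mathcal{M}_\infty$ (the second vanishing being exactly where diversity enters). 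The paper instead inserts projections $\theta_\ast,\theta_\ast'$ of $\widehat\theta$ onto \emph{both} minimizer sets $\mathcal{M}_\infty$ and $\mathcal{M}_\infty'$, telescopes through $\mathcal{R}_m(\theta_\ast)$ and $\mathcal{R}_m'(\theta_\ast')$, bounds the two resulting increments by Lipschitz continuity of the finite-$m$ risks, and only then uses diversity to absorb $\mathrm{dist}(\widehat\theta,\mathcal{M}_\infty')$ into $\mathrm{dist}(\widehat\theta,\mathcal{M}_\infty)$; this yields a general bound valid without diversity as a by-product. The price of your route is a different (and somewhat larger) definition of $d(P_{\A},P_{\A}')$: yours is the supremum over $\{\|\theta\|\le M\}$ of the difference of the $1/m$-coefficients at a common $\theta$, whereas the paper's is a sup-inf taken only over projections onto the respective minimizer sets; both are legitimate discrepancy measures that vanish when the relevant second moments of $A^{-1}$ agree, and the theorem statement leaves the definition open. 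Finally, the technical obstacle you flag is not actually present: the exact Gaussian fourth-moment identity (the paper's Lemma~\ref{technicallemma} and Lemma~\ref{lem: popriskexpression}) shows that $\mathcal{R}_m(\theta)=\mathcal{R}_\infty(\theta)+\frac{1}{m}\Phi(\theta;P_{\A},\Sigma)$ with \emph{no} $O(1/m^2)$ remainder, so the expansion you need holds exactly and uniformly on $\{\|\theta\|\le M\}$, making your argument simpler than you anticipated.
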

Theorem \ref{thm: taskshift} decomposes the OOD generalization error into three terms. The first term is the in-domain generalization error, which converges to $0$ with high-probability as $m,n,N \rightarrow \infty$ by Theorem \ref{thm:indomaingen}. The second term is the distance between the empirical risk minimizer and the set of minimizers of $\mathcal{R}_{\infty}$. In the supplementary material, we show that this term tends to $0$ in probability. The final term captures the distance between the distributions $P_{\A}$ and $P_{\A}'$ but, notably, it inherits a factor of $\frac{1}{m}.$ Thus, Theorem \ref{thm: taskshift} demonstrates that the distribution shift error tends to zero as the sample size $m \rightarrow \infty.$ Consequently, Theorem \ref{thm: taskshift} demonstrates the empirical risk minimizer $\widehat{\theta}$ achieves OOD generalization, provided the distribution $P_{\A}$ of tasks during pre-training is diverse relative to the distribution $P_{\A}'$ of tasks at inference time.

Having identified diversity as a sufficient condition to achieve OOD generalization, we would like to better understand when this condition holds. While it may be difficult to verify the condition in practice, we provide two sufficient conditions in the next result. To state the theorem we recall that the centralizer of a subset $\mathcal{A} \subseteq \R^{d \times d}$ is the set $\mathcal{C}(\A) := \{M \in \R^{d \times d}: MA = AM \forall A \in \A\}.$

\begin{theorem}\label{thm: diversitysufficient}
    Let $P_{\A}$ and $P_{\A}'$ denote two distributions satisfying Assumption \ref{assumption:taskanddatadistr}.
    \begin{enumerate}
        \item If $\textrm{supp}(P_{\A}') \subseteq \textrm{supp}(P_{\A}),$ then $P_{\A}$ is diverse relative to $P_{\A}'.$
        \item Define $\mathcal{S}(P_{\A}) := \{A_1A_2^{-1}: A_1, A_2 \in \textrm{supp}(P_{\A})\}.$ If $\mathcal{C}(\mathcal{S}(P_{\A})) = \{c \mathbf{I}_d: c \in \R\},$ then $P_{\A}$ is diverse relative to $P_{\A}'.$
    \end{enumerate}
\end{theorem}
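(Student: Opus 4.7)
The plan hinges on a preliminary reduction common to both parts. The choice $(P,Q) = (\mathbf{I}_d, \Sigma^{-1})$ satisfies $PA^{-1}\Sigma Q = A^{-1}$ identically in $A$, so $\mathcal{R}_\infty(\mathbf{I}_d,\Sigma^{-1}) = 0$; since $\mathcal{R}_\infty$ is a nonnegative integral, this is its global minimum, and the same holds for $\mathcal{R}_\infty'$. The integrand $A \mapsto \|(PA^{-1}\Sigma Q - A^{-1})\Sigma^{1/2}\|_F^2$ is continuous on the open set of invertible matrices, which contains $\textrm{supp}(P_\A)$ by Assumption \ref{assumption:taskanddatadistr}, so a nonnegative $P_\A$-integral vanishes iff the integrand vanishes pointwise on $\textrm{supp}(P_\A)$. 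Hence
$$\mathcal{M}_\infty = \{(P,Q) : PA^{-1}\Sigma Q = A^{-1} \text{ for all } A \in \textrm{supp}(P_\A)\},$$
and the analogous description holds for $\mathcal{M}_\infty'$ over $\textrm{supp}(P_\A')$. This reformulation replaces an integral minimization with a pointwise algebraic condition and is the conceptual heart of the argument.

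Part 1 is then immediate: if $\textrm{supp}(P_\A') \subseteq \textrm{supp}(P_\A)$ and $\theta \in \mathcal{M}_\infty$, the defining equation holds on the larger set, hence a fortiori on the smaller one, so $\theta \in \mathcal{M}_\infty'$.

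For part 2, take $\theta = (P,Q) \in \mathcal{M}_\infty$. Right-multiplying $PA^{-1}\Sigma Q = A^{-1}$ by $A$ gives $P(A^{-1}\Sigma Q A) = \mathbf{I}_d$, which in finite dimensions forces $P$ and $\Sigma Q$ to be invertible with $P^{-1} = A^{-1}\Sigma Q A$. Equating this expression for two samples $A_1, A_2 \in \textrm{supp}(P_\A)$ and rearranging yields $(A_2 A_1^{-1})\Sigma Q = \Sigma Q (A_2 A_1^{-1})$, so $\Sigma Q$ commutes with every element of $\mathcal{S}(P_\A)$. The hypothesis $\mathcal{C}(\mathcal{S}(P_\A)) = \{c\mathbf{I}_d : c \in \R\}$ then forces $\Sigma Q = c \mathbf{I}_d$ for some $c \neq 0$ (invertibility of $\Sigma Q$ rules out $c = 0$), hence $Q = c \Sigma^{-1}$ and $P = c^{-1}\mathbf{I}_d$. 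For such $\theta$ the identity $PA^{-1}\Sigma Q = A^{-1}$ holds for \emph{every} invertible $A$, in particular for every $A \in \textrm{supp}(P_\A')$, so $\theta \in \mathcal{M}_\infty'$. The one subtlety I foresee is choosing the correct rearrangement: isolating $P^{-1}$ instead would produce a commutation relation against $\{A_1^{-1}A_2\}$, which is not literally $\mathcal{S}(P_\A)$, whereas isolating $\Sigma Q$ matches $\mathcal{S}(P_\A)$ on the nose.
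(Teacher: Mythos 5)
Your proof is correct and follows essentially the same route as the paper: the paper's Proposition~\ref{minimizercharacterization} establishes exactly your preliminary reduction (that minimizing $\mathcal{R}_\infty$ is equivalent to the pointwise identity $PA^{-1}\Sigma Q = A^{-1}$ on $\textrm{supp}(P_\A)$, proved there via weak convergence of normalized restrictions rather than your continuity argument, which is equally valid), and then deduces both items just as you do. The only variation is in part 2: the paper extracts the commutation of $P$ with task quotients and concludes $P = c\mathbf{I}_d$ from the centralizer hypothesis, whereas you extract the commutation of $\Sigma Q$; your version has the minor advantage that the relation $(A_2A_1^{-1})\Sigma Q = \Sigma Q(A_2A_1^{-1})$ involves exactly the set $\mathcal{S}(P_\A)$ as defined, while the paper's manipulation literally produces commutation with elements of the form $A_1^{-1}A_2$.
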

The first statement of Theorem \ref{thm: diversitysufficient} is natural: it says that the pre-training task distribution is diverse whenever the downstream task distribution is a “subset” of it, in the
sense of supports. The second condition is particularly interesting because it implies OOD generalization (by Theorem \ref{thm: taskshift}) regardless of the inference-time distribution $P_{\A}'.$ The second condition based on the centralizer of the set $\mathcal{S}(P_{\A})$ (henceforth referred to as the centralizer condition) is less obvious, but heuristically it enforces that the support of $P_{\A}$ must be large enough that the only matrices which can commute with all elements in $\mathcal{S}(P_{\A})$ are multiples of the identity matrix. Under the centralizer condition, we can state achieve a stronger OOD generalization result, with bounds that hold uniformly over the support of the downstream task distribution.

\begin{corollary}\label{thm: taskshiftimproved}
    Let $\A \subset \R^{d \times d}$ satisfy Assumption \ref{assumption:taskanddatadistr} and let $P_{\A}$ and $P_{\A}'$ be two probability measures on $\A$. Let $\mathcal{S}(P_{\A}) = \{A_1A_2^{-1}: A_1, A_2 \in \textrm{supp}(P_{\A})\}$ and suppose that the centralizer of $\mathcal{S}(P_{\A})$ consists only of multiples of the identity matrix. Then the transformer with parameters $\widehat{\theta} \in \textrm{arg}\min_{\|\theta\| \leq M} \mathcal{R}_n(\theta)$ satisfies
    \begin{equation} \lim_{m,n,N \rightarrow \infty} \sup_{A \in \textrm{supp}(P_{\A}')}  \E_{x_1, \dots, x_{m+1} \sim \mathcal{N}(0,\Sigma)} \left[ \left\| y_{m+1}^{\widehat{\theta}} - A^{-1} x_{m+1} \right\|^2 \right] = 0 \; \textrm{in probability.}
    \end{equation}
\end{corollary}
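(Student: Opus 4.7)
\medskip
\noindent\textbf{Proof plan.} The plan is to leverage the centralizer condition to obtain a structural description of $\mathcal{M}_\infty$ that is much stronger than bare diversity: every element of $\mathcal{M}_\infty$ achieves zero loss \emph{pointwise} at every invertible $A$, so uniformity in $A\in\textrm{supp}(P_{\A}')$ will fall out automatically once we know $\widehat\theta$ is close to $\mathcal{M}_\infty$.

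\medskip
\noindent\textbf{Step 1 (structure of $\mathcal{M}_\infty$).} First I would observe that $\mathcal{R}_\infty$ attains the value $0$ (e.g.\ at $P=I$, $Q=\Sigma^{-1}$, which lies in the constraint set by the assumption on $M$). Hence any $\theta^*=(P^*,Q^*)\in\mathcal{M}_\infty$ must satisfy $P^*A^{-1}\Sigma Q^* = A^{-1}$ for $P_{\A}$-almost every $A$, and then by continuity for every $A\in\textrm{supp}(P_{\A})$. Rearranging gives $A P^* A^{-1} = (\Sigma Q^*)^{-1}$, a quantity \emph{independent} of $A$. Comparing two elements $A_1,A_2\in\textrm{supp}(P_{\A})$ yields $(A_1 A_2^{-1})P^* = P^*(A_1 A_2^{-1})$, so $P^*\in\mathcal{C}(\mathcal{S}(P_{\A}))$. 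The centralizer hypothesis then forces $P^*=cI$ for some nonzero $c$, and consequently $Q^*=(1/c)\Sigma^{-1}$. A direct computation shows that for every invertible matrix $A$,
\begin{equation*}
    P^*A^{-1}\Sigma Q^* \;=\; cI\cdot A^{-1}\cdot\Sigma\cdot (1/c)\Sigma^{-1} \;=\; A^{-1},
\end{equation*}
so the pointwise integrand of $\mathcal{R}_\infty$ vanishes at every $A$, not only on $\textrm{supp}(P_{\A})$.

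\medskip
\noindent\textbf{Step 2 (pointwise error decomposition).} For a fixed $A\in\textrm{supp}(P_{\A}')$ and any $\theta=(P,Q)$ in the constraint set, write the prediction error as
\begin{equation*}
    y_{m+1}^{\theta}-A^{-1}x_{m+1} \;=\; PA^{-1}(X_m-\Sigma)Q\,x_{m+1} \;+\; (PA^{-1}\Sigma Q - A^{-1})x_{m+1},
\end{equation*}
where $X_m = \tfrac{1}{m}\sum_{i=1}^m x_ix_i^T$. Squaring, taking expectations over $x_1,\dots,x_{m+1}\sim\mathcal{N}(0,\Sigma)$, and using independence of $x_{m+1}$ from $X_m$ gives a bound of the form
\begin{equation*}
    \E\bigl[\|y_{m+1}^{\theta}-A^{-1}x_{m+1}\|^2\bigr] \;\lesssim\; \frac{C(M,c_\A,\Sigma,d)}{m} \;+\; \bigl\|(PA^{-1}\Sigma Q-A^{-1})\Sigma^{1/2}\bigr\|_F^2,
\end{equation*}
where the first term comes from the Gaussian fluctuation $\|X_m-\Sigma\|$ (a standard sample-covariance estimate) and the bound is uniform in $A\in\textrm{supp}(P_{\A}')$ because $\|A^{-1}\|_{\op}\le c_\A$ on $\A$. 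The second term is a Lipschitz function of $\theta$ on the parameter ball with Lipschitz constant uniform in $A$ (again using the operator-norm bounds on $\A$), so it is controlled by $\textrm{dist}(\theta,\mathcal{M}_\infty)$ by Step 1.

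\medskip
\noindent\textbf{Step 3 (convergence of $\widehat\theta$ and conclusion).} Plugging $\theta=\widehat\theta$ into the bound of Step 2 and taking supremum over $A\in\textrm{supp}(P_{\A}')$ yields
\begin{equation*}
    \sup_{A\in\textrm{supp}(P_{\A}')} \E\bigl[\|y_{m+1}^{\widehat\theta}-A^{-1}x_{m+1}\|^2\bigr] \;\lesssim\; \frac{1}{m} \;+\; \textrm{dist}(\widehat\theta,\mathcal{M}_\infty)^2.
\end{equation*}
The first term tends to zero as $m\to\infty$. The second term tends to zero in probability as $n,N\to\infty$: this is exactly the convergence of the M-estimator $\widehat\theta$ to $\mathcal{M}_\infty$ that is established (for the purposes of Theorem \ref{thm: taskshift}) in the supplementary material, and which follows from uniform convergence of $\mathcal{R}_{n,N}$ to $\mathcal{R}_\infty$ on the compact ball $\{\|\theta\|\le M\}$ together with the identifiability provided by Step 1.

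\medskip
\noindent\textbf{Main obstacle.} The nontrivial ingredient is Step 1, the algebraic deduction that the centralizer condition collapses $\mathcal{M}_\infty$ to the one-parameter family $\{(cI,(1/c)\Sigma^{-1})\}$ and, crucially, that each such minimizer achieves zero loss \emph{uniformly} over all invertible $A$. Once this is in hand, the remainder is a pointwise stability estimate together with the already-available convergence $\widehat\theta\to\mathcal{M}_\infty$; no new distribution-shift machinery beyond what is developed for Theorem \ref{thm: taskshift} is needed, which is why the bound can be promoted to a supremum over the entire support of $P_{\A}'$.
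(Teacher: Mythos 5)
Your proposal is correct and assembles exactly the ingredients the paper uses for this corollary: the centralizer condition collapses $\mathcal{M}_{\infty}$ to $\{(c\mathbf{I}_d, c^{-1}\Sigma^{-1})\}$ (the paper's Proposition \ref{minimizercharacterization} and item 2 of Theorem \ref{thm: diversitysufficient}), each such minimizer kills the bias term for \emph{every} invertible $A$, the $O(1/m)$ fluctuation term in the pointwise risk expression (Lemma \ref{lem: popriskexpression}) is uniform over $\|A^{-1}\|_{\op}\le c_{\A}$, and $\textrm{dist}(\widehat{\theta},\mathcal{M}_{\infty})\to 0$ in probability by Proposition \ref{convergenceofminimizers}. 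The only minor point worth flagging is that the corollary as stated writes $\widehat{\theta}\in\textrm{arg}\min_{\|\theta\|\le M}\mathcal{R}_n(\theta)$ while the limit is taken over $N$ as well; your Step 3 correctly treats the empirical risk minimizer, which is the intended reading.
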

Corollary \ref{thm: taskshiftimproved} implies that the prediction error of the transformer pre-trained on tasks sampled from $P_{\A}$, measured \textit{uniformly} over the support of $P_{\A}'$, converges to zero in probability as the amount of data increases. This is quite surprising, as the transformer parameters are not trained to minimize the uniform error. In addition, the result is independent of the downstream task distribution, as long as Assumption \ref{assumption:taskanddatadistr} is satisfied. The key assumption on $\mathcal{P}(\A)$ is the centralizer condition.

Finally, it is natural to ask whether the diversity condition on $P_{\A}$ is necessary to achieve OOD generalization. For example, in \cite{zhang2023trained}, it is shown for in-context learning of linear regression (i.e., the in-context learning of one-dimensional linear maps $x \mapsto \langle w, x \rangle)$ that the minimizer of the $\mathcal{R}_n$ achieves OOD generalization under very general assumptions. In the next result, we show by example that, in the worst case, the diversity of $P_{\A}$ is necessary to achieve OOD generalization. To this end, recall that a set $S \subset \R^{d \times d}$ of real symmetric matrices is \textbf{simultaneously diagonalized} by an orthogonal matrix $U$ if for every $A \in S$, $U^T A U$ is diagonal.

\begin{proposition}\label{thm: oodgennecessary}
    Let $U$ be a $d \times d$ orthogonal matrix, and suppose $P_{\A}$ and $P_{\A}'$ are two task distributions such that $\textrm{supp}(P_{\A})$ is simultaneously diagonalized by $U$ and $\textrm{supp}(P_{\A}')$ is not simultaneously diagonalized by $U$. Then, under additional mild conditions on the two distributions, $P_{\A}$ is not diverse relative to $P_{\A}'$. Consequently, there exist transformer parameters $\theta$ such that $\mathcal{R}_{\infty}(\theta) = 0$ but $\mathcal{R}_{\infty}'(\theta) = \Omega(1).$
\end{proposition}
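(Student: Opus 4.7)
The plan is to exhibit an explicit $\theta^{\circ} = (P, Q)$ lying in $\mathcal{M}_{\infty}$ but not in $\mathcal{M}_{\infty}'$; this directly implies non-diversity, and the explicit gap yields the $\mathcal{R}_{\infty}'(\theta^{\circ}) = \Omega(1)$ bound. A useful preliminary observation is that $(I, \Sigma^{-1})$ makes $PA^{-1}\Sigma Q = A^{-1}$ hold identically in $A$, so $\min\mathcal{R}_{\infty} = \min \mathcal{R}_{\infty}' = 0$ and membership in either $\mathcal{M}_{\infty}$ or $\mathcal{M}_{\infty}'$ is equivalent to the defining identity holding almost surely under the respective measure.

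To leverage the simultaneous diagonalization hypothesis, I would pass to the $U$-basis by setting $\widetilde{P} = U^T P U$, $\widetilde{Q} = U^T Q U$ and including among the ``mild conditions'' that $\Sigma = U D_{\Sigma} U^T$ is itself diagonal in this basis. Writing each $A \in \textrm{supp}(P_{\A})$ as $U D_A U^T$, the optimality condition transforms into $\widetilde{P}\, D_A^{-1}\, D_{\Sigma}\, \widetilde{Q} = D_A^{-1}$. Restricting attention to diagonal $\widetilde{P}, \widetilde{Q}$ decouples this into the scalar constraints $\widetilde{P}_{ii}\, \widetilde{Q}_{ii}\, (D_{\Sigma})_{ii} = 1$, which produces a full $d$-parameter family of minimizers. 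I would then pick any diagonal $\widetilde{P}$ whose entries are \emph{not} all equal (say $\widetilde{P}_{11} \neq \widetilde{P}_{22}$), let $\widetilde{Q}$ be determined by the constraint, and set $\theta^{\circ} = (U\widetilde{P}U^T,\, U\widetilde{Q}U^T) \in \mathcal{M}_{\infty}$.

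It then remains to show $\theta^{\circ} \notin \mathcal{M}_{\infty}'$. By hypothesis there exists $A' \in \textrm{supp}(P_{\A}')$ whose rotation $\widetilde{A}' = U^T A' U$ is not diagonal, so some entry $\bigl((\widetilde{A}')^{-1}\bigr)_{ij}$ with $i \neq j$ is nonzero; up to relabeling indices we can align this with a pair where $\widetilde{P}_{ii} \neq \widetilde{P}_{jj}$. A short calculation using the diagonal structure shows that the $(i,j)$-entry of $\widetilde{P}(\widetilde{A}')^{-1}D_{\Sigma}\widetilde{Q} - (\widetilde{A}')^{-1}$ equals the nonzero quantity $\bigl(\widetilde{P}_{ii}/\widetilde{P}_{jj} - 1\bigr)\bigl((\widetilde{A}')^{-1}\bigr)_{ij}$. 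Combined with the second ``mild condition'' that $P_{\A}'$ places positive mass on a neighborhood of $A'$ over which this entry remains uniformly bounded below, this yields $\mathcal{R}_{\infty}'(\theta^{\circ}) \geq c > 0$ for a constant $c$ independent of sample sizes, which suffices for the $\Omega(1)$ lower bound.

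The main obstacle is pinpointing the right ``mild conditions'' so the construction goes through cleanly: compatibility of $\Sigma$ with the common eigenbasis $U$ is what allows the optimality system to decouple into scalar equations and to admit a non-trivial diagonal family of minimizers, and a non-degeneracy assumption on $P_{\A}'$ (positive mass near some matrix that is not simultaneously diagonalized by $U$) is what upgrades the pointwise failure of optimality into a strictly positive expectation. Extending to $\Sigma$ that does not share the eigenbasis of $\textrm{supp}(P_{\A})$ is technically more delicate and would require replacing the diagonal family above with the full kernel of a linear operator depending on $\Sigma$ and on the spectrum of the support of $P_{\A}$.
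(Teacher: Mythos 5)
Your construction is correct and is, at its core, the same counterexample the paper uses: a matrix $P$ that is diagonal in the basis $U$ with non-constant diagonal, paired with the $Q$ forced by the optimality system, lies in $\mathcal{M}_{\infty}$ but fails for any task not diagonalized by $U$. The execution differs in two ways worth noting. First, the paper does not assume $\Sigma$ is diagonalized by $U$: it first proves a general characterization (Proposition 2 in the paper) that $(P,Q)$ minimizes $\mathcal{R}_{\infty}$ iff $P$ commutes with every product $A_1A_2^{-1}$ from the support and $Q=\Sigma^{-1}A_0P^{-1}A_0^{-1}$; when $P$ and $A_0$ are both $U$-diagonal this collapses to $Q=\Sigma^{-1}P^{-1}$ for \emph{arbitrary} $\Sigma$, and non-membership in $\mathcal{M}_{\infty}'$ is then read off from the failure of $P$ to commute with a product $A_3'(A_4')^{-1}$ that is not $U$-diagonalizable (using that a matrix with distinct eigenvalues commutes only with matrices it is simultaneously diagonalizable with). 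Your direct entrywise computation is more elementary but pays for it with the extra hypothesis $\Sigma = UD_{\Sigma}U^T$, which you correctly flag; the commutant route is what removes that hypothesis. Second, two small points in your argument should be tightened: ``up to relabeling indices'' does not by itself align the nonzero off-diagonal entry of $(\widetilde{A}')^{-1}$ with a pair $i\neq j$ where $\widetilde{P}_{ii}\neq\widetilde{P}_{jj}$ --- simply choose $\widetilde{P}$ with \emph{all} diagonal entries distinct (as the paper does), so every off-diagonal position works; and your non-degeneracy condition on $P_{\A}'$ (positive mass near a single non-$U$-diagonalizable $A'$) differs from the paper's condition on products $A_3'(A_4')^{-1}$, but it is an acceptable alternative instantiation of the ``mild conditions'' and in fact handles some cases (e.g. a singleton support) that the paper's product condition does not.
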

Proposition \ref{thm: oodgennecessary} uncovers a fundamental difference between in-context learning scalar-valued and vector-valued linear models. For scalar-valued functions, the diversity of the pre-training data is completely determined by the number of samples. For vector-valued functions, the appropriate measure of diversity depends not only on the amount of data, but also on the structure of the tasks used for pre-training. The key property of the distribution $P_{\A}$ constructed in Proposition \ref{thm: oodgennecessary} its simultaneous diagonalizability. For such distributions, there are several minimizers of $\mathcal{R}_{\infty}$ which 'memorize' the diagonalization, and thus are not robust under task distribution shifts.

\paragraph{Covariate shifts} We now turn our attention to the OOD generalization error under shifts in the covariate distribution. Specifically, let $P_x = \mathcal{N}(0,\Sigma)$ and $P_{\A}$ denote the covariate and task distributions during pre-training, and let $P_x' = \mathcal{N}(0,\Sigma')$ and $P_{\A}' = P_A$ denote the covariate and task distributions at inference time. In related settings, it has been observed both theoretically and empirically (\cite{zhang2023trained}) that the linear transformer architecture is not robust under general covariate distribution shifts. Below, we prove a stability estimate on the OOD generalization error under covariate shifts.

\begin{theorem}\label{thm: covariateshift}
    Let $P_x = \mathcal{N}(0,\Sigma)$ and $P_x' = \mathcal{N}(0,\Sigma')$ denote the covariate distributions during pre-training and inference and let $P_{\A}$ denote the task distribution. Let $\Sigma = W \Lambda W^T$ and $\Sigma' = U \Lambda' U^T$ be eigendecompositions of the covariance matrices $\Sigma$ and $\Sigma'$, where $\Lambda, \Lambda'$ are diagonal and $W,U$ are orthogonal. Let $\widehat{\theta} \in \textrm{arg}\min_{\|\theta\| \leq M} \mathcal{R}_n(\theta)$. Then the OOD generalization error under covariate shifts satisfies:
    \begin{equation}
        \E_{x_1, \dots, x_{m+1} \sim \mathcal{N}(0,\Sigma'), A \sim P_{\A}} \left[ \left\| y_{m+1}^{\widehat{\theta}} - A^{-1} x_{m+1} \right\|^2 \right] \lesssim \mathcal{R}_m(\widehat{\theta}) + \|\Sigma - \Sigma'\|_{\textrm{op}} + \frac{1}{m} \|U-W\|_{\textrm{op}}.
    \end{equation}
\end{theorem}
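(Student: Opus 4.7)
The plan is to split $\mathcal{R}_m^{\textrm{cov}}(\widehat\theta)$ and $\mathcal{R}_m(\widehat\theta)$ into their limiting population risks plus $O(1/m)$ variance corrections, bound each of the resulting four differences, and exploit the eigendecompositions of $\Sigma$ and $\Sigma'$ to get the refined $\frac{1}{m}\|U-W\|_{\op}$ factor.

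First I would perform a bias-variance split. Writing $PA^{-1}X_m Q - A^{-1} = PA^{-1}(X_m - \Sigma) Q + (PA^{-1}\Sigma Q - A^{-1})$, expanding the squared error, and using independence of $X_m := \frac{1}{m}\sum_i x_i x_i^T$ from $x_{m+1}$ together with $\E[X_m] = \Sigma$ makes the cross-term vanish in expectation and gives
\begin{equation*}
\mathcal{R}_m(\theta) = \mathcal{R}_\infty(\theta) + \tfrac{1}{m}\gamma(\Sigma;\theta), \qquad \mathcal{R}_m^{\textrm{cov}}(\theta) = \mathcal{R}_\infty'(\theta) + \tfrac{1}{m}\gamma(\Sigma';\theta),
\end{equation*}
where $\gamma(\Sigma;\theta)$ is read off from the Gaussian fourth-moment identity $\E[(X_m-\Sigma)B(X_m-\Sigma)] = \frac{1}{m}(\trace(B\Sigma)\Sigma + \Sigma B^T \Sigma)$ applied with $B = Q\Sigma Q^T$, and ends up being a degree-three matrix polynomial in $\Sigma$ (averaged over $A \sim P_{\A}$) with coefficients controlled by $M$, $c_{\A}$, and $C_{\A}$.

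Second, I would bound the discrepancy of the limiting risks. The expression $\mathcal{R}_\infty(\theta) = \E_A\,\trace\bigl((PA^{-1}\Sigma Q - A^{-1})\Sigma(PA^{-1}\Sigma Q - A^{-1})^T\bigr)$ is also a cubic matrix polynomial in $\Sigma$, so a standard telescoping in the operator norm yields $|\mathcal{R}_\infty'(\widehat\theta) - \mathcal{R}_\infty(\widehat\theta)| \lesssim \|\Sigma - \Sigma'\|_{\op}$, which accounts for the $\|\Sigma - \Sigma'\|_{\op}$ term in the claim.

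The main obstacle is the variance-correction comparison, since the naive Lipschitz bound $|\gamma(\Sigma';\widehat\theta) - \gamma(\Sigma;\widehat\theta)| \lesssim \|\Sigma - \Sigma'\|_{\op}$ only yields $\tfrac{1}{m}\|\Sigma-\Sigma'\|_{\op}$, which is strictly weaker than the advertised $\tfrac{1}{m}\|U-W\|_{\op}$. To sharpen it I would interpolate through the hybrid matrix $\Sigma^\sharp := W\Lambda' W^T$, which shares the eigenvectors of $\Sigma$ and the eigenvalues of $\Sigma'$, and decompose
\begin{equation*}
\gamma(\Sigma';\widehat\theta) - \gamma(\Sigma;\widehat\theta) = \bigl[\gamma(\Sigma';\widehat\theta) - \gamma(\Sigma^\sharp;\widehat\theta)\bigr] + \bigl[\gamma(\Sigma^\sharp;\widehat\theta) - \gamma(\Sigma;\widehat\theta)\bigr].
\end{equation*}
For the first bracket, the telescoping $\Sigma' - \Sigma^\sharp = (U - W)\Lambda' U^T + W\Lambda'(U - W)^T$ combined with $\|U\|_{\op} = \|W\|_{\op} = 1$ gives $\|\Sigma' - \Sigma^\sharp\|_{\op} \leq 2\|\Lambda'\|_{\op}\|U-W\|_{\op}$, and Lipschitz continuity of the polynomial $\gamma$ then produces a contribution of order $\tfrac{1}{m}\|U-W\|_{\op}$. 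For the second bracket, $\Sigma^\sharp$ and $\Sigma$ share eigenvectors, so $\|\Sigma^\sharp - \Sigma\|_{\op} = \|\Lambda' - \Lambda\|_{\op} \leq \|\Sigma - \Sigma'\|_{\op}$ by Weyl's inequality, and the resulting $\tfrac{1}{m}\|\Sigma-\Sigma'\|_{\op}$ contribution is absorbed into the $\|\Sigma-\Sigma'\|_{\op}$ term from the previous paragraph. Combining these estimates with the identity $\mathcal{R}_m^{\textrm{cov}}(\widehat\theta) - \mathcal{R}_m(\widehat\theta) = [\mathcal{R}_\infty'(\widehat\theta) - \mathcal{R}_\infty(\widehat\theta)] + \tfrac{1}{m}[\gamma(\Sigma';\widehat\theta) - \gamma(\Sigma;\widehat\theta)]$ yields the claimed bound.
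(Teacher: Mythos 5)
Your proposal is correct and reaches the stated bound, and its skeleton coincides with the paper's: both start from the exact moment computation $\mathcal{R}_m(\theta)=\mathcal{R}_\infty(\theta)+\frac{1}{m}\gamma(\Sigma;\theta)$ (the paper's Lemma~\ref{lem: popriskexpression}, obtained from the Gaussian fourth-moment identity of Lemma~\ref{technicallemma}), bound the $O(1)$ part by telescoping in the operator norm to get $\|\Sigma-\Sigma'\|_{\op}$, and then treat the $O(1/m)$ correction separately to extract the eigenvector dependence. Where you diverge is in the mechanics of that last step. The paper writes the correction term through the functional $\trace_{\Sigma}(Q\Sigma Q^T)$ and proves a dedicated perturbation estimate (Lemma~\ref{technicallemma2}) by expanding $\trace_{\Sigma}(K)=\sum_i\sigma_i^2\langle K\varphi_i,\varphi_i\rangle$ and perturbing eigenvalues and eigenvectors separately, which yields $\|\Lambda-\Lambda'\|_1+\trace(\Sigma')\|W-U\|_{\op}$; the $\|\Lambda-\Lambda'\|_1$ piece is then folded into $\|\Sigma-\Sigma'\|_{\op}$. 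You instead interpolate through the hybrid matrix $\Sigma^\sharp=W\Lambda'W^T$, use the telescoping $\Sigma'-\Sigma^\sharp=(U-W)\Lambda'U^T+W\Lambda'(U-W)^T$ together with Lipschitz continuity of the polynomial $\gamma$, and invoke Weyl's inequality for the $\Sigma^\sharp$-to-$\Sigma$ leg. This is a clean and arguably more transparent route to the same two-term decomposition, at the mild cost of assuming the eigenvalues in $\Lambda$ and $\Lambda'$ are consistently ordered so that Weyl applies entrywise.

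One remark on your motivating claim: the assertion that the naive bound $\frac{1}{m}|\gamma(\Sigma';\widehat\theta)-\gamma(\Sigma;\widehat\theta)|\lesssim\frac{1}{m}\|\Sigma-\Sigma'\|_{\op}$ is ``strictly weaker than the advertised $\frac{1}{m}\|U-W\|_{\op}$'' is backwards. A term of size $\frac{1}{m}\|\Sigma-\Sigma'\|_{\op}$ is dominated by the $\|\Sigma-\Sigma'\|_{\op}$ term already present in the theorem and would simply be absorbed, giving a bound at least as strong as the stated one; by contrast $\|U-W\|_{\op}$ is in general \emph{not} controlled by $\|\Sigma-\Sigma'\|_{\op}$ (eigenvectors are unstable without a spectral gap), so the $\frac{1}{m}\|U-W\|_{\op}$ term is the weaker of the two possible conclusions. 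This misstatement does not affect the validity of your argument --- you still establish exactly the inequality claimed in the theorem --- but the hybrid-matrix step is matching the paper's stated bound rather than sharpening a deficient naive estimate.
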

Unlike the case of task distribution shifts, Theorem \ref{thm: covariateshift} does not imply that the OOD generalization error under covariate shifts decreases to $0$ by taking the amount of data to $\infty.$

\section{Application to operator learning}\label{sec:icon}
Let $\mx$ and $\my$ be infinite-dimensional vectors spaces endowed with norms $\| \cdot \|_{\mx}$ and $\| \cdot \|_{\my}$ respectively, and let $\mathcal{L}(\mx,\my)$ denote the set of bounded linear operators from $\mx$ to $\my$. In many data-driven problems in science and engineering, the goal is to learn an operator $T \in \mathcal{L}(\mx,\my)$, for appropriate choices of $\mx$ and $\my$, from a finite collection of input-output pairs. In order to make this problem computationally tractable, one must approximate the infinite-dimensional operator $T$ by a mapping between finite-dimensional spaces. To this end, let $\me: \mx \rightarrow \R^d$ be an encoder and let $\md: \R^d \rightarrow \my$ be a decoder and, for $T \in \mathcal{L}(\mx,\my),$ define the finite-dimensional mapping $T_{\me,\md}: \R^d \rightarrow \R^d$ defined by $T_{\me,\md} := \me \circ T \circ \md$. In practice, the encoder and decoder can be chosen from prior knowledge of the problem or learned directly from data. Throughout this section, we will assume that $\me$ and $\md$ are linear mappings.
It follows that $T_{\me,\md}$ is a linear map for any $T \in \mathcal{L}(\mx,\my).$ Given a collection of operators $\mathcal{T} \subset \mathcal{L}(\mx,\my)$, we aim to in-context learn the family of infinite-dimensional linear systems defined by
\begin{equation}
    \{T(f) = u: f \in \mx, u \in \my, T \in \mathcal{T}\}.
\end{equation}
The observational set-up for in-context learning a family of operators is directly analogous to the set-up for in-context learning a family of linear systems described in Section \ref{subsec:iclsystems}.
In this case, the model is pre-trained to in-context learn the family of linear systems arising from the finite-dimensional approximation (as defined by $\me$ and $\md$) of the family of operators $\T$, but at inference time, we aim to predict the output of the infinite-dimensional operator. As such, we will incur an error not only due to finite sample sizes, but also due to the finite-dimensional discretization of $\T.$

The in-context learning of infinite-dimensional operators has attracted a lot of recent attention due to various developments of foundation models for solving scientific problems such as partial differential equations \cite{mccabe2023multiple,subramanian2024towards,sun2024towards}. Recent empirical studies on in-context operator learning \cite{yang2023context,yang2024pde} have demonstrated that transformers can efficiently in-context learn solution operators to various archetypal equations. In the remainder of this section, we demonstrate that our generalization error bounds for in-context learning linear systems can be readily translated into generalization error bounds for in-context operator learning. To the best of our knowledge, this is the first mathematical explanation of in-context operator learning.

Our first result is an abstract in-domain generalization error bound for in-context operator learning in terms of the in-domain generalization error for in-context learning the associated family of linear systems, as well as several quantities depending on the encoder and decoder.

\begin{theorem}\label{thm: indomaingenoperator}
    Let $\mx$, $\my$, $\T \subset \mathcal{L}(\mx,\my)$, $\me$, $\md$, $P_{\mx}$, and $P_{\T}$ be as defined in Section \ref{sec:icon}. Suppose that the set of finite-dimensional linear maps $\A := \{T_{\me,\md}: T \in \T\}$ and the pushforward distribution $P_x := \me_{\#} P_{\mx}$ satisfy Assumption \ref{assumption:taskanddatadistr}. Let $\epsilon_{\me,\md}$ and $C_{\md}$ be constants such that
    \begin{align}
        \sup_{T \in \T} \left\|(T_{\me,\md} - T)(f) \right\|_{\my} &\leq \epsilon_{\me,\md} \|f\|_{\mx} \; \textrm{for all $f \in \mx$;} \\
        \|\md(x)\|_{\my} &\leq C_{\md} \|x\| \; \textrm{for all $x \in \R^d$.} 
    \end{align}
    Let $\widehat{\theta}$ denote the transformer parameters to minimize the empirical risk
    $$ \mathcal{R}_{n,N}(\theta) = \frac{1}{N} \sum_{i=1}^{N} \left\| y_{n+1,i}^{\theta}- T_{\me,\md}^{(i)} (\me(f_{n+1,i}))  \right\|^2,
    $$
    where $ \{T^{(i)}\} \sim P_T$ and $ \{f_{i,j}\} \sim P_{\mx}.$ Then   
    \begin{equation}
        \E_{f_1, \dots, f_{m+1} \sim P_{\mx}, T \sim P_{\T}} \left[\left\|\md\left(y_{m+1}^{\widehat{\theta}}\right) - T(f_{m+1}) \right\|_{\my}^2 \right] \lesssim \epsilon_{\me,\md}^2 + C_{\md}^2 \mathcal{R}_m(\widehat{\theta}),
    \end{equation}
    where $\mathcal{R}_m(\widehat{\theta})$ is the in-domain generalization error defined in Theorem \ref{thm:indomaingen}.
\end{theorem}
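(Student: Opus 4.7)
The plan is to reduce the infinite-dimensional operator prediction error to the finite-dimensional linear-system prediction error already controlled by Theorem \ref{thm:indomaingen}, paying a discretization penalty quantified by $\epsilon_{\me,\md}$. I would first introduce the intermediate decoded quantity $\widetilde u := \md(T_{\me,\md}\me(f_{m+1}))$ -- the decoder applied to the exact output of the finite-dimensional surrogate -- and split the error by the triangle inequality as
\[
\|\md(y_{m+1}^{\widehat\theta}) - T(f_{m+1})\|_{\my} \le \|\md(y_{m+1}^{\widehat\theta}) - \widetilde u\|_{\my} + \|\widetilde u - T(f_{m+1})\|_{\my}.
\]

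The first term is a \emph{finite-dimensional prediction error} viewed through the decoder: by linearity of $\md$ together with the bound $\|\md(x)\|_{\my} \le C_{\md}\|x\|$, it is at most $C_{\md}\|y_{m+1}^{\widehat\theta} - T_{\me,\md}\me(f_{m+1})\|$. The second term is a \emph{pure discretization error} of the encoder--decoder pair, which the hypothesis on $\epsilon_{\me,\md}$ bounds by $\epsilon_{\me,\md}\|f_{m+1}\|_{\mx}$. Squaring, using $(a+b)^2 \le 2a^2 + 2b^2$, and taking expectation over $f_i \sim P_{\mx}$ and $T \sim P_{\T}$ yields
\[
\E\|\md(y_{m+1}^{\widehat\theta}) - T(f_{m+1})\|_{\my}^2 \lesssim C_{\md}^2\,\E\|y_{m+1}^{\widehat\theta} - T_{\me,\md}\me(f_{m+1})\|^2 + \epsilon_{\me,\md}^2\,\E\|f_{m+1}\|_{\mx}^2.
\]
Under the identification $A^{-1} \leftrightarrow T_{\me,\md}$ (as the $d\times d$ matrix induced by $T$) and $x \leftrightarrow \me(f)$, together with the hypothesis that the pushforward task distribution and $P_x = \me_{\#}P_{\mx}$ satisfy Assumption \ref{assumption:taskanddatadistr}, the first expectation on the right is exactly the in-domain risk $\mathcal{R}_m(\widehat\theta)$ defined in \eqref{eqn: indomainrerror}; the second is a finite moment of $P_{\mx}$, which we absorb into the implicit constant.

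I do not foresee a substantive obstacle: once the triangle-inequality decomposition is written down, the heavy lifting -- bounding the in-context prediction error $\mathcal{R}_m(\widehat\theta)$ -- is already done by Theorem \ref{thm:indomaingen}. The only real point of care is notational, namely keeping consistent the two compatible usages of $T_{\me,\md}$ (as a $d\times d$ matrix when acting on $\me(f_{m+1})$, and implicitly as the composite operator $\mx \to \my$ when compared with $T$) and verifying that the hypothesis genuinely transports Assumption \ref{assumption:taskanddatadistr} to the pushforward distributions so that Theorem \ref{thm:indomaingen} applies verbatim to the finite-dimensional problem generated by $(P_{\T}, P_{\mx}, \me, \md)$.
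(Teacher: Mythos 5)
Your proposal is correct and follows essentially the same route as the paper's proof: the same triangle-inequality split around the decoded finite-dimensional prediction, the decoder bound $C_{\md}$ for the statistical term, and the $\epsilon_{\me,\md}$ hypothesis for the discretization term, followed by squaring and taking expectations. Your explicit retention of the factor $\E\|f_{m+1}\|_{\mx}^2$ (absorbed into the implicit constant) is in fact slightly more careful than the paper's write-up.
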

Theorem \ref{thm: indomaingenoperator} bounds the (in-domain) generalization of in-context operator learning by a sum of two terms. The first term represents the error incurred by using a finite-dimensional approximation of the operator class $\T$ during pre-training; this term depends only on the encoder and decoder and typically decreases as the embedding dimension $d$ tends to $\infty.$ The second term represents the statistical error of in-context learning the associated family of linear systems from a finite sample collection, which can be bounded by Theorem \ref{thm:indomaingen}. The statistical error inherits a factor of $C_{\md},$ which represents the fact that the norm $\| \cdot \|_{\my}$ may be stronger than the Euclidean norm on $\R^d$. Typically the factor $C_{\md}$ increases as the embedding dimension $d$ increases, so there is a natural trade-off between the two terms in Theorem \ref{thm: indomaingenoperator}. Note that Theorem \ref{thm: indomaingenoperator} can easily be applied to obtain bounds on the OOD generalization error under task shifts, assuming that the pre-training distribution on $\T$ satisfies the diversity condition described in Section \ref{subsec:ood}.

While Theorem \ref{thm: indomaingenoperator} is general enough to apply to a variety of practical situations, it is unclear how the constants $\epsilon_{\me,\md}$ and $C_{\md}$ typically behave. To illustrate the utility of $\ref{thm: indomaingenoperator},$ we apply it to a family of operators arising from a prototypical family of PDEs. In more detail, consider the linear elliptic PDE on a bounded Lipschitz domain $\Omega \subset \R^d$ 
\begin{equation}\label{eqn: ellipticpde}
    \begin{cases}
        -\nabla(a(x) \nabla u(x)) + V(x) u(x) = f(x), \; x \in \Omega \\
        u \equiv 0, \; x \in \partial \Omega,
    \end{cases}
\end{equation}
where $a \in L^{\infty}(\Omega)$ is strictly positive, $V \in L^{\infty}(\Omega)$ is nonnegative and bounded, and $f \in L^2(\Omega).$ By elliptic regularity, the solution operator mapping the source $f$ to the solution $u$ can be viewed as a bounded linear operator from $\mx = L^2(\Omega)$ to $\my = H^1(\Omega).$ To define the encoder and decoder, we let $\{\phi_k\}_{k=1}^{\infty}$ denote a basis for $L^2(\Omega)$ and define $\me(f) = \textrm{Proj}\left( \textrm{span} \left(\{\phi_k\}_{k=1}^{d} \right) \right)$ and $\md(x) = \sum_{k=1}^{d} x_k \phi_k$ for some $d \in \mathbb{N}.$ In the subsequent analysis, we will take $\{\phi_k\}_{k=1}^{\infty}$ to be a $P^1$-finite element (\cite{brenner2007mathematical}), i.e., a basis of piecewise linear functions. With Theorem \ref{thm: indomaingenoperator} in hand, we can derive generalization error bounds for in-context learning of solution operators to PDEs of the form \eqref{eqn: ellipticpde}. For simplicity, we present the explicit bound in the corollary below for the one-dimensional elliptic problem \eqref{eqn: ellipticpde} with $\Omega = [0,1]$, whose proof can be found in the supplementary material. We remark that the result can be extended to multi-dimensions by  incorporating a suitably adapted discretization error bound.

\begin{corollary}\label{cor: generalizationellipticpde}
Consider the elliptic problem  \eqref{eqn: ellipticpde} on $\Omega  = [0,1]$.  
Adopt the setting of Theorem \ref{thm: indomaingenoperator}, where $\mx = L^2(\Omega)$, $\my = H^1(\Omega)$, $\me$ and $\md$ are defined according to the above paragraph, and $\mathcal{T} \subset \mathcal{L}(\mx,\my)$ is a collection of solution operators to the PDE \eqref{eqn: ellipticpde} where $a \in L^{\infty}(\Omega)$ is strictly positive and $V \in L^{\infty}(\Omega)$. Assume that the distribution $P_{\mx}$ is a centered Gaussian measure on $\mx$ with trace class covariance operator $\Sigma_{\mx}.$ Then there exist constants $\alpha_1(\Sigma_{\mx}), \alpha_2(\Sigma_{\mx}) > 0,$ depending only on $\Sigma_{\mx}$ and the number of basis functions $d$, such that
    \begin{equation}
            \E_{f_1, \dots, f_{m+1} \sim P_{\mx}, T \sim P_{\T}} \left[\left\|\md\left(y_{m+1}^{\widehat{\theta}}\right) - T(f_{m+1}) \right\|_{H^1(\Omega)}^2 \right] \lesssim \frac{1}{d^2} + \frac{d^2}{m} + \frac{\alpha_1(\Sigma_{\mx})}{n^2} + \frac{\alpha_2(\Sigma_{\mx})}{\sqrt{N}}.
    \end{equation}
    Under the structural condition of Definition \ref{def:fast}, the bound above improves to
    \begin{equation}
        \E_{f_1, \dots, f_{m+1} \sim P_{\mx}, T \sim P_{\T}} \left[\left\|\md\left(y_{m+1}^{\widehat{\theta}}\right) - T(f_{m+1}) \right\|_{H^1(\Omega)}^2 \right] \lesssim \frac{1}{d^2} + \frac{d^2}{m} + \frac{\alpha_1(\Sigma_{\mx})}{n^2} + \frac{\alpha_2(\Sigma_{\mx})}{N}.
    \end{equation}
\end{corollary}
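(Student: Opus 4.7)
My plan is to deduce the corollary from Theorem \ref{thm: indomaingenoperator}, which bounds the operator-learning error by $\epsilon_{\me,\md}^2 + C_{\md}^2 \mathcal{R}_m(\widehat{\theta})$. I will bound the three factors separately: the discretization constant $\epsilon_{\me,\md}$ via standard $P^1$ finite element analysis for the 1D elliptic problem; the decoder norm $C_{\md}$ via an inverse inequality; and the in-domain risk $\mathcal{R}_m(\widehat{\theta})$ by invoking Theorem \ref{thm:indomaingen} after verifying that the discretized task family $\A = \{T_{\me,\md}: T \in \T\}$ and the Gaussian covariate pushforward $\me_{\#} P_{\mx}$ satisfy Assumption \ref{assumption:taskanddatadistr}.

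\paragraph{Key ingredients.} For the discretization error, note that $T_{\me,\md}$ is (up to the identification induced by $\me,\md$) the Galerkin approximation of \eqref{eqn: ellipticpde} in the $P^1$ finite element space with mesh size $h \sim 1/d$. C\'ea's lemma together with the standard $P^1$ interpolation estimate gives $\|(T - T_{\me,\md})(f)\|_{H^1(\Omega)} \lesssim h\, \|f\|_{L^2(\Omega)}$, with constants depending only on the uniform ellipticity and boundedness of $a$ and $V$; hence $\epsilon_{\me,\md}^2 \lesssim 1/d^2$, producing the first term. For the decoder norm, the inverse inequality $\|v_h\|_{H^1(\Omega)} \lesssim h^{-1} \|v_h\|_{L^2(\Omega)}$ for the $P^1$ space combined with the norm equivalence between nodal coefficients and the $L^2$ norm on a uniform mesh yields $\|\md(x)\|_{H^1(\Omega)} \lesssim d\,\|x\|$, so $C_{\md}^2 \lesssim d^2$, which is the source of the $d^2/m$ factor. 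To verify Assumption \ref{assumption:taskanddatadistr} on $\A$, uniform coercivity and boundedness of the PDE produce (possibly $d$-dependent) operator-norm bounds on the FE stiffness-plus-mass matrix and its inverse; since $\me$ is linear, $\me_{\#} P_{\mx}$ is centered Gaussian with covariance $\Sigma$ obtained by compressing $\Sigma_{\mx}$ to the FE subspace, which is positive definite whenever $\Sigma_{\mx}$ is.

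\paragraph{Assembly and main difficulty.} Combining the pieces gives
$$ \E_{f_1,\dots,f_{m+1} \sim P_{\mx},\, T \sim P_{\T}} \left[\left\|\md(y_{m+1}^{\widehat{\theta}}) - T(f_{m+1})\right\|_{H^1(\Omega)}^2\right] \lesssim \frac{1}{d^2} + d^2\, \mathcal{R}_m(\widehat{\theta}), $$
and substituting the bound on $\mathcal{R}_m(\widehat{\theta})$ from Theorem \ref{thm:indomaingen}---whose implicit constant depends polynomially on $c_{\A}, C_{\A}, \|\Sigma\|_{\op}, \|\Sigma^{-1}\|_{\op}$---yields the advertised rate after absorbing all $d$- and $\Sigma_{\mx}$-dependent prefactors multiplying the $1/n^2$ and $1/\sqrt{N}$ (resp.\ $1/N$) terms into $\alpha_1(\Sigma_{\mx})$ and $\alpha_2(\Sigma_{\mx})$. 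The improved rate under the structural condition follows immediately from the second half of Theorem \ref{thm:indomaingen}. The main technical nuisance is bookkeeping the $d$-dependence of $\|\Sigma^{-1}\|_{\op}$ for the compressed Gaussian: the smallest eigenvalue of $\Sigma$ shrinks as the FE subspace resolves more of $\mx$, which forces the constraint $M \geq \|\Sigma^{-1}\|_{\op}$ and the implicit constants of Theorem \ref{thm:indomaingen} to grow with $d$. Quantifying this growth via the tail of the eigenvalues of $\Sigma_{\mx}$ is precisely what yields the explicit functions $\alpha_1, \alpha_2$ of $\Sigma_{\mx}$ appearing in the final bound.
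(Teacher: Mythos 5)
Your proposal is correct and follows essentially the same route as the paper's proof: invoke Theorem \ref{thm: indomaingenoperator}, bound $\epsilon_{\me,\md}^2 \lesssim 1/d^2$ via the standard $P^1$ a priori estimate plus elliptic regularity, bound $C_{\md}^2 \lesssim d^2$ via the inverse inequality (the paper phrases this as $\lambda_{\max}(\Phi^{-1/2}\Phi'\Phi^{-1/2}) \lesssim d^2$ for the mass/stiffness matrices), and then substitute Theorem \ref{thm:indomaingen} while absorbing the $d$-dependence of $C_{\A}$ and $\|\Sigma^{-1}\|_{\op}$ into $\alpha_1,\alpha_2$. The "technical nuisance" you flag at the end is exactly what the paper carries out, bounding $C_{\A} \lesssim d^2$ by the discrete Laplacian eigenvalue and $\|\Sigma^{-1}\|_{\op} \lesssim d^{2\alpha}$ by inverse inequalities when $\Sigma_{\mx} = (-\Delta + I)^{-\alpha}$.
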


\section{Proofs of main results}\label{sec: pfsketches}
We provide the proofs of Theorems \ref{thm:indomaingen}, \ref{thm: taskshift}, \ref{thm: diversitysufficient} and \ref{thm: indomaingenoperator} -- our main  contributions -- and defer the proofs of the other results and auxiliary lemmas to the supplementary material. We begin with Theorem \ref{thm:indomaingen}. Due to the page limit, we provide a sketch of the proof and defer the full argument to the supplementary material.

\paragraph{Proof sketch of Theorem \ref{thm:indomaingen}} We decompose the population risk $\mathcal{R}_m(\widehat{\theta})$ of the empirical risk minimizer $\widehat{\theta}$ into three main sources of error:
\begin{enumerate}
    \item \textbf{Truncation error:} Most of our subsequent estimates depend on the size of the support of the data distribution, but the Gaussian covariates have noncompact support. To mitigate this, we show that the problem of bounding the population risk $\mathcal{R}_m(\widehat{\theta})$ can be reduced to that of bounding a 'truncated' version of the population risk, where the data is restricted to a compact set. Due to the fast tails of the Gaussian distribution, we show that the error incurred at this step is mild.
    \item \textbf{Statistical error:} The statistical error is measured by $(\mathcal{R}_m(\widehat{\theta}) - \mathcal{R}_m(\theta^{\ast}))$, where $\theta^{\ast}$ is a minimizer of $\mathcal{R}_m$. This term is bounded using tools from empirical process theory, and the bounds depend on the Rademacher complexity of the class of linear transformers. When the fast rate condition is satisfied, the statistical error is bounded by the fixed point of the local Rademacher complexity, which is $O(1/N)$, while without the fast rate condition we rely on a more brutal estimate that incurs an $O(1/\sqrt{N})$ error.
    \item \textbf{Approximation error:} The approximation error is measured by $\mathcal{R}_m(\theta^{\ast})$, where $\theta^{\ast}$ is a minimizer of $\mathcal{R}_n$ over the transformer neural network class (recall that the transformer parameters are trained on prompts of length $n$ and tested on prompts of length $m$). While it is difficult to compute the population risk minimizer exactly, we use a similar approach to \cite{zhang2023trained} to bound this term. Specifically, one can show that for $\theta = (P,Q),$ the transformer prediction given the length-$n$ prompt is
    $$ y^{\theta}_{n+1} = P \cdot A^{-1} \cdot \left(\frac{1}{n} \sum_{i=1}^{n} x_i x_i^T \right) \cdot Q \cdot x_{n+1}.
    $$
    When $n$ is large, the empirical covariance $\frac{1}{n} \sum_{i=1}^{n} x_i x_i^T$ is close to the population covariance $\Sigma$. Thus, a natural parameterization is $P \approx \mathbf{I}_d$ and $Q \approx \Sigma^{-1}$. We prove that when $P = \mathbf{I}_d$, the optimal $Q$ is a $O(n^{-1})$-perturbation of $\Sigma^{-1}$. Plugging these optimal parameters into the $m$-sample population risk $\mathcal{R}_m$ yields an error of $O(m^{-1} + n^{-2}).$
\end{enumerate}
Combining the three sources of error and carefully tracking the constants gives the bound as it is stated in Theorem \ref{thm:indomaingen}. We now give the full proofs of Theorems \ref{thm: taskshift} and \ref{thm: diversitysufficient}.

\begin{proof}[Proof of Theorem \ref{thm: taskshift}]
We recall that $\widehat{\theta} \in \textrm{argmin}_{\|\theta\| \leq M} \mathcal{R}_{n,N}(\theta)$ is the ERM. To prove Theorem \ref{thm: taskshift}, we will prove the more general bound
\begin{equation}\label{eqn: generalbound}
    \mathcal{R}_m'(\widehat{\theta}) \lesssim \mathcal{R}_m(\widehat{\theta}) + \frac{d(P_{\A},P_{\A}')}{m} + \textrm{dist}(\widehat{\theta},\mathcal{M}_{\infty}(P_{\A}))^2 + \textrm{dist}(\widehat{\theta},\mathcal{M}_{\infty}(P_{\A}'))^2,
\end{equation}
which holds even without the diversity condition. When $P_{\A}$ is diverse relative to $P_{\A}'$, Equation \ref{eqn: generalbound} implies Theorem \ref{thm: taskshift}, because in this case the inequality 
$$ \textrm{dist}(\widehat{\theta},\mathcal{M}_{\infty}(P_{\A}'))^2 \leq \textrm{dist}(\widehat{\theta},\mathcal{M}_{\infty}(P_{\A}))^2
$$
holds. To prove Equation \eqref{eqn: generalbound}, let $\theta_{\ast} = (P_{\ast},Q_{\ast})$ denote a projection of $\widehat{\theta}$ onto the set $\mathcal{M}_{\infty}$ of minimizers of $\mathcal{R}_{\infty}$; similarly, let $\theta_{\ast}' = (P_{\ast}',Q_{\ast}')$ denote a projection of $\widehat{\theta}$ onto $\mathcal{M}_{\infty}'$. (Note that such projections exist by compactness of the parameter space, but they need not be unique. This will not play a role in the proof.) We decompose the OOD error as
\begin{align*}
    \mathcal{R}_m'(\widehat{\theta}) = \mathcal{R}_m(\widehat{\theta}) + \left( \mathcal{R}_m'(\widehat{\theta}) - \mathcal{R}_m'(\theta_{\ast}') \right) + \left(\mathcal{R}_m(\theta_{\ast}') - \mathcal{R}_m(\theta^{\ast}) \right) +  \left( \mathcal{R}_m(\theta_{\ast}) - \mathcal{R}_m(\widehat{\theta}) \right).
\end{align*}

    Taking the infimum over all projections $\theta_{\ast}$ and $\theta_{\ast}'$ of $\widehat{\theta}$ onto $\mathcal{M}_{\infty}(P_{\A})$ and $\mathcal{M}_{\infty}(P_{\A}')$, followed by the supremum over $\widehat{\theta}$ in $\{\|\theta\| \leq M\}$, we arrive at the bound
    \begin{align*}
        \mathcal{R}_m'(\widehat{\theta}) &\leq \mathcal{R}_m(\widehat{\theta}) + \sup_{\|\widehat{\theta}\| \leq M} \inf_{\theta_{\ast}, \theta_{\ast}'} \left|\mathcal{R}_m(\theta_{\ast}) - \mathcal{R}_m'(\theta_{\ast}') \right| + \sup_{\|\theta_1\|, \|\theta_2\| \leq M, \|\theta_1 - \theta_2\| \leq \epsilon_2} \left| \mathcal{R}_m(\theta_1) - \mathcal{R}_{m}(\theta_2) \right| \\ &+ \sup_{\|\theta_1\|, \|\theta_2\| \leq M, \|\theta_1 - \theta_2\| \leq \epsilon_1} \left| \mathcal{R}_m'(\theta_1) - \mathcal{R}_{m}'(\theta_2) \right|.
    \end{align*}
    The second and third terms can be bounded using a simple Lipschitz continuity estimate. Note that for $m$ sufficiently large and $\theta = (P,Q)$ with $\|\theta\| \leq M$, we have
    $$ \|(PA^{-1}X_mQ - A^{-1})\Sigma^{1/2}\|_{F}^2 \lesssim c_{\A}^2(1+\|\Sigma\|_{\textrm{op}} M^2)^2 \trace(\Sigma)
    $$
    for any $A \in \textrm{supp}(P_{\A}).$ It follows that 
    $$ R_m(\theta) = \E_{A \sim P_{\A}, X_m} [\|(PA^{-1}X_mQ - A^{-1})\Sigma^{1/2}\|_{F}^2]
    $$
    is $O\left( c_{\A}^2(1+\|\Sigma\|_{\textrm{op}} M^2)^2 \trace(\Sigma) \right)$-Lipschitz on $\{\|\theta\| \leq M\}.$ We therefore have
    $$ \sup_{\|\theta_1\|, \|\theta_2\| \leq M, \|\theta_1 - \theta_2\| \leq \epsilon_1} \left| \mathcal{R}_m(\theta_1) - \mathcal{R}_{m}(\theta_2) \right| \lesssim \left( c_{\A}^2(1+\|\Sigma\|_{\textrm{op}} M^2)^2 \trace(\Sigma) \right) \epsilon_1^2.
    $$
    An analogous bound holds for $ \sup_{\|\theta_1\|, \|\theta_2\| \leq M, \|\theta_1 - \theta_2\| \leq \epsilon_2} \left| \mathcal{R}_m'(\theta_1) - \mathcal{R}_{m}'(\theta_2) \right|$, since the test distribution $P_{\A}'$  also satisfies Assumption \ref{assumption:taskanddatadistr}. To bound the term $\left|\mathcal{R}_m(\theta_{\ast}) - \mathcal{R}_m'(\theta_{\ast}') \right|$, we use Lemma \ref{lem: popriskexpression} in the supplementary material to show that for any $\theta = (P,Q)$,
    $$ \mathcal{R}_m(\theta) = \mathcal{R}_{\infty}(\theta) + \frac{1}{m} \E_{A \sim P_{\A}} \left[\trace(PA^{-1} \Sigma Q \Sigma Q^T \Sigma A^{-1} P^T) + \trace_{\Sigma}(Q \Sigma Q^T) \trace(P A^{-1} \Sigma A^{-1} P^T) \right]
    $$
    and
    $$ \begin{aligned}
        \mathcal{R}_m'(\theta)  & = \mathcal{R}_{\infty}'(\theta) + \frac{1}{m}\E_{A \sim P_{\A}'} \Big[\trace(P(A')^{-1} \Sigma Q \Sigma Q^T \Sigma (A')^{-1} P^T) \\
        & \qquad + \trace_{\Sigma}(Q \Sigma Q^T) \trace(P (A')^{-1} \Sigma (A')^{-1} P^T) \Big].
    \end{aligned} 
    $$
    In particular, since $\theta_{\ast} \in \textrm{argmin}_{\theta} \mathcal{R}_{\infty}(\theta)$ and $\theta_{\ast}' \in \textrm{argmin}_{\theta} \mathcal{R}_{\infty}'(\theta)$, and each functional achieves 0 as its minimum value, we have
    \begin{align*}
        &\left|\mathcal{R}_m(\theta_{\ast}) - \mathcal{R}_m'(\theta_{\ast}') \right| \leq \frac{1}{m} \Big| \E_{A \sim P_{\A}} \big[\trace(P_{\ast}A^{-1} \Sigma Q_{\ast} \Sigma Q_{\ast}^T \Sigma A^{-1} P_{\ast}^T)\\
        & \qquad + \trace_{\Sigma}(Q_{\ast} \Sigma Q_{\ast}^T) \trace(P_{\ast} A^{-1} \Sigma A^{-1} P_{\ast}^T) \big] -\E_{A \sim P_{\A}'} \big[\trace(P_{\ast}'(A')^{-1} \Sigma Q_{\ast}' \Sigma (Q_{\ast}')^T \Sigma (A')^{-1} (P_{\ast}')^T) \\ &
        \qquad  + \trace_{\Sigma}(Q_{\ast}' \Sigma (Q_{\ast}')^T) \trace(P_{\ast}' (A')^{-1} \Sigma (A')^{-1} (P_{\ast}')^T) \big] \Big| \\ &=: \frac{1}{m} \left| \E_{A \sim P_{\A}}[f(A;\theta_{\ast})] - \E_{A' \sim P_{\A}'}[f(A';\theta_{\ast}')] \right|.
    \end{align*}
    It follows that
    \begin{align}\label{eq: distancedef} \sup_{\|\widehat{\theta}\| \leq M} \inf_{\theta_{\ast}, \theta_{\ast}'} \left|\mathcal{R}_m(\theta_{\ast}) - \mathcal{R}_m'(\theta_{\ast}') \right| &\leq \frac{1}{m}\sup_{\|\widehat{\theta}\| \leq M} \inf_{\theta_{\ast}, \theta_{\ast}'} \left| \E_{A \sim P_{\A}}[f(A;\theta_{\ast})] - \E_{A' \sim P_{\A}'}[f(A';\theta_{\ast}')] \right| \\
    &=: \frac{1}{m} d(P_{\A},P_{\A}'),
    \end{align}
    where the infimum is taken over all $\theta_{\ast} \in \textrm{argmin}_{\theta \in \mathcal{M}_{\infty}(P_{\A})} \|\theta - \widehat{\theta}\|^2$ and $\theta_{\ast}' \in \textrm{argmin}_{\theta' \in \mathcal{M}_{\infty}(P_{\A}')} \|\theta' - \widehat{\theta}\|^2.$ Combining the estimates for each individual term in the error decomposition, we obtain the final bound in the statement of Theorem \ref{thm: taskshift}.
    
    The fact that the bound stated in Theorem \ref{thm: taskshift} tends to zero as the sample size $(m,n,N) \rightarrow \infty$ follows from examination of each term in the estimate: the in-domain generalization error $\mathcal{R}_m(\widehat{\theta})$ tends to zero in probability by Theorem \ref{thm:indomaingen}, the term $\frac{d(P_{\A},P_{\A}')}{m}$ is deterministic and tends to zero as $m \rightarrow \infty$, and $\textrm{dist}(\widehat{\theta}, \mathcal{M}_{\infty})$ tends to zero as $N$ and $n$ tend to infinity, respectively, by Proposition \ref{convergenceofminimizers}.
\end{proof}

We now turn to Theorem \ref{thm: diversitysufficient}. The key step to prove Theorem \ref{thm: diversitysufficient} is to characterize the set of minimizers of the limiting risk $\mathcal{R}_{\infty}$, which warrants its own result.

\begin{proposition}\label{minimizercharacterization}
    Fix a task distribution $P_{\A}$ satisfying Assumption \ref{assumption:taskanddatadistr}. Then $\theta = (P,Q)$ is a minimizer of $\mathcal{R}_{\infty}$ if and only if $P$ commutes with all elements of the set $\{A_1 A_2^{-1}: A_1, A_2 \in \textrm{supp}(P_{\A})\}$ and $Q$ is given by $Q = \Sigma^{-1} A_0 P^{-1} A_0^{-1}$ for any $A_0 \in \textrm{supp}(P_{\A}).$
\end{proposition}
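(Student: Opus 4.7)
}
My strategy is to first identify the minimum value of $\mathcal{R}_\infty$ and then extract a pointwise algebraic characterization of its minimizers. To start, I would observe that the choice $(P,Q) = (\mathbf{I}_d, \Sigma^{-1})$ makes $PA^{-1}\Sigma Q - A^{-1} = 0$ identically in $A$, so $\mathcal{R}_\infty(\mathbf{I}_d, \Sigma^{-1}) = 0$; combined with the nonnegativity of the Frobenius-norm integrand, this shows the infimum is $0$ and is attained. Any minimizer $(P,Q)$ must therefore satisfy $\E_{A \sim P_\A}\bigl[\|(PA^{-1}\Sigma Q - A^{-1})\Sigma^{1/2}\|_F^2\bigr] = 0$. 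Because the integrand is a nonnegative continuous function of $A$ and $\Sigma^{1/2}$ is invertible, this forces the pointwise identity $PA^{-1}\Sigma Q = A^{-1}$ for every $A \in \textrm{supp}(P_\A)$; since $A^{-1}$ is invertible, both $P$ and $Q$ must also be invertible.

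Next I would manipulate this identity into a structural form. Multiplying on the left by $A$ and on the right by $\Sigma^{-1}Q^{-1}$ yields the key relation $APA^{-1} = Q^{-1}\Sigma^{-1}$ for every $A \in \textrm{supp}(P_\A)$. The right-hand side is independent of $A$, so the conjugate $APA^{-1}$ is constant across the support. Specializing to any $A_0 \in \textrm{supp}(P_\A)$ and inverting recovers the formula $Q = \Sigma^{-1}(A_0 P A_0^{-1})^{-1} = \Sigma^{-1} A_0 P^{-1} A_0^{-1}$. The constancy relation $A_1 P A_1^{-1} = A_2 P A_2^{-1}$ for all $A_1, A_2 \in \textrm{supp}(P_\A)$, after a short algebraic rearrangement, is equivalent to the statement that $P$ commutes with every element of the set $\mathcal{S}(P_\A) = \{A_1 A_2^{-1}: A_1, A_2 \in \textrm{supp}(P_\A)\}$.

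The reverse implication is then a direct verification: given $(P,Q)$ with $P$ satisfying the commutation condition and $Q = \Sigma^{-1} A_0 P^{-1} A_0^{-1}$ for some $A_0 \in \textrm{supp}(P_\A)$, I would substitute back into $PA^{-1}\Sigma Q$ and use the commutation relation to reduce the expression to $A^{-1}$ for every $A \in \textrm{supp}(P_\A)$, whence $\mathcal{R}_\infty(P,Q) = 0$. The main obstacle is primarily algebraic book-keeping: carefully tracking the manipulation in the forward direction that converts the constancy of $APA^{-1}$ across the support into the claimed commutation relation with $\mathcal{S}(P_\A)$, and verifying that the formula $Q = \Sigma^{-1} A_0 P^{-1} A_0^{-1}$ is self-consistently independent of the choice of $A_0$—a requirement which is itself equivalent to the commutation condition on $P$, and which in turn underlies the equivalence of the two halves of the characterization.
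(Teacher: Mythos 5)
Your proposal is correct and follows essentially the same route as the paper's proof: establish that the minimum value is $0$ via $(\mathbf{I}_d,\Sigma^{-1})$, upgrade the vanishing expectation to the pointwise identity $PA^{-1}\Sigma Q=A^{-1}$ on $\textrm{supp}(P_{\A})$, solve for $Q$ at a fixed $A_0$, and convert the constancy of $A\mapsto APA^{-1}$ (equivalently $A\mapsto AP^{-1}A^{-1}$ in the paper) into the commutation condition. The only difference is cosmetic: where you invoke the standard fact that a nonnegative continuous integrand with zero expectation vanishes on the support, the paper spells this out via normalized restrictions to $\epsilon$-balls converging weakly to Dirac masses.
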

Assuming Proposition \ref{minimizercharacterization}, we now present the short proof of Theorem \ref{thm: diversitysufficient}.

\begin{proof}[Proof of Theorem \ref{thm: diversitysufficient}]
    Item 1) of Theorem \ref{thm: diversitysufficient} is a direct consequence of Proposition \ref{minimizercharacterization}. To prove item 2), let $\theta_{\ast} = (P_{\ast},Q_{\ast})$ be a minimizer of $\mathcal{R}_{\infty}$. Then Proposition \ref{minimizercharacterization} implies that $P_{\ast} \in \mathcal{C}(\mathcal{S}(P_{\A})).$ Since the centralizer of $\mathcal{S}(P_{\A})$ is trivial by assumption, this implies that $P_{\ast} = c \mathbf{I}_d$ for some $c \in \R \backslash \{0\}.$ Using the characterization of minimizers of $\mathcal{R}_{\infty}$ derived in Proposition \ref{minimizercharacterization}, we have that $Q_{\ast}$ solves the equation $c A^{-1} \Sigma Q_{\ast} = A^{-1}$ for all $A \in \textrm{supp}(P_{\A})$, and therefore $Q = c^{-1} \Sigma^{-1}.$
\end{proof}

We now prove Proposition \ref{minimizercharacterization}.
\begin{proof}[Proof of Proposition \ref{minimizercharacterization}]
    Recall that
    $$ \mathcal{R}_{\infty}(\theta) = \E_{A \sim P_{\A}}[\|(P A^{-1} \Sigma Q - A^{-1})\Sigma^{1/2}\|_F^2], \; \theta = (P,Q),
    $$
    and $\mathcal{M}_{\infty}(P_{\A}) = \textrm{argmin}_{\theta} \mathcal{R}_{\infty}(\theta).$ Let us first prove that for any $P_{\A}$ satisfying Assumption \ref{assumption:taskanddatadistr}, $\theta \in \mathcal{M}_{\infty}(P_{\A})$ if and only if $PA^{-1} \Sigma Q = A^{-1}$ for all $A \in \textrm{supp}(P_{\A}).$ Let us first observe that the minimum value of $\mathcal{R}_{\infty}$ is 0 - this is attained, for instance, at $P = \mathbf{I}_d$ and $Q = \Sigma^{-1}.$ It is clear that if the equality $PA^{-1} \Sigma Q = A^{-1}$ holds over the support of $P_{\A}$, then $\E_{A \sim P_{\A}}[\|(PA^{-1}\Sigma Q - A^{-1})\Sigma^{1/2}\|_F^2] = 0$. Conversely, suppose $(P,Q)$ satisfies $\E_{A \sim P_{\A}}[\|(PA^{-1}\Sigma Q - A^{-1})\Sigma^{1/2}\|_F^2] = 0.$
    Fix $A_0 \in \textrm{supp}(P_{\A})$. If $A_0$ is an atom of $P_{\A}$ then the desired claim is clear. Otherwise, let $\epsilon > 0$ be small enough that the $\epsilon$-ball around $A_0$ is contained in $\textrm{supp}(P_{\A})$, and let $p_{A,\epsilon}(A_0)$ denote the normalized restriction of $P_{\A}$ to the ball of radius $\epsilon$ centered about $A_0$. Then the equality $\E_{A \sim P_{\A}}[\|(PA^{-1}\Sigma Q - A^{-1})\Sigma^{1/2}\|_F^2] = 0$ implies that $$\E_{A \sim p_{A,\epsilon}(A_0)}[\|(PA^{-1}\Sigma Q - A^{-1})\Sigma^{1/2}\|_F^2] = 0$$ for each $\epsilon > 0$. Since $p_{A,\epsilon}(A_0)$ converges weakly to the Dirac measure centered at $A_0$, we have that $\|(PA_0^{-1} \Sigma Q - A_0^{-1})\Sigma^{1/2}\|_F^2 = 0$, and hence that $P A_0^{-1} \Sigma Q = A_0^{-1}.$ As $A_0$ was arbitrary, this concludes the first part of the proof.

    Now, suppose $\theta = (P,Q)$ is a minimizer of $\mathcal{R}_{\infty}.$ By the previous argument, this is equivalent to the system of equations $PA^{-1} \Sigma Q = A^{-1}$ holding simultaneously for all $A \in \textrm{supp}(P_{\A}).$ In particular, for any fixed $A_0 \in \textrm{supp}(P_{\A})$, the equation $P A_0^{-1} \Sigma Q = A_0^{-1}$ can be solved for $ Q$, yielding $Q = \Sigma^{-1} A_0 P^{-1} A_0^{-1}.$ Since the matrix $Q$ is constant, this implies that the function $A \mapsto A P^{-1} A^{-1}$ is a constant on the support of $P_{\A}.$ We have therefore shown that the minimizers of $\mathcal{R}_{\infty}$ can be completely characterized as $\{(P, \Sigma^{-1} A_0 P^{-1} A_0^{-1}): P \in \R^{d \times d}\},$ where $A_0$ is any element of $\textrm{supp}(P_{\A})$. In addition, the fact that the function $A \mapsto A P^{-1} A^{-1}$ is constant on the support of $P_{\A}$ implies that $P$ commutes with all products of the form $\{A_1 A_2^{-1}: A_1, A_2 \in \textrm{supp}(P_{\A})\}$.
\end{proof}

Finally, we present the proof of Theorem \ref{thm: indomaingenoperator}.

\begin{proof}[Proof of Theorem \ref{thm: indomaingenoperator}]
    By an application of the triangle inequality, we have
    \begin{align*}
        &\E_{f_1, \dots, f_{m+1} \sim P_{\mx}, T \sim P_{\T}} \left[\left\|\md\left(y_{m+1}^{\widehat{\theta}}\right) - T(f_{m+1}) \right\|_{\my}^2 \right] \\ 
        &\leq 2 \left(\E \left[ \left\|T_{\mathcal{E},\mathcal{D}}(f_{m+1}) - T(f_{m+1}) \right\|_{\my}^2 + \left\|\md\left(y_{m+1}^{\widehat{\theta}}\right) - T_{\mathcal{E},\mathcal{D}}(f_{m+1}) \right\|_{\my}^2  +  \right] \right).
    \end{align*}
    The first term represents the discretization error due to projecting the inputs onto a finite-dimensional subspace, while the second term represents the statistical error of learning the finite-dimensional representations. For the discretization error, we have
    $$ \E \left[ \left\|T_{\mathcal{E},\mathcal{D}}(f_{m+1}) - T(f_{m+1}) \right\|_{\my}^2 \right] \leq \epsilon_{\mathcal{E},\mathcal{D}}^2
    $$
    by assumption. For the statistical error, we have
    \begin{align*}
      \E \left[ \left\|\md\left(y_{m+1}^{\widehat{\theta}}\right) - T_{\mathcal{E},\mathcal{D}}(f_{m+1}) \right\|_{\my}^2 \right] &= \E \left[ \left\|\md\left(y_{m+1}^{\widehat{\theta}}  - (T \circ \mathcal{E})(f_{m+1}) \right) \right\|_{\my}^2 \right] \\
      &\leq C_{\mathcal{D}}^2 \E \left[ \left\|\md\left(y_{m+1}^{\widehat{\theta}}  - (T \circ \mathcal{E})(f_{m+1}) \right) \right\|^2 \right],
    \end{align*}
    where the norm in the second line is the Euclidean norm. The expectation on the second line is simply the in-domain generalization error for learning the family of matrices $\{T \circ \mathcal{E}: T \in \mathcal{T}\}$ in context, given by $\mathcal{R}_m(\widehat{\theta})$. We conclude the proof.
\end{proof}

\section{Numerical results}
In this section, we numerically evaluate the in-context learning performance of linear transformers for linear systems. In \ref{subsec:RM}, we consider linear systems defined by random matrices, and in \ref{subsec:Elliptic}, we extend our study to the in-context operator learning for linear systems obtained via Galerkin discretizations of the linear elliptic PDE \eqref{eqn: ellipticpde}. We measure the generalization error with either $\ell^2$ error in the random matrix setting or $H^1$-error in the PDE setting. To assess the convergence rate of the generalization error, we use the {\em shifted relative error}, defined as the ratio between the difference of the test error and the population error (estimated using a sufficiently large amount of data) and the ground truth. The code for the numerical experiments is available at: \url{https://github.com/LuGroupUMN/ICL_Linear_Systems}.


\subsection{Random matrix}\label{subsec:RM}
Consider the linear system $Ay=x$, where the matrix $A \in \mathbb{R}^{d\times d}$ is defined as $A = Q D Q^T$. Here $Q$ is a random orthonormal matrix obtained from the QR decomposition of a random matrix. We denote by $\mathbf{U}_{d}[a,b]$ the distribution of $d \times d$ diagonal matrices with whose entries are independently sampled from the uniform distribution $U[a,b]$, that is $D \sim \mathbf{U}_{d}[a,b]$ as $D= \diag(\lambda_1,\dots,\lambda_{d}), ~
\lambda_i \overset{\mathrm{iid}}{\sim} U[a,b]$. The vector $y$ is sampled from a multivariate normal distribution $\mathcal{N}(0, \Sigma_d(\rho))$, where $\Sigma_d(\rho)$ denotes an equal-correlated covariance matrix, which is defined as $\Sigma_d(\rho) = (1-\rho)\bI_d + \rho F_d$, with $F_d\in \mathbb{R}^{d\times d}$ being a matrix of all ones.

\subsubsection{In-domain generalization}\label{subsec:in_domain_RM} We first investigate the scaling law stated in \ref{thm:indomaingen}. During training, we let $D \sim \mathbf{U}_d[1,2]$ and $\by \sim \mathcal{N}(0, \Sigma_d(0)) = \mathcal{N}(0, I_d)$. The numerical results presented in \ref{fig:scaling_RM} indicate scaling law of the in-domain generation error, measured by the shifted $\ell$-error, scales like $O(n
^{-2})$ and $O(m^{-1})$ with respect to the prompt sizes   $n$ and $m$ respectively, aligning well with  \ref{thm:indomaingen}.  \ref{fig:scaling_RM} also shows a fast rate $O(N^{-1})$ with respect to the task size $N$, suggesting that the structural condition  \ref{eq:fstar} is fulfilled, although its rigorous verification is still open. These numerical results collectively validate that, when training and test distributions coincide, the linear transformer can effectively learn from context, provided that the training prompt length $n$, the number of tasks, and the inference prompt length $m$ are sufficiently large.
\begin{figure}[htbp]
  \centering
  \includegraphics[width=\linewidth]{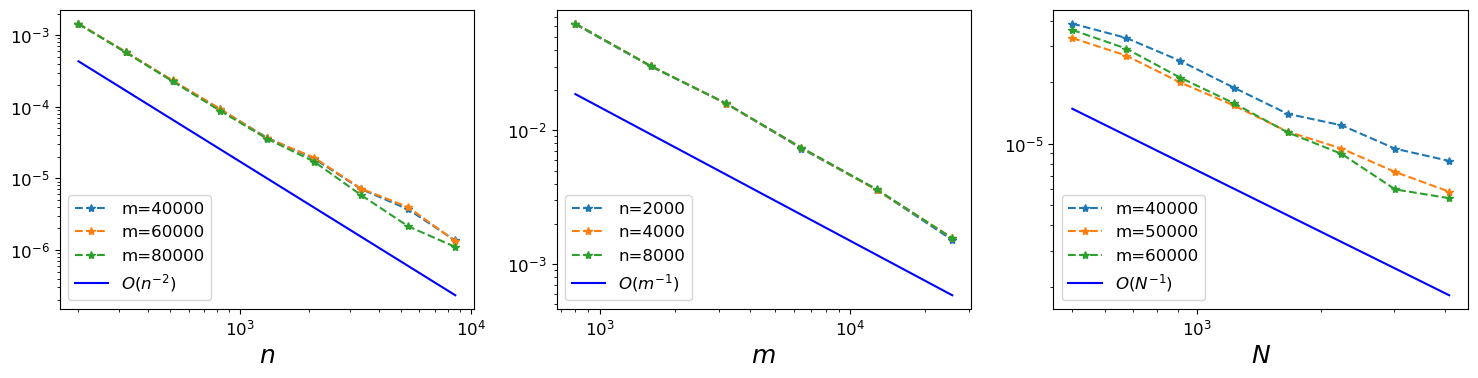}
  \caption{In-domain generalization with $d=10$. The left plot shows the relative $L^2$ error with respect to various training prompt length $n$ using $N=20000$ training tasks. The middle plot shows the relative $L^2$ error with respect to various inference prompt length $m$ using $N=20000$ training tasks. The right plot shows relative $L^2$ error with respect to various number of training tasks $N$ using a fixed training prompt length $n=10000$.}
  \label{fig:scaling_RM}
\end{figure}

\subsubsection{Out-of-domain generalization with diversity}\label{subsec:RM_diverse} We evaluate the out‑of‑domain generalization of the linear transformer when trained on a diverse task distribution (see \ref{def:diverse}). During training, we sample $D \sim \mathbf{U}_d[1,2]$ and $\by \sim \mathcal{N}(0, \Sigma_d(0)) = \mathcal{N}(0, \bI_d)$. When evaluating on downstream tasks, we assess performance under both task shifts and covariate shifts, see results in \ref{fig:RM_ood_diverse}. In the left plot, we vary the task distribution $\mathbf{U}_d[a,b]$ across various intervals $(a,b)$, and observe that the OOD generalization error curves (dash line) almost coincide with the in-domain generalization error curves (solid line) as the inference prompt length $m$ increases. This result aligns with \ref{thm: taskshift}, indicating the linear transformer generalizes well even under task distribution shifts. On the contrary, the right plot shows the error when the input undergoes a  covariate shift with various values of $\rho$ in the covariance $\Sigma(\rho)$, and suggests that the pretrained transformer fails to generalize in this case, which is consistent with the established bound in \ref{thm: covariateshift}.

\begin{figure}[H]
  \centering
  \includegraphics[width=\linewidth]{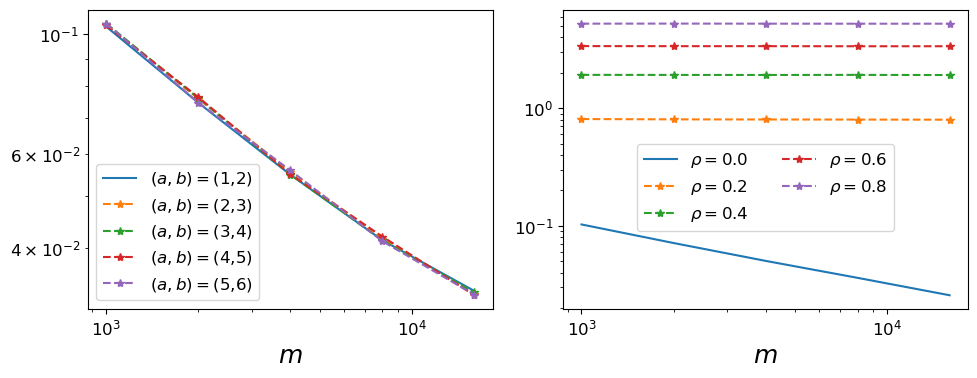}
  \caption{Out-of-domain generalization with $d=10$, $n=2000$ and $N=5000$. Training is performed with $D \sim \mathbf{U}_d[1, 2]$ and $y \sim \mathcal{N}(0, \Sigma_d(0)) = \mathcal{N}(0, \bI_d)$. Testing involves different settings of $\mathbf{U}_d[a, b]$ and $\mathcal{N}(0, \Sigma_d(\rho))$. The left plot shows the relative $L^2$ error under shifts in $V(x)$: the solid blue curve represents the training distribution, while dashed lines indicate shifted test distributions. The right plot presents analogous results for shifts in $f(x)$, where blue curve represents the training distribution and dashed lines indicate shifted test distributions.}
  \label{fig:RM_ood_diverse}
\end{figure}

\subsubsection{OOD generalization without diversity}\label{subsec: ood_not_diverse}  We also examine how the lack of task diversity during training impacts the OOD generalization. We train a linear transformer on task distribution of constant matrices, i.e. $D = c \bI_d, c \sim U[1,2]$ and $\by \sim \mathcal{N}(0, \bI_d)$, and then evaluate it on more diverse task distribution $\mathbf{U}_d[a,b]$ across various intervals $(a,b)$, and the results are presented in \ref{fig:RM_not_diverse}. Although the training loss admits infinitely many minimizers in the large-sample limit, namely any diagonal matrices pair $(K, K^{-1})$ with $K \sim \mathbf{U}_d[1, 2]$, only minimizers of the form $(c\bI_d, c^{-1}\bI_d)$, for $c \neq 0,$ generalize to more diverse downstream tasks. In the left plot of \ref{fig:RM_not_diverse}, a transformer initialized near $(\bI_d, \bI_d)$ generalizes to the downstream tasks, while the transformer initialized near $(K, K^{-1})$ (right plot) exhibits degraded performance. These results validate \ref{thm: oodgennecessary} and highlight the importance of task diversity in achieving OOD generalization. 

\begin{figure}
  \centering
  \includegraphics[width=.9\linewidth]{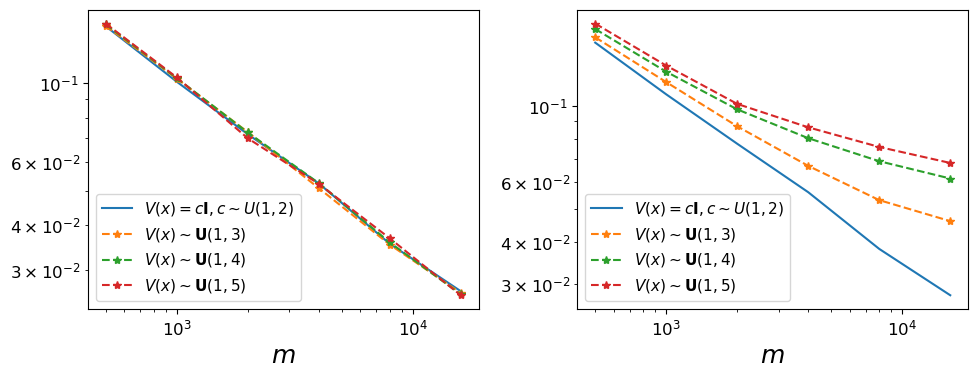}
  \caption{Diversity test with dimension $d=10$, training prompt length $n=2000$, number of tasks $N=5000$. Both plots show the relative $L^2$ error with respect to varying testing prompt length $m$. The left plot corresponds to a transformer initialized near the minimizer $(P, Q) = (\mathbf{I}_d, \mathbf{I}_d)$, while the right plot uses initialization $(P, Q) = (K, K^{-1})$, where $K$ is a diagonal matrix with entries sampled from the uniform distribution $U(1, 2)$. }
  \label{fig:RM_not_diverse}
\end{figure}

\subsection{Linear elliptic PDEs}\label{subsec:Elliptic}
In this section, we use a linear transformer to perform in-context operator learning for the linear elliptic PDE \eqref{eqn: ellipticpde}. We discretize the elliptic problem \ref{eqn: ellipticpde} via the Ritz–Galerkin method, yielding the finite-dimensional system $A u = f$, where we take both trial and test spaces as $\textrm{span} \left(\{\phi_k\}_{k=1}^{d} \right)$, with $\phi_k(x) = \sin (k \pi x),~k=1, \cdots, d$. Here the stiffness matrix $A$ results from projecting the differential operator $Lu = -\nabla(a(x) \nabla u(x)) + V(x) u(x)$ onto the sine basis, and $f$ collects the projections of source term $f(x)$. Below we evaluate the in-context operator learning capability of the linear transformer under this setting.

\subsubsection{In-domain generalization}
First, we numerically validate the scaling law stated in \ref{cor: generalizationellipticpde}. We fix $a(x)\equiv 0.1$, let $V(x) = e^{g(x)}, ~ g(x) \sim \mathcal{N}(0, 4(-\Delta + \alpha_1 I_d)^{-\beta_1})$ with $\alpha_1=\beta_1 = 2$, and let the source term $f(x)$ to be a white noise. As shown in \ref{fig:scaling_PDE}, the scaling law of the in-domain generation error with respect to $H^1$-norm are $O(n^{-2})$ and $O(m^{-1})$, which align with \ref{cor: generalizationellipticpde}. We observe a faster convergence rate in $N$, for the reason discussed in Section \ref{subsec:in_domain_RM}. These results validate that, when training and test distributions match, the linear transformer performs in-context operator learning for the linear elliptic PDE.

\begin{figure}[H]
  \centering
  \includegraphics[width=\linewidth]{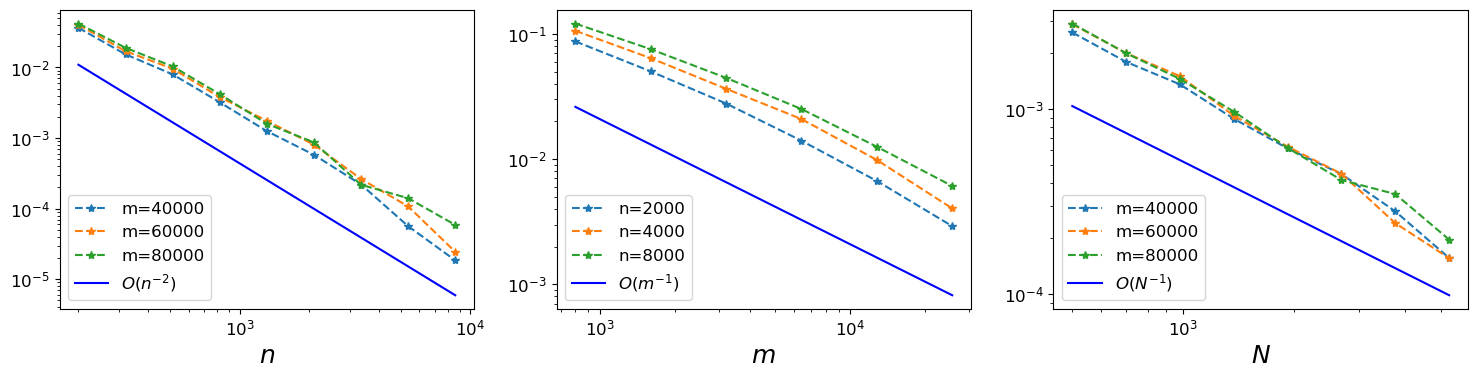}
  \caption{In-domain generalization test of elliptic PDE \eqref{eqn: ellipticpde} with $d=32$. The left plot shows the relative $H^1$ error with respect to various training prompt length $n$ using $N=20000$ training tasks. The middle plot shows the relative $H^1$ error with respect to various inference prompt length $m$ using $N=20000$ training tasks. The right plot shows relative $H^1$ error with respect to various number of training tasks $N$ using a fixed training prompt length $n=10000$.}
  \label{fig:scaling_PDE}
\end{figure}

\subsubsection{Out-of-domain generalization with diversity}\label{subsec:PDE_diverse} We evaluate the out‑of‑domain generalization of the linear transformer on PDE problem. We let $a(x) = 0.1 e^{h(x)}, ~ ~ h(x) \sim \mathcal{N}(0, (-\Delta + \alpha_1 I_d)^{-\beta_1})$; $V(x) = e^{g(x)}, ~ ~ g(x) \sim \mathcal{N}(0, 4(-\Delta + \alpha_2 I_d)^{-\beta_2})$ and $f(x) \sim \mathcal{N}(0, (-\Delta + \alpha_3 I_d)^{-\beta_3})$. All covariance hyperparameters are set to $\alpha_1=\beta_1=\alpha_2=\beta_2=\alpha_3=\beta_3=2$ during training. For inference, we assess performance under different task shifts and covariate shifts, and the results are presented in \ref{fig:PDE_ood_diverse}. For the task shift in $a(x)$ (first column), we vary $(\alpha_1,\beta_1)$ and for the task shift in $V(x)$ (second column), we vary $(\alpha_2,\beta_2)$. In both task shifts, the out-of-domain relative $H^1$ error curve (dash lines) decrease following the same rate of in-domain generalization error curve (solid line) as the inference prompt length increases, which is in agreement with \ref{thm: taskshift}, indicating that the linear transformer generalize under task shifts. In contrast, when introducing covariate shifts on $f(x)$ by varying $(\alpha_3,\beta_3)$, the the pretrained transformer fails to generalize, matching the result of \ref{thm: covariateshift}.

We also provide additional experiments on testing the OOD generalization when the training tasks are not diverse in \ref{sec:addnum}.

\begin{figure}[H]
  \centering
  \includegraphics[width=.8\linewidth]{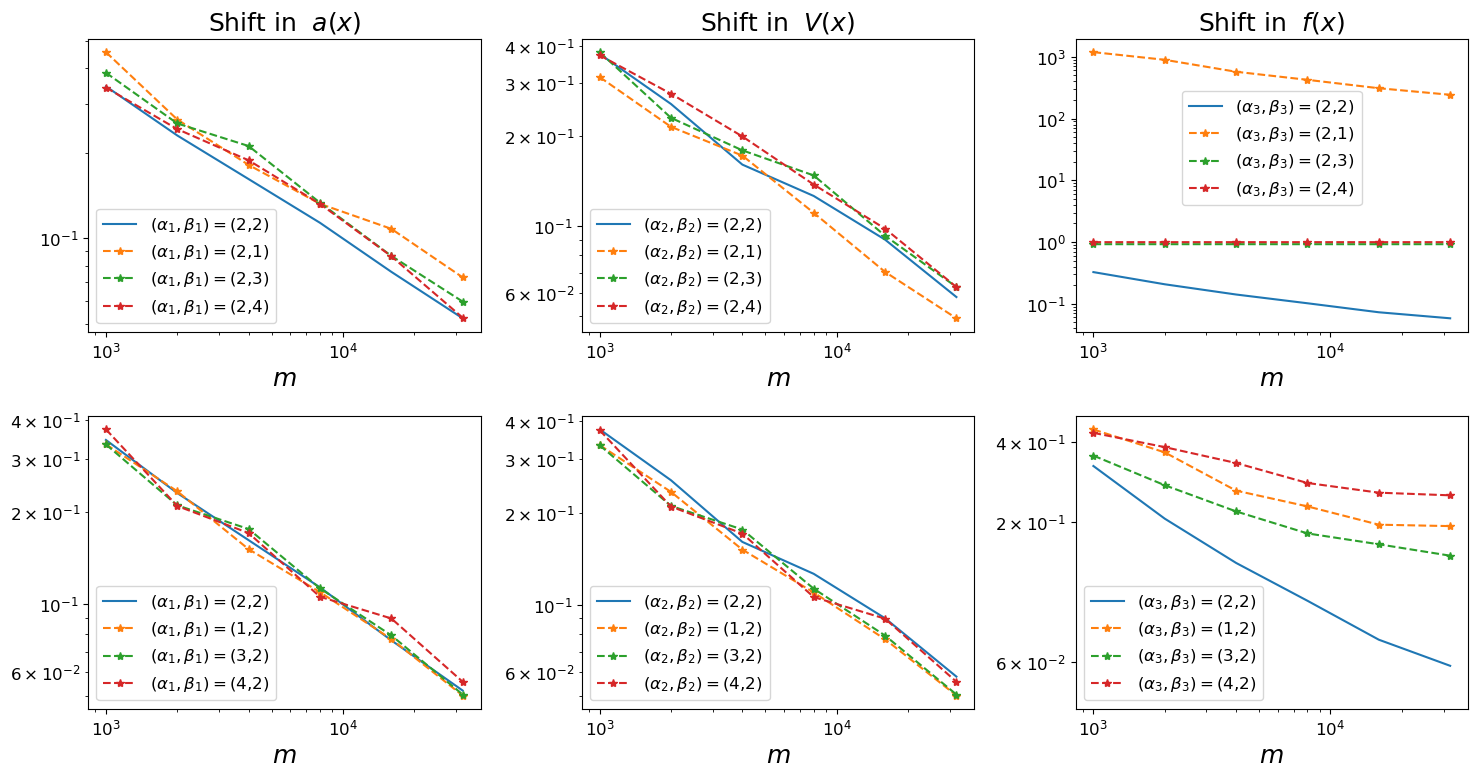}
  \caption{Out-of-domain generalization test with $d=32$, $n=2000$ and $N=20000$. Training is performed with $(\alpha_1, \beta_1)= (2, 2)$, $(\alpha_2, \beta_2)= (2, 2)$ and $(\alpha_3, \beta_3)= (2, 2)$. Each columns shows the relative $H^1$ error under the distribution shift on $a(x)$, $V(x)$ and $f(x)$, respectively, with respect to varying inference prompt length $m$.}
  \label{fig:PDE_ood_diverse}
\end{figure}

\section{Conclusion}
We studied the ability of single-layer linear transformers to solve linear systems in-context. We proved quantitative bounds on the in-domain generalization error of pre-trained transformers in terms of the various sample sizes. Concerning OOD generalization, our analysis revealed that the behavior of pre-trained transformers under task shifts is more subtle than the results for learning scalar-valued functions would suggest. To control the OOD generalization error under task shifts, we derived a novel notion of task diversity which defines a necessary and sufficient condition for pre-trained transformers to achieve OOD generalization. We also proved sufficient conditions on the distribution $P_{\A}$ under which task diversity holds. Finally, we leveraged our results to prove error bounds for in-context operator learning. 

Several important questions remain open for future investigation. First, it remains to verify the task diversity conditions in cases of interest. Second, we would like to study the in-context learning of nonlinear vector-valued functions, particularly those arising from discretizations of nonlinear operators. In this setting, it is important to characterize the role that depth and nonlinearity can play in the ability of transformers to approximate more complex functions, and to develop an appropriate notion of task diversity. Finally, while most of the existing results on in-context learning hold only for IID data, many scientific problems are inherently time-dependent. In these settings, the behavior of in-context learning is quite unexplored. We leave these directions to future work.

\bibliographystyle{abbrv}
\bibliography{refs}

\appendix
\section{Proofs}

\subsection{Proofs for Section \ref{subsec:indomain}}.
We provide a rigorous proof for Theorem \ref{thm:indomaingen}, whose structure was outlined in Section \ref{sec: pfsketches}. First, we introduce some notation. Recall that given parameters $\theta = (P,Q)$ the transformer prediction with respect to the prompt
$$ Z = \begin{pmatrix}
    x_1 & \dots & x_n & x_{n+1} \\ y_1 & \dots & X_n & 0
\end{pmatrix}
$$
is given by
    $$ y^{\theta}_{n+1} = P \cdot \frac{1}{n} \sum_{k=1}^{n} y_{k} x_{k}^T \cdot  Q x_{n+1} = P A^{-1} X_{n} Q x_{n+1}.
    $$
    Define the individual loss $\ell_{\theta}$ as a function of the task $A$, the covariance $X_n$, and the query $x_{n+1}$ according to 
    $$ \ell_{\theta}(A,X_n,x_{n+1}) = \left\| P A^{-1} X_{n} Q x_{n+1} - A^{-1} x_{n+1}\right\|^2.
    $$
Then the population and empirical risk functionals can be expressed as
$$ \mathcal{R}_n(\theta) = \E_{x_1, \dots, x_{n+1} \sim \mathcal{N}(0,\Sigma), A \sim P_{\A}} \left[\ell_{\theta}(A,X_n,x_{n+1}) \right]
$$
and 
$$ \mathcal{R}_{n,N}(\theta) = \frac{1}{N} \sum_{i=1}^{N} \ell_{\theta}(A_i,X_{n,i},x_{n+1,i}).
$$

Given iid random vectors $x_1, \dots, x_{n+1} \sim \mathcal{N}(0,\Sigma)$ and a deterministic parameter $t > 0$, define the event
\begin{align*}
    \mathcal{S}_t(x_1, \dots, x_{n+1}) = \left\{\|x_{n+1}\| \leq \sqrt{\trace(\Sigma)} + t, \; \|X_n\|_{\op} \leq \|\Sigma\|_{\op}\left( 1 + t + \sqrt{\frac{d}{n}}\right) \right\}.
\end{align*}
Define $\mathcal{R}_n^{t}(\theta)$ as the $t$-truncated version of the empirical risk:
\begin{align*}
    \mathcal{R}_n^t(\theta) = \E_{x_1, \dots, x_{n+1} \sim \mathcal{N}(0,\Sigma), \; A \sim P_A} \left[ \ell_{\theta}(A,X_n,x_{n+1}) \cdot \mathbf{1}_{\mathcal{S}_t}(x_1, \dots, x_{n+1}) \right].
\end{align*}
Define the $t$-truncated empirical risk $\mathcal{R}_{n,N}^{t}$ similarly. We have the following error decomposition lemma for the empirical risk minimizer.

\begin{lemma}\label{lem: generrordecomp}
    Let $\widehat{\theta} \in \textrm{arg} \min_{\|\theta\| \leq M} \mathcal{R}_{n,N}(\theta).$ Fix $t > 0$ and set $\theta^{\ast} \in \textrm{arg} \min_{\|\theta\| \leq M} \mathcal{R}_n^t(\theta).$ Then we have
    \begin{align*}
        \mathcal{R}_{m}(\widehat{\theta}) &\leq 2 \sup_{\|\theta\| \leq M} \left| \left(\mathcal{R}_n-\mathcal{R}_m \right)(\theta)\right| + \sup_{\|\theta\| \leq M} \left(\mathcal{R}_n - \mathcal{R}_n^t \right)(\theta) + \left(\mathcal{R}_{n}^t(\widehat{\theta}) - \mathcal{R}_n^{t}(\theta^{\ast}) \right) \\
        &+ \inf_{\|\theta^{\ast}\| \leq M} \mathcal{R}_m(\theta^{\ast}), 
    \end{align*}
\end{lemma}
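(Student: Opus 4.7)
The plan is a direct additive telescoping of $\mathcal{R}_m(\widehat{\theta})$, chosen so that each adjacent pair of terms collapses to one of the four summands on the right-hand side. Specifically, I would begin from the identity
\[
\mathcal{R}_m(\widehat{\theta}) = \bigl[\mathcal{R}_m(\widehat{\theta}) - \mathcal{R}_n(\widehat{\theta})\bigr] + \bigl[\mathcal{R}_n(\widehat{\theta}) - \mathcal{R}_n^t(\widehat{\theta})\bigr] + \bigl[\mathcal{R}_n^t(\widehat{\theta}) - \mathcal{R}_n^t(\theta^{\ast})\bigr] + \mathcal{R}_n^t(\theta^{\ast}),
\]
and then bound each bracket separately.

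The first bracket is a pointwise difference of $\mathcal{R}_m$ and $\mathcal{R}_n$ at $\widehat{\theta}$, and since $\|\widehat{\theta}\| \leq M$ it is at most $\sup_{\|\theta\|\leq M} |(\mathcal{R}_n - \mathcal{R}_m)(\theta)|$. The second bracket equals $\E[\ell_{\widehat{\theta}}(A,X_n,x_{n+1})(1-\mathbf{1}_{\mathcal{S}_t})]$, which is nonnegative because $\ell_\theta \geq 0$ and $\mathbf{1}_{\mathcal{S}_t}\leq 1$, so no absolute value is needed and it is dominated by $\sup_{\|\theta\|\leq M}(\mathcal{R}_n - \mathcal{R}_n^t)(\theta)$. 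The third bracket is already one of the summands appearing on the right-hand side of the lemma.

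The residual term $\mathcal{R}_n^t(\theta^{\ast})$ is where the second copy of $\sup|(\mathcal{R}_n - \mathcal{R}_m)|$ enters. For any competitor $\bar{\theta}$ with $\|\bar{\theta}\|\leq M$, the optimality of $\theta^{\ast}$ for $\mathcal{R}_n^t$ together with the pointwise inequality $\mathcal{R}_n^t(\cdot) \leq \mathcal{R}_n(\cdot)$ (immediate from $\ell_\theta \geq 0$ and $\mathbf{1}_{\mathcal{S}_t}\leq 1$) gives
\[
\mathcal{R}_n^t(\theta^{\ast}) \leq \mathcal{R}_n^t(\bar{\theta}) \leq \mathcal{R}_n(\bar{\theta}) \leq \mathcal{R}_m(\bar{\theta}) + \sup_{\|\theta\|\leq M} \bigl|(\mathcal{R}_n - \mathcal{R}_m)(\theta)\bigr|.
\]
Taking the infimum over $\bar{\theta}$ in the ball $\{\|\theta\|\leq M\}$ yields a second copy of the supremum, which combines with the first-bracket bound to produce the factor of $2$, plus the approximation term $\inf_{\|\bar{\theta}\|\leq M}\mathcal{R}_m(\bar{\theta})$.

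There is no genuine obstacle here — the lemma is a pure bookkeeping identity — and the only real design choice is where to centre the truncation step in the telescope. Placing it around $\widehat{\theta}$ rather than around $\theta^{\ast}$ is deliberate: it ensures the third bracket appears as the excess empirical risk $\mathcal{R}_n^t(\widehat{\theta}) - \mathcal{R}_n^t(\theta^{\ast})$ of the \emph{truncated} (hence bounded) loss class, which is exactly the quantity amenable to the subsequent Rademacher / local Rademacher complexity bound. The first two bracket-bounds then survive as uniform estimates whose ingredients are already standard: a concentration-of-measure control on the gap between the $n$- and $m$-sample population risks, and a Gaussian tail bound for the truncation event $\mathcal{S}_t^c$.
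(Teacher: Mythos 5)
Your proof is correct and is essentially the same argument as the paper's: the paper writes the telescope with six terms, the last three of which collapse to your residual $\mathcal{R}_n^t(\theta^{\ast})$, and it bounds them with exactly the same ingredients (optimality of $\theta^{\ast}$, the pointwise inequality $\mathcal{R}_n^t \leq \mathcal{R}_n$, and a second copy of the uniform $n$-versus-$m$ gap). Your chain of inequalities for the residual and your remarks on why the truncation is centred at $\widehat{\theta}$ match the paper's reasoning, so there is nothing to add.
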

\begin{proof}
    For any $t > 0$ and $\theta^{\ast}$ with $\|\theta^{\ast}\| \leq M$, decompose the error as
    \begin{align*}
        \mathcal{R}_m(\widehat{\theta}) &= \left(\mathcal{R}_m - \mathcal{R}_n \right)(\widehat{\theta}) + \left(\mathcal{R}_n - \mathcal{R}_n^t \right)(\widehat{\theta}) + \left( \mathcal{R}_n^t(\widehat{\theta}) - \mathcal{R}_{n}^t(\theta^{\ast}) \right) \\ 
        &+ \left(\mathcal{R}_n^{t} - \mathcal{R}_n \right)(\theta^{\ast}) + \left(\mathcal{R}_n - \mathcal{R}_m \right) (\theta^{\ast}) + \mathcal{R}_m(\theta^{\ast})
    \end{align*}
    The first and fifth terms can be bounded by $\sup_{\|\theta\| \leq M} \left| \left(\mathcal{R}_n-\mathcal{R}_m \right)(\theta)\right|$, and the second term can be be bounded by $\sup_{\|\theta\| \leq M} \left(\mathcal{R}_n - \mathcal{R}_n^t \right)(\theta).$ The fourth term is non-positive since $\mathcal{R}_n^t(\cdot) \leq \mathcal{R}_n(\cdot).$ This proves the claim.
    
\end{proof}

Having established a decomposition of the in-domain generalization error in Lemma \ref{lem: generrordecomp}, we now seek to bound each term individually over the course of several lemmas. We begin by bounding the context-mismatch error $\sup_{\|\theta\| \leq M} \left| \left(\mathcal{R}_n-\mathcal{R}_m \right)(\theta)\right|.$

\begin{lemma}\label{lem:contextmismatchbound}[Context-mismatch error bound]
    The bound 
    \begin{align*}
        \sup_{\|\theta\| \leq M} \left| \left(\mathcal{R}_n-\mathcal{R}_m \right)(\theta)\right| \leq 2M^4 c_{\A}^2 \max \left(\trace(\Sigma), \|\Sigma\|_{\op}^2 \right) \trace(\Sigma) \left| \frac{1}{n} - \frac{1}{m} \right|
    \end{align*}
    holds.
\end{lemma}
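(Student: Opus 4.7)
The plan is to decompose the squared loss $\ell_\theta(A,X_n,x_{n+1}) = \|PA^{-1}X_nQx_{n+1} - A^{-1}x_{n+1}\|^2$ by expanding the square, and observe that the only term whose expectation depends on $n$ is the fully quadratic one. Indeed, the constant term $\|A^{-1}x_{n+1}\|^2$ is independent of $X_n$, and the cross term $\langle PA^{-1}X_nQx_{n+1}, A^{-1}x_{n+1}\rangle$, after integrating out $x_{n+1}$ (which is independent of $(X_n,A)$) using $\E[x_{n+1}x_{n+1}^T] = \Sigma$, becomes $\trace(A^{-T}PA^{-1}\E[X_n]Q\Sigma)$. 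Since $\E[X_n]=\Sigma$ for every $n$, this is $n$-independent as well. Therefore $\mathcal{R}_n(\theta) - \mathcal{R}_m(\theta)$ equals the difference of the quadratic terms alone.

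Setting $M = Q\Sigma Q^T$, integration over $x_{n+1}$ in the quadratic term yields $\E_A[\trace(PA^{-1}\E[X_nMX_n]A^{-T}P^T)]$, so the next step is to compute $\E[X_nMX_n]$. Splitting the double sum $X_nMX_n = \frac{1}{n^2}\sum_{i,j} x_ix_i^TMx_jx_j^T$ into diagonal and off-diagonal contributions, using independence for $i\neq j$, and applying Isserlis's formula $\E[xx^TMxx^T] = \Sigma M\Sigma + \Sigma M^T\Sigma + \trace(M\Sigma)\Sigma$ on the diagonal, I obtain
\[
\E[X_nMX_n] \;=\; \Sigma M\Sigma + \frac{1}{n}\bigl(\Sigma M^T\Sigma + \trace(M\Sigma)\Sigma\bigr),
\]
which for symmetric $M=Q\Sigma Q^T$ gives a correction of order $1/n$. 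Taking the difference at prompt lengths $n$ and $m$ produces
\[
\mathcal{R}_n(\theta)-\mathcal{R}_m(\theta) \;=\; \Bigl(\tfrac{1}{n}-\tfrac{1}{m}\Bigr)\,\E_A\!\left[\trace\!\bigl(PA^{-1}(\Sigma Q\Sigma Q^T\Sigma + \trace(Q\Sigma Q^T\Sigma)\Sigma)A^{-T}P^T\bigr)\right].
\]

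It remains to bound the trace inside the expectation uniformly for $\|\theta\|\leq M$. Each of the two summands is nonnegative (a trace of a PSD matrix), so I can estimate them separately. The first equals $\|PA^{-1}\Sigma Q\Sigma^{1/2}\|_F^2 \leq \|PA^{-1}\Sigma Q\|_{\op}^2 \trace(\Sigma) \leq M^4 c_{\A}^2\|\Sigma\|_{\op}^2\trace(\Sigma)$, using sub-multiplicativity of the operator norm and Assumption \ref{assumption:taskanddatadistr}. The second factors as $\trace(Q\Sigma Q^T\Sigma)\cdot\|PA^{-1}\Sigma^{1/2}\|_F^2 \leq M^4 c_\A^2 \|\Sigma\|_{\op}\trace(\Sigma)^2$. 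Summing, and using $\|\Sigma\|_{\op}\leq \trace(\Sigma)$ (since $\Sigma\succeq 0$) to collapse the two bounds into a single $\max(\trace(\Sigma),\|\Sigma\|_{\op}^2)\trace(\Sigma)$ factor, yields the claim with the stated constant $2$.

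The main obstacle is bookkeeping rather than conceptual: one must apply Isserlis' identity correctly with a non-identity covariance, keep track of the symmetry of $M$, and then choose norm-vs.-trace inequalities in just the right way so that the remaining constants fit the precise form stated. The structural observation that confines all $n$-dependence to a single quadratic term is what makes the final bound tight and produces the $|1/n-1/m|$ scaling.
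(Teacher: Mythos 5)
Your proposal is correct and follows essentially the same route as the paper: both reduce $\mathcal{R}_n-\mathcal{R}_m$ to the difference of the $O(1/n)$ terms in the exact expression for the population risk, which rests on the identity $\E[X_nKX_n]=\frac{n+1}{n}\Sigma K\Sigma+\frac{\trace_{\Sigma}(K)}{n}\Sigma$ (you derive it via Isserlis, the paper via a coordinate computation in its auxiliary lemma), and then bound the two resulting trace terms by operator-norm/trace estimates. Your final "collapse" of $\|\Sigma\|_{\op}\trace(\Sigma)^2$ into the factor $\max(\trace(\Sigma),\|\Sigma\|_{\op}^2)\trace(\Sigma)$ does not literally follow from $\|\Sigma\|_{\op}\leq\trace(\Sigma)$ unless $\|\Sigma\|_{\op}\leq 1$, but the paper's own proof asserts the identical constant without justification, and the discrepancy is immaterial since only the $|1/n-1/m|$ scaling with constants polynomial in the problem parameters is used downstream.
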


\begin{proof}
    Denote $\theta = (P,Q)$. Recall that, as a direct consequence of Lemma \ref{lem: popriskexpression}, we have
    \begin{align*} \mathcal{R}_n(\theta) &= \E_A \big[ \trace(A^{-1} \Sigma A^{-1}) - \trace(PA^{-1}\Sigma Q \Sigma A^{-1}) - \trace(A^{-1} \Sigma Q^T \Sigma A^{-1} P^T) \\ &+ \frac{n+1}{n} \trace(P A^{-1} \Sigma Q \Sigma Q^T \Sigma A^{-1} P^T) + \frac{\trace_{\Sigma}(Q \Sigma Q^T)}{n} \trace(P A^{-1} \Sigma A^{-1} P^T) \big],
    \end{align*}
    An analogous expression holds for $\mathcal{R}_m(\theta).$ Therefore, for $\theta$ satisfying $\|\theta\| = \max(\|P\|_{\textrm{op}}, \|Q\|_{\textrm{op}}) \leq M,$ we have the bound
    \begin{align*}
        \Big| \mathcal{R}_m(\theta) - \mathcal{R}_n(\theta) \Big| &= \Big| \frac{1}{n} - \frac{1}{m} \Big| \Big|\E_{A} \Big[ \trace(P A^{-1} \Sigma Q \Sigma Q^T \Sigma A^{-1} P^T)+ \trace_{\Sigma}(Q \Sigma Q^T) \trace(P A^{-1} \Sigma A^{-1} P^T) \Big] \Big| \\
        &\leq \Big| \frac{1}{n} - \frac{1}{m} \Big| \cdot 2M^4 c_{\A}^2 \max(\trace(\Sigma),\|\Sigma\|_{\textrm{op}}^2) \trace(\Sigma).
    \end{align*}
\end{proof}

Next, we bound the truncation error $\sup_{\|\theta\| \leq M} \left(\mathcal{R}_n - \mathcal{R}_n^t \right)(\theta).$

\begin{lemma}\label{lem: truncerrorbound}[Truncation error bound]
    The truncation error $\sup_{\|\theta\| \leq M} \left(\mathcal{R}_n - \mathcal{R}_n^t \right)(\theta)$ is bounded from above by
    \begin{align*}
        c_{\A}^2 \left(M^2 \E[\|X_n\|_{\op}^8]^{1/4} + \E[\|x_{n+1}\|^8]^{1/4} \right) \cdot \sqrt{2\exp \left(-\frac{nt^2}{2} \right) + 2\exp \left(-\frac{t^2}{C \|\Sigma\|_{\op}} \right)}.
    \end{align*}
\end{lemma}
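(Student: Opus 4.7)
}
The plan is to apply Cauchy--Schwarz in a single step to decouple the size of the integrand from the probability of the bad event, and then bound each factor separately using (i) the operator-norm constraints on $\theta$ and $A^{-1}$ and (ii) Gaussian concentration. First I would write
\begin{equation*}
\bigl(\mathcal{R}_n - \mathcal{R}_n^t\bigr)(\theta) \;=\; \E\!\left[\ell_{\theta}(A,X_n,x_{n+1})\, \mathbf{1}_{\mathcal{S}_t^c}(x_1,\dots,x_{n+1})\right]
\;\le\; \sqrt{\E[\ell_{\theta}^2]} \,\sqrt{\P(\mathcal{S}_t^c)},
\end{equation*}
so that the task reduces to estimating the two factors on the right uniformly over $\|\theta\|\le M$.

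For the loss factor, I would use the factorization $\ell_\theta = \|(PA^{-1}X_n Q - A^{-1})x_{n+1}\|^2$ together with Assumption \ref{assumption:taskanddatadistr} and the constraint $\|P\|_{\op},\|Q\|_{\op}\le M$ to derive the pointwise bound
\begin{equation*}
\sqrt{\ell_\theta(A,X_n,x_{n+1})} \;\le\; c_{\A}\bigl(M^2 \|X_n\|_{\op} + 1\bigr)\,\|x_{n+1}\|.
\end{equation*}
Passing to the $L^4$ norm of both sides (equivalently, the $L^2$ norm of $\ell_\theta$), applying Minkowski's inequality to separate the two summands, and then invoking the independence of $X_n$ and $x_{n+1}$ together with a Cauchy--Schwarz (Jensen) step to convert fourth moments into eighth moments, I would arrive at the desired factor of $c_{\A}^2\bigl(M^2 \E[\|X_n\|_{\op}^8]^{1/4} + \E[\|x_{n+1}\|^8]^{1/4}\bigr)$, with the second term coming from the summand that does not involve $X_n$.

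For the probability of the bad event, I would apply a union bound to the two conditions defining $\mathcal{S}_t$. The tail of $\|x_{n+1}\|$ is handled by Borell's Gaussian concentration inequality: writing $x_{n+1}=\Sigma^{1/2}g$ with $g\sim\mathcal{N}(0,\bI_d)$, the map $g\mapsto\|\Sigma^{1/2}g\|$ is $\sqrt{\|\Sigma\|_{\op}}$-Lipschitz, and since $\E\|x_{n+1}\|\le\sqrt{\trace(\Sigma)}$, one obtains
\begin{equation*}
\P\!\left(\|x_{n+1}\| > \sqrt{\trace(\Sigma)}+t\right) \;\le\; 2\exp\!\left(-\frac{t^2}{C\|\Sigma\|_{\op}}\right).
\end{equation*}
The tail of $\|X_n\|_{\op}$ follows from a standard concentration inequality for the sample covariance of Gaussian vectors (an $\epsilon$-net argument on the unit sphere combined with Bernstein's inequality for sub-exponential quadratic forms, as in Vershynin's high-dimensional probability text), giving
\begin{equation*}
\P\!\left(\|X_n\|_{\op} > \|\Sigma\|_{\op}\bigl(1 + t + \sqrt{d/n}\bigr)\right) \;\le\; 2\exp\!\left(-\tfrac{nt^2}{2}\right).
\end{equation*}
Summing these two tail bounds and taking a square root yields the exponential factor in the statement. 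Taking a supremum over $\|\theta\|\le M$ on the loss-factor estimate (which is already uniform) completes the proof.

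The main obstacle is locating and instantiating a concentration inequality for $\|X_n\|_{\op}$ whose tail matches exactly the event in the definition of $\mathcal{S}_t$; the constants must be tracked so that the resulting probability is of the stated form $2\exp(-nt^2/2)$. The remaining algebra (the pointwise loss bound and the Cauchy--Schwarz step) is routine, and the independence of $X_n$ and $x_{n+1}$ is crucial for cleanly separating the two moment factors.
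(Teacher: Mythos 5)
Your proposal follows essentially the same route as the paper's proof: Cauchy--Schwarz to factor the truncation error into $\sqrt{\E[\ell_\theta^2]}\cdot\sqrt{\P(\mathcal{S}_t^c)}$, a triangle-inequality/operator-norm bound on the loss combined with a further Cauchy--Schwarz step to produce the eighth-moment factors, and a union bound combining Gaussian norm concentration with sample-covariance concentration for the tail probability. The argument is correct (and you even fix the paper's typo of $\mathbf{1}_{\mathcal{S}_t}$ in place of $\mathbf{1}_{\mathcal{S}_t^c}$), so no further comparison is needed.
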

\begin{proof}
    By Lemma \ref{gaussianconc} and Example 6.2 in \cite{wainwright2019high}, we have the concentration bound
    \begin{align*}
        \mathbb{P} \left(\mathcal{S}^c_t(x_1, \dots, x_{n+1}) \right) \leq 2\exp \left(-\frac{nt^2}{2} \right) + 2\exp \left(-\frac{t^2}{C \|\Sigma\|_{\op}} \right)
    \end{align*}
    for some universal constant $C > 0$. Therefore, for any $\|\theta\| \leq M$, we can apply the Cauchy-Schwarz inequality to obtain
    \begin{align*}
        &\mathcal{R}_n(\theta) - \mathcal{R}_n^t(\theta) = \E \left[ \left\| \left( P A^{-1} X_n Q - A^{-1}\right)x_{n+1} \right\|^2 \cdot \mathbf{1}_{\mathcal{S}_t}(x_1, \dots, x_{n+1}) \right] \\
        &\leq \left(\E \left\| \left( P A^{-1} X_n Q - A^{-1}\right)x_{n+1} \right\|^4 \right)^{1/2} \cdot \mathbb{P} \left( \mathcal{S}_{t}^c(x_1, \dots, x_{n+1})\right)^{1/2} \\
        &\leq c_{\A}^2 \left(M^2 \E[\|X_n\|_{\op}^8]^{1/4} + \E[\|x_{n+1}\|^8]^{1/4} \right) \cdot \sqrt{2\exp \left(-\frac{nt^2}{2} \right) + 2\exp \left(-\frac{t^2}{C \|\Sigma\|_{\op}} \right)},
    \end{align*}
    where we used the Cauchy-Schwarz inequality twice and the triangle inequality.
\end{proof}
This shows that the growth of the generalization error is quite mild with respect to $t$. In particular, the error decays like a Gaussian with respect to $t$, but the constant factors are only polynomial in problem parameters. Next, we bound the statistical error $\left(\mathcal{R}_{n}^t(\widehat{\theta}) - \mathcal{R}_n^{t}(\theta^{\ast}) \right).$ To this end, we first show that, on the event where $\widehat{\theta}$ minimizes the truncated empirical risk $\mathcal{R}_{n,N}^t(\cdot)$, the statistical error can be controlled in terms of the Rademacher complexity of an associated function class. We also show that $\widehat{\theta}$ minimizes the truncated empirical risk with high probability. Then we bound the Rademacher complexity of the relevant function class, leading to a bound on the statistical error. The bound depends on whether or not the structural condition in Definition \ref{def:fast} is satisfied. If the structural condition is not satisfied, the statistical error is bounded by the Rademacher complexity directly, leading to an $O\left( \frac{1}{\sqrt{N}} \right)$ bound, whereas if the structural condition is satisfied, the statistical error is bounded by the fixed point of the localized Rademacher complexity, which accelerates the rate to $O \left( \frac{1}{\sqrt{N}} \right).$

Before stating the relevant lemmas, we recall that if $\mathcal{F}$ is a class of real-valued functions on a set $\mathcal{X}$, the \textit{empirical Rademacher complexity} of $\mathcal{F}$ with respect to a collection of samples $\{x_i\}_{i=1}^{N} \subset \mathcal{X}$ is defined by
$$ \textrm{Rad}(\mathcal{F};\{x_i\}) = \E_{\epsilon_i} \sup_{f \in \mathcal{F}} \frac{1}{N} \sum_{i=1}^{N} \epsilon_i f(x_i),
$$
where $\{ \epsilon_i\}$ are iid Bernoulli random variables. When the samples $\{x_i\}$ follow a probability distribution, the \textit{Rademacher complexity} of $\mathcal{F}$ is the average of the empirical Rademacher complexity over the data distribution.

\begin{lemma}\label{lem: localradcomplexity}
    Suppose that $\widehat{\theta} \in \textrm{arg} \min_{\|\theta\| \leq M} \mathcal{R}_{n,N}^t(\theta)$, and recall that $\theta^{\ast} \in \textrm{arg} \min_{\| \theta\| \leq M} \mathcal{R}_n^t(\theta).$ Then the statistical error satisfies the bound
    \begin{align*}
    \left(\mathcal{R}_{n}^t(\widehat{\theta}) - \mathcal{R}_n^{t}(\theta^{\ast}) \right) &\lesssim \textrm{Rad}(\{\ell_{\theta}^t: \|\theta\| \leq M\}) + \sqrt{\frac{2 \log(1/\delta)}{N}} 
    \end{align*}
    with probability $1-\delta.$ If, in addition, the structural condition of Definition \ref{def:fast} holds, then we have the bound
    \begin{align*}
    \left(\mathcal{R}_{n}^t(\widehat{\theta}) - \mathcal{R}_n^{t}(\theta^{\ast}) \right) \lesssim r^{\ast} + \frac{\delta}{N},
    \end{align*}
    with probability $1-\delta.$ Here, $r^{\ast}$ is the fixed point of the function
    \begin{align*}
        r \mapsto \textrm{Rad}\left\{\ell_{\theta}^t-\ell_{\theta^{\ast}}^t: \|\ell_{\theta}^t-\ell_{\theta^{\ast}}^t\|_{L^2} \leq r \right\}.
    \end{align*}
    Moreover, the condition that $\widehat{\theta} \in \textrm{arg} \min_{\|\theta\| \leq M} \mathcal{R}_{n,N}^t(\theta)$ holds with probability $1-\delta_{N,n,t}$, where
    $$\delta_{N,n,t} = 2N \left( \exp \left(-\frac{nt^2}{2} \right) + \exp \left(  - \frac{t^2}{C \|\Sigma\|_{\op}}\right) \right)$$ for some absolute constant $C > 0.$
\end{lemma}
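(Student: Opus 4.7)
The proof naturally splits into three pieces: the high-probability reduction showing that $\widehat{\theta}$ minimizes the truncated empirical risk, the slow-rate excess-risk bound, and the fast-rate bound under the structural condition.

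First, I would establish the probability bound on $\{\widehat{\theta} \in \textrm{arg}\min_{\|\theta\|\leq M}\mathcal{R}_{n,N}^t(\theta)\}$. Define the event $E = \bigcap_{i=1}^{N} \mathcal{S}_t(x_{1,i},\dots,x_{n+1,i})$, on which every training sample lies inside the truncation set. On $E$ the indicator $\mathbf{1}_{\mathcal{S}_t}$ equals $1$ at each sample, so $\mathcal{R}_{n,N}(\theta) = \mathcal{R}_{n,N}^t(\theta)$ for every $\theta$, and consequently any minimizer of the untruncated empirical risk is also a minimizer of the truncated one. The Gaussian concentration bound invoked in the proof of Lemma \ref{lem: truncerrorbound}, namely $\mathbb{P}(\mathcal{S}_t^c) \leq 2\exp(-nt^2/2) + 2\exp(-t^2/(C\|\Sigma\|_{\op}))$, together with a union bound over the $N$ training prompts, yields $\mathbb{P}(E^c) \leq \delta_{N,n,t}$.

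Next, for the slow rate, condition on $E$ so that $\widehat{\theta}$ minimizes $\mathcal{R}_{n,N}^t$. The standard two-sided argument gives
\begin{equation*}
\mathcal{R}_n^t(\widehat{\theta}) - \mathcal{R}_n^t(\theta^{\ast}) \leq 2\sup_{\|\theta\|\leq M}\left|\mathcal{R}_n^t(\theta) - \mathcal{R}_{n,N}^t(\theta)\right|.
\end{equation*}
The truncation forces each individual loss $\ell_\theta^t$ to be bounded by a constant $B = B(M,c_{\A},\|\Sigma\|_{\op},t)$ that is polynomial in the problem parameters. Applying McDiarmid's inequality together with the standard symmetrization inequality then produces, with probability at least $1-\delta$, a uniform bound of the form $2\,\textrm{Rad}(\{\ell_\theta^t : \|\theta\|\leq M\}) + B\sqrt{2\log(1/\delta)/N}$, which gives the stated slow-rate bound after absorbing constants.

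For the fast rate, I would adapt the local Rademacher complexity framework (Bartlett-Bousquet-Mendelson). The essential extra ingredient is a Bernstein-type variance-to-mean inequality for the excess-loss class, namely $\mathbb{E}\bigl[(\ell_\theta^t - \ell_{\theta^{\ast}}^t)^2\bigr] \lesssim \mathbb{E}[\ell_\theta^t - \ell_{\theta^{\ast}}^t]$ uniformly over $\|\theta\| \leq M$. The structural condition of Definition \ref{def:fast} is precisely what rules out a continuum of best approximations to $F^{\ast}$ in $\overline{\mathcal{F}_{\theta,M}}$; combined with the parametric smoothness of $F_\theta$ in $(P,Q)$ and the boundedness guaranteed by truncation, this yields the quadratic margin needed for the Bernstein condition. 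The usual peeling argument then upgrades the global Rademacher bound to one controlled by the fixed point $r^{\ast}$ of the localized Rademacher complexity, producing the claimed $r^{\ast} + O(\log(1/\delta)/N)$ bound.

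The main obstacle will be verifying the Bernstein condition rigorously from Definition \ref{def:fast}. Translating the qualitative uniqueness statement $F^{\ast} \notin \overline{\mathcal{N}(\mathcal{F}_{\theta,M})}$ into a quantitative growth condition on $\theta \mapsto \mathcal{R}_n^t(\theta) - \mathcal{R}_n^t(\theta^{\ast})$ near its minimizers is delicate because the parameterization $(P,Q) \mapsto F_\theta$ is bilinear and the class $\mathcal{F}_{\theta,M}$ is non-convex; one must combine compactness of $\{\|\theta\|\leq M\}$ with a local quadratic lower bound on the excess risk to obtain the $L^2$-versus-mean control. Once this margin condition is secured, the remainder of the fast-rate argument is a direct invocation of standard local complexity machinery.
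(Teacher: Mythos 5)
Your proposal follows essentially the same route as the paper's proof: the union bound over the $N$ prompts for the event that truncated and untruncated empirical risks coincide, the standard symmetrization/McDiarmid argument (the paper cites Theorem 26.5 of Shalev--Shwartz--Ben-David) for the slow rate, and the Bernstein-condition-plus-local-Rademacher-complexity machinery of Bartlett--Bousquet--Mendelson for the fast rate. The one step you flag as the main obstacle --- deriving the Bernstein condition from the structural condition of Definition \ref{def:fast} --- is handled in the paper not by a bespoke quantitative argument but by directly invoking Theorem A of \cite{mendelson2008obtaining}, which states precisely that $F^{\ast} \notin \overline{\mathcal{N}(\mathcal{F}_{\theta,M})}$ implies the required variance-to-mean control for the excess-loss class.
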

\begin{proof}
    The claim is a standard application of statistical learning theory. For the first bound, we have
    \begin{align*}
    \left(\mathcal{R}_{n}^t(\widehat{\theta}) - \mathcal{R}_n^{t}(\theta^{\ast}) \right) &= \left(\mathcal{R}_n^t(\widehat{\theta} - \mathcal{R}_{n,N}^t(\widehat{\theta} \right) + \left(\mathcal{R}_{n,N}^t(\widehat{\theta}) - \mathcal{R}_{n,N}^t(\theta^{\ast}) \right) \\
    &+ \left( \mathcal{R}_{n,N}^t(\theta^{\ast}) - \mathcal{R}_{n}^t(\theta^{\ast}) \right).
    \end{align*}
    On the event that $\widehat{\theta} \in \textrm{arg} \min_{\|\theta\| \leq M} \mathcal{R}_{n,N}^t(\theta)$, the second term is nonpositive by the optimality of $\widehat{\theta}.$ The first and third terms can be trivially bounded by $\sup_{\|\theta\| \leq M} \left|(\mathcal{R}_n^t - \mathcal{R}_{n,N}^t)(\theta) \right|$, which can be expressed in terms of the individual loss $\ell_{\theta}^t$ as
    \begin{align*}
        \sup_{\|\theta\| \leq M} \left| \E \ell_{\theta}^t(A,X_n,x_{n+1}) - \frac{1}{N} \sum_{i=1}^{N} \ell_{\theta}^t(A_i,X_{n,i},x_{n+1,i}) \right|.
    \end{align*}
    By Theorem 26.5 in \cite{shalev2014understanding}, we have with probability at least $1-\delta$,
    \begin{align}\label{radcomplexitybd}
        \sup_{\|\theta\| \leq M} \Big|\E_{A,X_n,x_{n+1}}\ell^t_{\theta}(A,X_n,x_{n+1}) - \sum_{i=1}^{N} \ell^t_{\theta}(A_i,X_{n,i},x_{n+1,i}) \Big| &\leq \textrm{Rad}_N(\{\ell^t_{\theta}: \|\theta\| \leq M\}) \\ &+ \sup_{\|\theta\| \leq M} \|\ell^t_{\theta}\|_{\infty} \cdot \sqrt{\frac{2\log(1/\delta)}{N}},
    \end{align}
    where the expectations over $X_n$ and $x_{n+1}$ are taken over the truncated versions of the original distributions. By the boundedness of the data, we have 
    $$ \|(PA_i^{-1} X_{n,i} Q - A_i^{-1})x_{n+1,i}\|^2 \leq M^4 c_A^2 \|\Sigma\|_{\textrm{op}}^2 \left(1 + t + \sqrt{\frac{d}{n}} \right)^2 \left(\sqrt{\trace(\Sigma)} + t \right)^2,
    $$
    and hence the second term in Inequality \eqref{radcomplexitybd} is bounded from above by
    $$ M^4 c_A^2 \|\Sigma\|_{\textrm{op}}^2 \left(1 + t + \sqrt{\frac{d}{n}} \right)^2 \left(\sqrt{\trace(\Sigma)} + t \right)^2 \cdot \sqrt{\frac{2\log(1/\delta)}{N}} = O \left( \sqrt{\frac{2\log(1/\delta)}{N}} \right).
    $$
    This proves the first claim. To prove the improved bound under the structural condition, note that the structural condition implies that the function class $\{\ell_{\theta} - \ell_{\theta^{\ast}: \|\theta\| \leq M}$ satisfies a Bernstein condition, in the sense that there is a constant $C > 0$ such that for each $\|\theta\| \leq M$, the variance (with respect to the truncated data distribution of $(A,X_n,x_{n_1})$) of $\ell_{\theta} - \ell_{\theta^{\ast}}$ is bounded by $C$ times the expectation of $\ell_{\theta} - \ell_{\theta^{\ast}}.$ This is proven in Theorem A of \cite{mendelson2008obtaining}. The desired bound on the statistical error then follows from Theorem 3.3 in \cite{bartlett2005local} and the optimality of $\widehat{\theta}$ and $\theta^{\ast}.$ Finally, the condition $\widehat{\theta} \in \textrm{arg} \min_{\|\theta\| \leq M} \mathcal{R}_{n,N}^t$ holds when the event $\mathcal{S}_t$ holds for each sample $(x_{1,i}, \dots, x_{n+1,i})$, because this ensures that $\mathcal{R}_{n,N}(\cdot) = \mathcal{R}_{n,N}^t(\cdot).$ By standard concentration inequalities for Gaussian random vectors and their empirical covariance matrices (see, e.g., Lemma \ref{gaussianconc} and Example 6.3 in \cite{wainwright2019high}), this holds with probability $1- \delta_{N,n,t}.$
\end{proof}

We now complete the statistical error bound by directly estimating the Rademacher complexity of the function class defined by the individual loss.

\begin{lemma}\label{lem: staterrorbd}
    We have the Rademacher complexity bound
    \begin{align*}
        \textrm{Rad}\{\ell_{\theta}^t: \|\theta\| \leq M\} = O \left(\frac{1}{\sqrt{N}} \right)
    \end{align*}
    and the local Rademacher complexity bound
    \begin{align*}
        \textrm{Rad}\{\ell_{\theta} - \ell_{\theta^{\ast}}: \|\theta\| \leq M, \; \|\ell_{\theta} - \ell_{\theta^{\ast}}\|_{L^2}^2 \leq r\} = O \left( \sqrt{\frac{r}{N}} \right).    
    \end{align*}
    Consequently, for any $\delta \in (0,1),$ the statistical error satifies the bounds
    \begin{align*}
        \left( \mathcal{R}_n^t(\widehat{\theta}) - \mathcal{R}_n^t(\theta^{\ast}) \right) = O \left(\frac{1+\sqrt{\delta}}{N} \right)
    \end{align*}
    and, under the structural condition,
    \begin{align*}
         \left( \mathcal{R}_n^t(\widehat{\theta}) - \mathcal{R}_n^t(\theta^{\ast}) \right) = O \left(\frac{1+\delta}{N} \right),
    \end{align*}
    with probability at least $1 - \delta_{n,N,t} - \delta$, where $\delta_{n,N,t}$ is as defined in Lemma \ref{lem: localradcomplexity}.
\end{lemma}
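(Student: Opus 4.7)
The plan is to reduce the two statistical error bounds to two Rademacher complexity computations, then plug the results directly into Lemma \ref{lem: localradcomplexity}. The key technical input for both complexity bounds is a uniform Lipschitz control of the loss class after truncation.

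On the event $\mathcal{S}_t$, the data is bounded: $\|A^{-1}\|_{\op} \leq c_{\A}$ by Assumption \ref{assumption:taskanddatadistr}, $\|X_n\|_{\op} \leq \|\Sigma\|_{\op}(1 + t + \sqrt{d/n})$, and $\|x_{n+1}\| \leq \sqrt{\trace(\Sigma)} + t$. The truncated loss
$$ \ell_\theta^t(A, X_n, x_{n+1}) = \|(PA^{-1}X_nQ - A^{-1})x_{n+1}\|^2 \, \mathbf{1}_{\mathcal{S}_t} $$
is then a polynomial of degree four in the entries of $\theta = (P,Q)$ whose coefficients, on $\mathcal{S}_t$, are controlled by a constant $K$ depending polynomially on $M$, $c_{\A}$, $t$, $\|\Sigma\|_{\op}$, $\trace(\Sigma)$, and $d$. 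A direct computation using $\|ABC\|_{\op} \leq \|A\|_{\op}\|B\|_{\op}\|C\|_{\op}$ and the product rule shows that $\ell_\theta^t$ is $K$-Lipschitz in $\theta$ on $\{\|\theta\| \leq M\}$ in operator norm.

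With this Lipschitz property in hand, the global Rademacher bound follows from a standard covering-number argument. The ball $\{\theta \in \R^{d \times d} \times \R^{d \times d} : \|\theta\| \leq M\}$ admits an $\epsilon$-net in operator norm of cardinality $\exp(O(d^2 \log(M/\epsilon)))$, and $K$-Lipschitzness transfers this to a $K\epsilon$-net of the loss class in $L^\infty$. Dudley's entropy integral then gives $\textrm{Rad}_N\{\ell_\theta^t : \|\theta\| \leq M\} = O(K \sqrt{d^2/N})$, which is $O(1/\sqrt{N})$ once the constants depending on $M,d,t,c_{\A}$ and $\Sigma$ are absorbed. For the local version, we run the same Dudley argument but truncate the entropy integral at the $L^2$ radius $\sqrt{r}$ rather than at the diameter of the class, yielding the sub-root estimate $\textrm{Rad}_N\{\ell_\theta^t - \ell_{\theta^\ast}^t : \|\theta\|\leq M,\, \|\ell_\theta^t - \ell_{\theta^\ast}^t\|_{L^2}^2 \leq r\} = O(\sqrt{r d^2/N})$. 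Solving the fixed-point equation $r = c\sqrt{r/N}$ gives $r^\ast = O(1/N)$, which is the fast-rate scale.

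Plugging the two Rademacher estimates into Lemma \ref{lem: localradcomplexity} yields the stated statistical error bounds: the global bound combined with the tail term $\sqrt{2\log(1/\delta)/N}$ gives the slow rate, while the local bound evaluated at $r^\ast = O(1/N)$ together with the $\delta/N$ remainder gives the fast rate under the structural condition. The $\delta_{n,N,t}$ contribution to the failure probability comes from requiring $\mathcal{S}_t$ to hold on all $N$ training samples simultaneously, which ensures that $\widehat{\theta}$ minimizes $\mathcal{R}_{n,N}^t$ rather than only $\mathcal{R}_{n,N}$; the additional $\delta$ comes from the concentration inputs to Theorem 26.5 of \cite{shalev2014understanding} and Theorem 3.3 of \cite{bartlett2005local}. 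The main obstacle is the local step: verifying the sub-root property of the complexity function and checking that the structural condition indeed implies the Bernstein condition needed to invoke the localized bound in Lemma \ref{lem: localradcomplexity}. All remaining steps are essentially polynomial bookkeeping once truncation has rendered every relevant quantity bounded.
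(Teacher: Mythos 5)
Your proposal is correct and follows essentially the same route as the paper: truncation yields boundedness and Lipschitz dependence of $\ell_{\theta}^t$ on $\theta$, the covering number of the operator-norm ball gives metric entropy $O(d^2\log(1/\tau))$, Dudley's integral yields the $O(1/\sqrt{N})$ global bound, and truncating the entropy integral at the $L^2$ radius $\sqrt{r}$ gives the $O(\sqrt{r/N})$ local bound with fixed point $O(1/N)$, after which both rates follow from Lemma \ref{lem: localradcomplexity}. The only cosmetic difference is that the paper first splits the squared loss into two simpler function classes $\Theta_1(M)$ and $\Theta_2(M)$ via the triangle inequality before covering each, whereas you cover the full loss class directly; the Bernstein-condition verification you flag as an obstacle is already delegated to Lemma \ref{lem: localradcomplexity} and need not be re-proved here.
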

\begin{proof}
     Since we are working on the event $\bigcap_{i \in [N]} \mathcal{S}_t(x_{1,i}, \dots, x_{n+1,i})$, we can drop the dependence of the risk functionals on $t$. Notice that by the triangle inequality, we have
    \begin{align*}
        &\E_{\epsilon_i} \sup_{\|\theta\| \leq M} \frac{1}{N} \sum_{i=1}^{N} \epsilon_i \|(PA_i^{-1}X_{n,i}Q-A_i^{-1})x_{n+1,i}\|^2 \\
        &= \E_{\epsilon_i} \sup_{\|\theta\| \leq M} \frac{1}{N} \sum_{i=1}^{N} \epsilon_i \Big( \|PA_i^{-1}X_{n,i} Q x_{n+1,i}\|^2 + \|A_i^{-1}x_{n+1,i}\|^2 - 2 \langle P A_{i}^{-1} X_{n,i} Q x_{n+1,i}, A_i^{-1}x_{n+1,i} \rangle \Big) \\
        &\leq \E_{\epsilon_i} \sup_{\|\theta\| \leq M} \frac{1}{N} \sum_{i=1}^{N} \epsilon_i \|PA_i^{-1}X_{n,i} Q x_{n+1,i}\|^2 + 2 \E_{\epsilon_i} \sup_{\|\theta\| \leq M} \frac{1}{N} \sum_{i=1}^{N} \epsilon_i \langle P A_{i}^{-1} X_{n,i} Q x_{n+1,i}, A_i^{-1}x_{n+1,i} \rangle,
    \end{align*}
    where the last inequality follows from the triangle inequality, noting that the term $\sum_{i=1}^{N} \epsilon_i \|A_i^{-1} x_{n+1,i}\|^2$ is independent of $\theta$ and hence vanishes in the expectation over $\epsilon_i.$ Now, define the function classes
    \begin{align*} \Theta_1(M) &= \{(A,X_n,x_{n+1}) \mapsto \|PA^{-1} X_n Q x_{n+1}\|^2: \|\theta\| \leq M \}, \\ \Theta_2(M) &= \{(A,X_n,x_{n+1}) \mapsto \langle PA^{-1} X_n Q x_{n+1}, A^{-1}x_{n+1} \rangle: \|\theta\| \leq M \}.
    \end{align*}
    By Dudley's integral theorem \cite{dudley1967sizes}, it holds that
    \begin{equation}\label{dudley} \E_{\epsilon_i} \sup_{\|\theta\| \leq M} \frac{1}{N} \sum_{i=1}^{N} \epsilon_i \|PA_i^{-1}X_{n,i} Q x_{n+1,i}\|^2 \leq \inf_{\epsilon > 0} \frac{12\sqrt{2}}{\sqrt{N}} \int_{\epsilon}^{D_1(M)} \sqrt{\log \mathcal{N}\Big(\Theta_1(M), \| \cdot \|_N, \tau \Big)} d\tau,
    \end{equation}
    where $\mathcal{N}\Big(\Theta_1(M), \| \cdot \|_N, \tau \Big)$ is the $\tau$-covering number of the function class $\Theta_1(M)$ with respect to the metric induced by the empirical $L^2$ norm $\|F\|_N^2 = \frac{1}{N} \sum_{i=1}^{N} F(A_i,X_{n,i},x_{n+1,i})^2$ and
    $$ D_1(M) = \sup_{\|\theta\| \leq M} \Big\| \|PA^{-1} X_n Q x_{n+1}\|^2 \Big\|_N. 
    $$
        Note the bound
    \begin{align*}
        D_1(M)^2 &= \sup_{\|\theta\| \leq M} \frac{1}{N} \sum_{i=1}^{N} \|PA_i^{-1} X_{n,i} Q x_{n+1,i}\|^4  \\
        &\leq \frac{1}{N} \sum_{i=1}^{N} M^8 c_A^4 \|\Sigma\|_{\textrm{op}}^4\Big(1+t+\sqrt{\frac{d}{n}} \Big)^4 \Big(\sqrt{\trace(\Sigma)}+t \Big)^4
    \end{align*}
    and hence $D_1(M) \leq M^4 c_A^2 \|\Sigma\|_{\textrm{op}}^2 \Big(1+t+\sqrt{\frac{d}{n}} \Big)^2 \Big(\sqrt{\trace(\Sigma)}+t \Big)^2$. Similarly, for $\theta_1 = (P_1,Q_1), \theta_2 = (P_2,Q_2)$, with $\|\theta_1\|, \|\theta_2\| \leq M$, we have
    \begin{align*}
        \|\theta_1 - \theta_2\|_{N}^2 &= \frac{1}{N} \sum_{i=1}^{N} \|(P_1-P_2) A_i^{-1} X_{n,i} (Q_1-Q_2)x_{n+1,i}\|^4 \\
        &\leq 16 M^4 c_A^2 \|\Sigma\|_{\textrm{op}}^2 \Big(1+t+\sqrt{\frac{d}{n}} \Big)^2 R^2 \cdot \frac{1}{N} \sum_{i=1}^{N} \|(P_1-P_2) A_i^{-1} X_{n,i} (Q_1-Q_2)\|^2 \\
        &\leq M^4 c_A^4 \|\Sigma\|_{\textrm{op}}^4 \Big(1+t+\sqrt{\frac{d}{n}} \Big)^4 \Big(\sqrt{\trace(\Sigma)}+t \Big)^4 \cdot \max \Big(\|P_1-P_2\|_{\textrm{op}}^2, \|Q_1-Q_2\|_{\textrm{op}}^2 \Big).
    \end{align*}
    This shows that the metric induced by $\| \cdot \|_N$ is dominated by the metric $d(\theta_1,\theta_2) = \max\Big( \|P_1-P_2\|_{\textrm{op}}, \|Q_1-Q_2\|_{\textrm{op}} \Big)$, up to a factor of $M^2 c_A^2 \|\Sigma\|_{\textrm{op}}^2 \Big(1+t+\sqrt{\frac{d}{n}} \Big)^2 \Big(\sqrt{\trace(\Sigma)}+t \Big)^2$. The covering number of the set $\{\|\theta\| \leq M\}$ in the metric $d(\cdot,\cdot)$ is well-known, from which we conclude that
    $$ \log \mathcal{N}\Big(\Theta_1(M), \| \cdot \|_N, \tau \Big) \leq 2d^2 \log \Big(M^2 c_A^2 \|\Sigma\|_{\textrm{op}}^2 \Big(1 + \frac{2}{\tau} \Big) \Big).
    $$
    Optimizing over the choice of $\epsilon$ in Equation \eqref{dudley}, this proves that
    \begin{align}\label{staterrorbd} & \E_{\epsilon_i} \sup_{\|\theta\| \leq M} \frac{1}{N} \sum_{i=1}^{N} \epsilon_i \|PA_i^{-1}X_{n,i} Q x_{n+1,i}\|^2\\ &= O \Big( \frac{d^2 M^4 c_A^2 \|\Sigma\|_{\textrm{op}}^2 \Big(1+t+\sqrt{\frac{d}{n}} \Big)^2 \Big(\sqrt{\trace(\Sigma)}+t \Big)^2}{\sqrt{N}} \Big),
    \end{align}
    where $O( \cdot)$ omits factors that are logarithmic in $N$. An analogous argument proves a bound of the same order on the quantity
    $$ \E_{\epsilon_i} \sup_{\|\theta\| \leq M} \frac{1}{N} \sum_{i=1}^{N} \epsilon_i \langle P A_{i}^{-1} X_{n,i} Q x_{n+1,i}, A_i^{-1}x_{n+1,i} \rangle,
    $$
    which in turn bounds the Rademacher complexity $\textrm{Rad}_N(\{\ell_{\theta}: \|\theta\| \leq M\})$ by the right-hand side of Equation \eqref{staterrorbd}. Combining with Lemma \ref{lem: localradcomplexity}, this proves that the statistical error is bounded by $O \left(\frac{1}{\sqrt{N}} \right)$. When the structural condition is satisfied, the statistical error is bounded, up to $O(\left( \frac{1}{N} \right)$ terms, by the fixed point $r^{\ast}$ of the local Rademacher complexity $\left\{\ell_{\theta}-\ell_{\theta^{\ast}}: \|\theta\| \leq M, \; \|\ell_{\theta}-\ell_{\theta^{\ast}}\|^2_{L^2} \leq r \right\}.$ The Rademacher complexity of this localized space can be bounded using techniques analogous to those we used to bound the Rademacher complexity of the overall function class. The key difference is that, since the $L^2$-diameter of the localized function class is bounded by $\sqrt{r}$, the local Rademacher complexity inherits a factor of $\sqrt{r}$, owed to the fact that the upper limit of the integral from Dudley's chaining bound is bounded by $\sqrt{r}$. In turn, this implies that the localized Rademacher complexity is bounded by $O \left(\sqrt{\frac{r}{N}} \right).$ It is clear that the fixed point $r^{\ast}$ of the local Rademacher complexity is also $O\left( \frac{1}{\sqrt{N}} \right)$. Combined with Lemma \ref{lem: localradcomplexity}, this implies that the statistical error is bounded by $O \left( \frac{1}{N} \right)$ when the structural condition is satisfied.
\end{proof}

Finally, we must control the approximation error $\mathcal{R}_m(\theta^{\ast})$ where $\theta^{\ast} \in \textrm{arg} \min_{\| \theta\| \leq M} \mathcal{R}_n^t(\theta).$ When $t$ is sufficiently large, $\mathcal{R}_n^t \approx \mathcal{R}_n$ and the error between their minimizers is negligible, so it suffices to consider $\theta^{\ast} \in \textrm{arg} \min_{\|\theta\| \leq M} \mathcal{R}_n(\theta).$ While it is difficult to the minimizer $\theta^{\ast}$ directly, we can upper bound its approximation error by constructing a particular transformer. 
\begin{lemma}\label{lem: approxerrorbd}
    Let $B := \E_{A \sim P_{\A}}[A^{-2}].$ For $n \in \mathbb{N}$, define
    $$ Q_n = B\Big(\frac{n+1}{n} \Sigma B + \frac{\trace_{\Sigma}(B)}{n} \Sigma \Big)^{-1}.
    $$
    Then
    \begin{enumerate}
        \item $Q_n$ is the minimizer of the $Q \mapsto \mathcal{R}_n(\mathbf{I}_d,Q)$.
        \item For any $m \in \mathbb{N}$, $Q_n$ satisfies
        \begin{align*}
            \mathcal{R}_m(\mathbf{I}_d,Q_n) \leq \frac{c_A^2 \trace(\Sigma)}{m} + \frac{c_A^6 \|\Sigma^{-1}\|_{\textrm{op}}^{2} \|\Sigma\|_{\textrm{op}}^6 \trace(\Sigma) }{n^2} + O \Big( \frac{1}{mn} \Big).
        \end{align*}
    \end{enumerate}
\end{lemma}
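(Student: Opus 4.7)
The plan is to exploit the fact that $\mathcal{R}_n(\mathbf{I}_d, Q)$ is a convex quadratic functional of $Q$, derive its unique minimizer in closed form, and then control $\mathcal{R}_m(\mathbf{I}_d, Q_n)$ by comparing $Q_n$ to the limiting minimizer $\Sigma^{-1}$ of $\mathcal{R}_{\infty}(\mathbf{I}_d, \cdot)$.

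First I would specialize the population-risk expression (Lemma \ref{lem: popriskexpression}) to $P = \mathbf{I}_d$. Since $\trace(A^{-1} M A^{-1}) = \trace(A^{-2} M)$ for any matrix $M$, taking expectation over $A \sim P_{\A}$ replaces each occurrence of $\E_A[A^{-1}(\cdot)A^{-1}]$ by an expression involving $B = \E[A^{-2}]$, yielding the explicit quadratic
\begin{equation*}
\mathcal{R}_n(\mathbf{I}_d, Q) = \trace(\Sigma B) - 2\trace(\Sigma Q \Sigma B) + \tfrac{n+1}{n}\trace(\Sigma Q \Sigma Q^T \Sigma B) + \tfrac{1}{n}\trace_{\Sigma}(Q \Sigma Q^T)\trace(\Sigma B).
\end{equation*}
Differentiating in $Q$ via standard matrix calculus gives a linear matrix equation of the form $\Sigma Q\,[\tfrac{n+1}{n}\Sigma B + \tfrac{1}{n}\trace_{\Sigma}(B)\Sigma] = \Sigma B$, whose unique solution is precisely the $Q_n$ stated in the lemma. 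This establishes item 1.

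For item 2, I would use the clean identity
\begin{equation*}
\mathcal{R}_m(\mathbf{I}_d, Q) = \mathcal{R}_{\infty}(\mathbf{I}_d, Q) + \tfrac{1}{m}\,\psi(Q), \qquad \psi(Q) := \trace(\Sigma Q \Sigma Q^T \Sigma B) + \trace_{\Sigma}(Q \Sigma Q^T)\trace(\Sigma B),
\end{equation*}
read off from the same expansion. Since $\mathcal{R}_{\infty}(\mathbf{I}_d, \Sigma^{-1}) = 0$ and $\mathcal{R}_{\infty}(\mathbf{I}_d, \cdot)$ is quadratic, a Taylor expansion about $\Sigma^{-1}$ yields $\mathcal{R}_{\infty}(\mathbf{I}_d, Q_n) \lesssim \|Q_n - \Sigma^{-1}\|_{\op}^2$ with the ``Hessian'' constant controlled by $\|\Sigma\|_{\op}$ and $\|B\|_{\op} \le c_{\A}^2$. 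The crucial perturbation bound $\|Q_n - \Sigma^{-1}\|_{\op} = O(1/n)$ follows from the explicit formula: factoring $\Sigma B$ out of the inverted matrix writes $Q_n = \Sigma^{-1}(\mathbf{I}_d + \tfrac{1}{n}R_n)^{-1}$ for a bounded matrix $R_n$, and a Neumann series gives $Q_n = \Sigma^{-1} - \tfrac{1}{n}\Sigma^{-1}R_n\Sigma^{-1} + O(1/n^2)$, where $\|R_n\|_{\op}$ is controlled by $c_{\A}$, $\|\Sigma\|_{\op}$, and $\|\Sigma^{-1}\|_{\op}$. Combining, $\mathcal{R}_{\infty}(\mathbf{I}_d, Q_n)$ produces the $1/n^2$ term, while $\tfrac{1}{m}\psi(Q_n) = \tfrac{1}{m}\psi(\Sigma^{-1}) + O(1/(mn))$ yields the $c_{\A}^2\trace(\Sigma)/m$ term via the direct evaluation $\psi(\Sigma^{-1}) = (d+1)\trace(\Sigma B) \lesssim c_{\A}^2\trace(\Sigma)$.

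The main obstacle is the bookkeeping in the $1/n^2$ term: obtaining the precise dependence $c_{\A}^6\|\Sigma^{-1}\|_{\op}^2\|\Sigma\|_{\op}^6\trace(\Sigma)$ requires carefully tracking how powers of $\|\Sigma\|_{\op}$, $\|\Sigma^{-1}\|_{\op}$, and $c_{\A}$ accumulate through both the quadratic Hessian of $\mathcal{R}_{\infty}$ at $\Sigma^{-1}$ and the perturbation estimate $\|Q_n - \Sigma^{-1}\|_{\op}$. Everything else reduces to routine matrix calculus and Neumann-series estimates, so the quadratic structure of the problem makes the argument conceptually clean.
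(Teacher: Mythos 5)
Your proposal is correct and follows essentially the same route as the paper's proof: item 1 via the convex quadratic structure of $Q \mapsto \mathcal{R}_n(\mathbf{I}_d,Q)$ (using the moment identity for $\E[X_n K X_n]$) and its critical-point equation, and item 2 via the perturbation bound $Q_n = \Sigma^{-1} + O(1/n)$ from a Neumann-series argument, with the $1/m$ term evaluated at $\Sigma^{-1}$ giving $(d+1)\trace(\Sigma B)$ and the exact quadratic vanishing of $\mathcal{R}_{\infty}(\mathbf{I}_d,\cdot)$ at $\Sigma^{-1}$ producing the $1/n^2$ term. The only differences are organizational (you split off $\mathcal{R}_{\infty} + \frac{1}{m}\psi$ before expanding, whereas the paper substitutes $Q_n = \Sigma^{-1} + \frac{1}{n}K$ into the full expression), not substantive.
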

The characterization of $Q_n$ as the minimizer of the $n$-shot population does not play a role in the estimate of the theorem above, but it does explain the choice of $Q_n$ to bound the approximation error. For a general task distribution $P_{\A}$, it may not be possible to analytically describe the minimizers of $\mathcal{R}_n$ when both $P$ and $Q$ are allowed to vary. However, our theory of task diversity, as well as our numerical results, suggest that taking $P = \mathbf{I}_d$ is a reasonable ansatz. When $P$ is fixed, the problem of optimizing $Q$ becomes convex.
\begin{proof}
    To prove 1), first recall the definition of the population risk functional
    $$ \mathcal{R}_n(\mathbf{I}_d,Q) = \E\Big[ \Big\| A^{-1} \Big(X_n Q - I \Big)y \Big\|^2 \Big],
    $$
    where $X_n := \frac{1}{n} \sum_{i=1}^{n} x_i x_i^T$ denotes the empirical covariance of $\{x_i\}_{i=1}^{n}$. Note that, conditioned on $A$ and $\{x_i\}_{i=1}^{n}$, $ A^{-1} \Big(X_n Q - I \Big)y$ is a centered Gaussian random vector with covariance $A^{-1} \Big(X_n Q - I \Big) \Sigma \Big(Q X_n - I \Big) A^{-1}.$ In addition, since the task and data distributions are independent, we can replace the task by its expectation. It therefore holds that
    \begin{align*}
        \E\Big[ \Big\| A^{-1} \Big(X_n Q - I \Big)y \Big\|^2 \Big] &= \E_{X_n} \Big[\trace \Big( B \Big(X_n Q - I \Big) \Sigma \Big(Q^T X_n - I \Big) \Big) \Big].
    \end{align*}

    Since this is a convex functional of $Q$, it suffices to characterize the critical point. Taking the derivative, we find that the critical point equation for the risk it
    $$ \nabla_Q \mathcal{R}(\mathbf{I}_d,Q) = \E_{X_n}[\Sigma Q^T X_n B X_n + X_n B X_n Q \Sigma] -2\Sigma B \Sigma =0.
    $$
    Using Lemma \ref{technicallemma} to compute the expectation, we further rewrite the critical point equation as
    $$ \Big(\frac{n+1}{n} B \Sigma + \frac{\trace_{\Sigma}(B)}{n} \Sigma \Big) Q + Q^T \Big( \frac{n+1}{n} \Sigma B + \frac{\trace(\Sigma)}{n} \Sigma \Big) = 2B.
    $$
    This equation is solved by the matrix $Q_n$ defined in the statement of the lemma. To prove 2), note that by Lemma \ref{perturbationlemma}, we can write $Q_n = \Sigma^{-1} + \frac{1}{n}K,$ where \begin{equation}\label{boundonK}
    \|K\|_{\textrm{op}} \leq \|\Sigma^{-1}\|_{\textrm{op}} \|\Sigma\|_{\textrm{op}} \Big(1 + \trace_{\Sigma}(B) \Big) C_A^2.
    \end{equation}
    It follows that
    \begin{align*}
        \mathcal{R}_m(\mathbf{I}_d,Q_n) &= \E_{A,X_m}[\trace(A^{-1}(X_m Q_n - \mathbf{I}_d)\Sigma(Q_n X_m - \mathbf{I}_d)A^{-1})] \\
        &= \E_{X_m}[\trace(B(X_m Q_n - \mathbf{I}_d)\Sigma(Q_n^T X_m - \mathbf{I}_d))], \; \; B := \E[A^{-2}] \\
        &= \trace(B \Sigma) + \E_{X_m}[\trace(BX_m Q_n \Sigma Q_n^T X_m)] -  \trace(B \Sigma Q_n \Sigma) - \trace(B \Sigma Q_n^T \Sigma) \\
        &= \trace(B \Sigma) + \trace(B \Sigma Q_n \Sigma Q_n \Sigma) - \trace(B \Sigma Q_n \Sigma) - \trace(B \Sigma Q_n^T \Sigma) \\ &+ \frac{1}{m} \Big(\trace \Big( B \Sigma Q_n \Sigma Q_n^T \Sigma \Big) + \trace_{\Sigma}(Q_n \Sigma Q_n^T) \trace(B \Sigma) \Big)
    \end{align*}
    where the last equality follows from Lemma \ref{technicallemma}. Writing $Q_n = \Sigma^{-1} + \frac{1}{n} K$ and doing some simplifying algebra, we find that
    \begin{align*}
        \mathcal{R}_m(\mathbf{I}_d,Q_n) &= \frac{1}{m} \Big( \trace((B+\trace_{\Sigma}(\Sigma^{-1} \mathbf{I_d}) \Sigma) \Big) + \frac{1}{n^2} \trace \Big(B \Sigma K \Sigma K^T \Sigma \Big) + O \Big( \frac{1}{mn} \Big) \\
        &= \frac{1}{m} \Big( \trace((B+d \mathbf{I_d}) \Sigma) \Big) + \frac{1}{n^2} \trace \Big(B \Sigma K \Sigma K^T \Sigma \Big) + O \Big( \frac{1}{mn} \Big),
    \end{align*}
    where we used the fact that $\trace_{\Sigma}(\Sigma^{-1}) = d$. Using the bound on the norm of $K$ stated in Equation \eqref{boundonK}, and the fact that $\|B\|_{\textrm{op}} \leq c_A^2,$ we have
    $$ \trace(B \Sigma K \Sigma K^T \Sigma) \leq c_A^2 C_A^4 \|\Sigma\|_{\textrm{op}}^2 \|\Sigma^{-1}\|_{\textrm{op}}^2 \Big(1 + \trace_{\Sigma}(B) \Big)^2 \trace(\Sigma).
    $$
    Similarly, the bound
    $$ \trace((B+d \mathbf{I_d}) \Sigma) \leq (c_A^2 + d)\trace(\Sigma)
    $$
    holds.
    We conclude that
    $$ \mathcal{R}_m(\mathbf{I}_d, Q_n) \leq \frac{(c_A^2 +d)\trace(\Sigma)}{m} + \frac{c_A^2 C_A^4 \|\Sigma\|_{\textrm{op}}^2 \|\Sigma^{-1}\|_{\textrm{op}}^2 \Big(1 + \trace_{\Sigma}(B) \Big)^2 \trace(\Sigma)}{n^2} + O \Big( \frac{1}{mn} \Big).
    $$
\end{proof}

At this point, the proof of Theorem \ref{thm:indomaingen} follows simply by combining the previous lemmas.

\begin{proof}[Proof of Theorem \ref{thm:indomaingen}]
    We use Lemma \ref{lem: generrordecomp} to decompose the error into a sum of truncation error, context mismatch error, statistical error, and approximation error. Lemma \ref{lem:contextmismatchbound} bounds the context mismatch error by $O \left( \left| \frac{1}{n} - \frac{1}{m} \right| \right)$; Lemma \ref{lem: truncerrorbound} bounds the truncation error by $O \left( \sqrt{2\exp \left(-\frac{nt^2}{2} \right) + 2\exp \left(-\frac{t^2}{C \|\Sigma\|_{\op}} \right)} \right)$; Lemma \ref{lem: staterrorbd} controls the statistical error by $O \left(\frac{1+\sqrt{\delta}}{\sqrt{N}} \right)$ in the general case and $O \left( \frac{1+\delta}{N}\right)$ under the structural condition, with probability $1-\delta-\delta_{N,n,t}$, and Lemma \ref{lem: approxerrorbd} bounds the approximation error by $O \left(\frac{1}{m} + \frac{1}{n^2} \right).$ By setting $t, \delta = O\left(\textrm{polylog}(N) \right)$, we can take the truncation error to be $O \left(\frac{1}{N^2} \right)$. Additionally, under the assumption that $m \leq n$, the context mismatch error is bounded by $O \left( \frac{1}{m} \right)$, and hence can be absorbed by the approximation error. Thus, the statistical and approximation errors dominate, and, up to logarithmic factors in $N$ and constants which are polynomial in problem parameters, the final bound is
    \begin{align*}
        \mathcal{R}_m(\widehat{\theta}) = \begin{cases}
            O \left(\frac{1}{m} + \frac{1}{n^2} + \frac{1}{\sqrt{N}} \right), \; \textrm{in general} \\
            O \left(\frac{1}{m} + \frac{1}{n^2} + \frac{1}{N} \right), \; \textrm{under the structural condition}
        \end{cases}
    \end{align*}
    with probability $1 - \frac{1}{\textrm{poly}(N)}.$
\end{proof}

\section{Proofs for Section \ref{subsec:ood}}
Recall that in Section \ref{sec: pfsketches}, we already proved Theorems \ref{thm: taskshift} and \ref{thm: diversitysufficient}. We begin by providing a more precise statement of Proposition \ref{thm: oodgennecessary} below.

\begin{proposition}\label{prop: simultaneousdiagonalizable}
    Let $P_{\A}$ and $P_{\A}'$ denote the pre-training and downstream task distributions, and suppose that the matrices in $\textrm{supp}(P_{\A})$ are simultaneously diagonalizable for a common orthogonal  matrix $U$. Suppose additionally that there exist matrices $A_1, A_2 \in \textrm{supp}(P_{\A})$ and $A_1' A_2' \in \textrm{supp}(P_{\A}')$ such that $A_1 A_2^{-1}$ and $A_1' (A_2')^{-1}$ have no repeated eigenvalues. 
    \begin{enumerate}
        \item If $\textrm{supp}(P_{\A}')$ is also simultaneously diagonalizable with respect to $U$, then $P_{\A}$ is diverse relative to $P_{\A}'$.
        \item If there exist matrices $A_3', A_4' \in \textrm{supp}(P_{\A}')$ such that $A_3' (A_4')^{-1}$ is not diagonalizable with respect to $U$, then $P_{\A}$ is not diverse relative to $P_{\A}'$.
    \end{enumerate}
\end{proposition}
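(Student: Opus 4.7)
The plan is to apply Proposition~\ref{minimizercharacterization} to obtain an explicit parametrization of $\mathcal{M}_\infty(P_\A)$, and then check membership in $\mathcal{M}_\infty(P_\A')$ in the two cases. Since $\textrm{supp}(P_\A)$ is simultaneously diagonalized by $U$, every element of $\mathcal{S}(P_\A) = \{A_1 A_2^{-1} : A_1, A_2 \in \textrm{supp}(P_\A)\}$ has the form $U D U^T$ with $D$ diagonal. The hypothesis that some $A_1 A_2^{-1}$ has distinct eigenvalues guarantees that the commutant of this single matrix inside $\R^{d \times d}$, and hence of all of $\mathcal{S}(P_\A)$, is exactly the set of matrices of the form $U D_P U^T$ with $D_P$ diagonal. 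For any such $P$, any $A_0 \in \textrm{supp}(P_\A)$ commutes with $P$, so Proposition~\ref{minimizercharacterization} forces $Q = \Sigma^{-1} A_0 P^{-1} A_0^{-1} = \Sigma^{-1} P^{-1}$, yielding
\begin{equation*}
    \mathcal{M}_\infty(P_\A) = \bigl\{(U D_P U^T,\, \Sigma^{-1} U D_P^{-1} U^T) : D_P \text{ diagonal and invertible}\bigr\}.
\end{equation*}

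For part 1, assume $\textrm{supp}(P_\A')$ is simultaneously diagonalized by $U$. Then every element of $\mathcal{S}(P_\A')$ also has the form $U D' U^T$ and hence commutes with any $P = U D_P U^T$. Moreover, for any $A_0' \in \textrm{supp}(P_\A')$ we have $\Sigma^{-1} A_0' P^{-1} (A_0')^{-1} = \Sigma^{-1} P^{-1} = Q$. Applying Proposition~\ref{minimizercharacterization} to $P_\A'$ shows that every $(P,Q) \in \mathcal{M}_\infty(P_\A)$ lies in $\mathcal{M}_\infty(P_\A')$, which is exactly the definition of diversity.

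For part 2, I would exhibit a minimizer of $\mathcal{R}_\infty$ that fails to minimize $\mathcal{R}_\infty'$. Let $B := U^T A_3' (A_4')^{-1} U$; by hypothesis $A_3'(A_4')^{-1}$ is not of the form $U D U^T$ with $D$ diagonal, so $B$ is not diagonal. Pick indices $i \neq j$ with $B_{ij} \neq 0$ and choose an invertible diagonal $D_P$ satisfying $(D_P)_{ii} \neq (D_P)_{jj}$; a direct computation then gives $D_P B \neq B D_P$, so $P := U D_P U^T$ does not commute with $A_3'(A_4')^{-1} \in \mathcal{S}(P_\A')$. By Proposition~\ref{minimizercharacterization} applied to $P_\A'$, the pair $\theta := (P, \Sigma^{-1} P^{-1})$ is not in $\mathcal{M}_\infty(P_\A')$, while the characterization of the first paragraph places $\theta \in \mathcal{M}_\infty(P_\A)$. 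Since both risks have minimum value zero, $\mathcal{R}_\infty(\theta) = 0$ and $\mathcal{R}_\infty'(\theta)$ is a fixed positive constant independent of any sample sizes, giving the $\Omega(1)$ bound.

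The main obstacle, in my view, is the careful use of the distinct-eigenvalue hypothesis to collapse the centralizer condition to the commutant of a single diagonalizable matrix; everything else is linear-algebraic bookkeeping combined with two invocations of Proposition~\ref{minimizercharacterization}. The additional hypothesis that $A_1'(A_2')^{-1}$ has distinct eigenvalues is not required for the argument sketched here, and I would read it as one of the "mild conditions" of the informal statement that rules out pathological downstream distributions; should a quantitative lower bound on $\mathcal{R}_\infty'(\theta)$ in terms of $P_\A'$ be desired, one can use the same characterization applied on the $P_\A'$ side to obtain it.
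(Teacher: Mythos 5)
Your proof is correct and follows essentially the same route as the paper: both arguments use Proposition~\ref{minimizercharacterization} to characterize $\mathcal{M}_{\infty}(P_{\A})$ as $\{(UD_PU^T,\Sigma^{-1}UD_P^{-1}U^T)\}$ (the paper isolates this as a separate lemma) and then check membership in $\mathcal{M}_{\infty}(P_{\A}')$, with your entrywise commutator computation in part 2 replacing the paper's appeal to simultaneous diagonalizability of commuting matrices with distinct eigenvalues. Your observation that the distinct-eigenvalue hypothesis on $A_1'(A_2')^{-1}$ is only needed if one wants equality of the two minimizer sets (as the paper asserts), rather than the inclusion that defines diversity, is accurate.
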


In order to prove Proposition \ref{prop: simultaneousdiagonalizable}, we state and prove the following lemma. 

\begin{lemma}\label{lem: simuldiaglemma}
    Let $P_{\A}$ be a task distribution satisfying Assumption \ref{assumption:taskanddatadistr}. Suppose that the support of $P_{\A}$ is simultaneously diagonalizable with a common orthogonal diagonalizing matrix $U \in \R^{d \times d}$. Assume in addition that there exist $A_1, A_2 \in \textrm{supp}(P_{\A})$ such that $A_1 A_2^{-1}$ has distinct eigenvalues. Then $M_{\infty}(P_{\A}) = \Theta_{U,\Sigma},$ where $$\Theta_{U,\Sigma} := \left\{(P,\Sigma^{-1}P^{-1}): P = UDU^T, \; D = \textrm{diag}(\lambda_1, \dots, \lambda_d) \right\}.$$
\end{lemma}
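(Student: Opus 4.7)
The plan is to invoke Proposition \ref{minimizercharacterization}, which characterizes $\mathcal{M}_\infty(P_\A)$ as the set of pairs $(P,Q)$ such that $P$ commutes with every element of $\mathcal{S}(P_\A) := \{A_1 A_2^{-1}: A_1, A_2 \in \textrm{supp}(P_\A)\}$ and $Q = \Sigma^{-1} A_0 P^{-1} A_0^{-1}$ for any fixed $A_0 \in \textrm{supp}(P_\A)$. Under the simultaneous diagonalizability hypothesis, each $A \in \textrm{supp}(P_\A)$ has the form $A = U D_A U^T$ for a diagonal $D_A$, and consequently every element of $\mathcal{S}(P_\A)$ admits the form $U D U^T$ for some diagonal $D$.

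Next I would pin down the commutant $\mathcal{C}(\mathcal{S}(P_\A))$. By hypothesis there exist $A_1, A_2 \in \textrm{supp}(P_\A)$ with $A_1 A_2^{-1} = U D^{\ast} U^T$, where $D^\ast$ has $d$ pairwise distinct diagonal entries. A classical linear-algebra fact states that the centralizer of a matrix with simple spectrum is as small as possible: if $P$ commutes with $U D^\ast U^T$, then $U^T P U$ commutes with the diagonal matrix $D^\ast$ of distinct entries, forcing $U^T P U$ to be diagonal. Hence $P = U D U^T$ for some diagonal $D$. Conversely, every matrix of this form commutes with all $U D' U^T$ and thus with all of $\mathcal{S}(P_\A)$, so $\mathcal{C}(\mathcal{S}(P_\A)) = \{U D U^T : D \text{ diagonal}\}$.

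Finally I would simplify the formula for $Q$ in this setting. For $P = U D U^T$ and $A_0 = U D_{A_0} U^T$, the matrices $A_0$, $A_0^{-1}$, and $P^{-1}$ are all diagonal in the $U$-basis and therefore mutually commute, giving $A_0 P^{-1} A_0^{-1} = P^{-1}$ and hence $Q = \Sigma^{-1} P^{-1}$. Combining with the commutant description yields $\mathcal{M}_\infty(P_\A) = \Theta_{U,\Sigma}$, as claimed. (Invertibility of $P$ — implicit in the formula for $Q$ — is automatic: any minimizer must satisfy $P A^{-1} \Sigma Q = A^{-1}$ with $A^{-1}$ invertible, which forces both $P$ and $Q$ to be invertible.)

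I do not expect a serious obstacle here; the argument is essentially a direct unpacking of Proposition \ref{minimizercharacterization} under simultaneous diagonalizability, with the distinct-eigenvalue hypothesis serving only to ensure the commutant is as small as possible and to exclude degenerate scenarios where $P$ could lie outside $\{U D U^T\}$.
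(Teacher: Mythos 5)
Your proposal is correct and follows essentially the same route as the paper: both reduce to Proposition \ref{minimizercharacterization} and use the distinct-eigenvalue hypothesis on $A_1A_2^{-1}$ to force $U^TPU$ to be diagonal (the paper phrases this as "commuting diagonalizable matrices, one with simple spectrum, are simultaneously diagonalizable," which is the same linear-algebra fact as your centralizer computation), then simplify $Q = \Sigma^{-1}A_0P^{-1}A_0^{-1} = \Sigma^{-1}P^{-1}$ by commutativity in the $U$-basis. Your explicit remarks on the converse inclusion and on the invertibility of $P$ are welcome additions but not a different argument.
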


\begin{proof}
    By Proposition \ref{minimizercharacterization}, a parameter $(P,Q)$ belongs to $\mathcal{M}_{\infty}(P_{\A})$ if and only if $P$ commutes with all products of the form $\{A_i A_j^{-1}: A_i, A_j \in \textrm{supp}(P_{\A})\}$, in which case $Q$ is defined by $Q = \Sigma^{-1} A_0 P^{-1} A_0^{-1}$ for any $A_0 \in \textrm{supp}(P_{\A}).$ Let $A_1, A_2 \in \textrm{supp}(P_{\A})$ be as defined in the statement of the lemma. Since $P$ and $A_1 A_2^{-1}$ are commuting diagonalizing matrices and $A_1 A_2^{-1}$ has no repeated eigenvalues (\cite{strang2022introduction}), they must be simultaneously diagonalizable. This implies that $P$ is diagonal in the basis $U$, and hence $Q$ is given by $Q = \Sigma^{-1} A_0 P^{-1} A_0^{-1} = \Sigma^{-1} P^{-1}.$
\end{proof}

\begin{proof}[Proof of Proposition \ref{prop: simultaneousdiagonalizable}]
    For 1), if the support of $P_{\A}'$ is also simultaneously diagonalizable with respect to $U$, then Lemma \ref{lem: simuldiaglemma} implies that $\mathcal{M}_{\infty}(P_{\A}) = \mathcal{M}_{\infty}(P_{\A}') = \Theta_{U,\Sigma}$, where $\Theta_{U,\Sigma}$, where $\Theta_{U,\Sigma}$ is as defined in the statement of Lemma \ref{lem: simuldiaglemma}. This proves that if the support of $P_{\A}'$ is also simultaneously diagonalizable with respect to $U$, then $P_{\A}$ is diverse.
    
    For 2), we must find a minimizer of $\mathcal{R}_{\infty}$ which is not a minimizer of $\mathcal{R}_{\infty}'$. Consider the parameter $\theta = (P, \Sigma^{-1} P^{-1}),$ where $P = U D U^T$ for $D$ an invertible diagonal matrix with no repeated entries. By Lemma \ref{lem: simuldiaglemma}, $\theta$ is a minimizer of $\mathcal{R}_{\infty}$. Let $A_3', A_4' \in \textrm{supp}(P_{\A}')$ be such that $A_3' (A_4')^{-1}$ is not diagonalizable with respect to $U$. Since $A_3' (A_4')^{-1}$ and $P$ are not simultaneously diagonalizable and $P$ has no repeated eigenvalues (\cite{strang2022introduction}), $P$ does not commute with $A_3' (A_4')^{-1}$. By Proposition \ref{minimizercharacterization}, $\theta$ is therefore not a minimizer of $\mathcal{R}_{\infty}'$, completing the proof.
\end{proof}

It only remains to prove Theorem \ref{thm: covariateshift}. In the prior results, we have considered the covariance matrix $\Sigma$ of the normally-distributed covariates to be fixed. To study shifts in the covariate distribution, it will be convenient to use the notation $\mathcal{R}_m^{\Sigma}$ for the $m$-sample population risk where the covariates $x_1, \dots, x_{n+1}$ are normally distributed with covariance $\Sigma.$ We begin by stating a more formal version of Theorem \ref{thm: covariateshift} where the constants are more explicit. 

\begin{theorem}\label{covarianceshiftformal}
     Let $\Sigma =  W \Lambda W^T$ and $\tilde{\Sigma} = \tilde{W} \tilde{\Lambda} \tilde{W}^T$ be two covariance matrices, let $(\widehat{P}, \widehat{Q})$ be minimizers of the empirical risk when the in-context examples follow the distribution $\mathcal{N}(0,\Sigma)$ and take $M > 0$ such that $\max\Big( \|\widehat{P}\|_{F}, \|\widehat{Q}\|_{F} \Big) \leq M.$ Then
    \begin{align*}  \mathcal{R}_m^{\tilde{\Sigma}}(\widehat{P},\widehat{Q}) &\lesssim \mathcal{R}_m^{\Sigma}(\widehat{P},\widehat{Q}) + c_A^2 M^4 \max(\|\Sigma\|_{\textrm{op}}, \|\tilde{\Sigma}\|_{\textrm{op}})^2 \|\Sigma - \tilde{\Sigma}\|_{\textrm{op}} \\ &+ \frac{1}{m} \cdot c_A^2 M^4 \max(\|\Sigma\|_{\textrm{op}}, \|\tilde{\Sigma}\|_{\textrm{op}})^2 \trace(\tilde{\Sigma}) \Big( \|\Sigma - \tilde{\Sigma}\|_{\textrm{op}} + \|\Lambda - \tilde{\Lambda}\|_1 + \|W - \tilde{W}\|_{\textrm{op}} \Big).
    \end{align*}
\end{theorem}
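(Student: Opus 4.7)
The plan is to invoke the explicit expansion of the $m$-sample population risk provided by Lemma~\ref{lem: popriskexpression} to write
\[
\mathcal{R}_m^{\Sigma}(\theta) \;=\; \mathcal{R}_\infty^{\Sigma}(\theta) \;+\; \frac{1}{m}\,\mathcal{V}^{\Sigma}(\theta),
\]
where $\mathcal{R}_\infty^{\Sigma}(\theta)=\E_A\|(PA^{-1}\Sigma Q-A^{-1})\Sigma^{1/2}\|_F^2$ is the limiting risk and $\mathcal{V}^{\Sigma}(\theta)$ is the Gaussian fourth-moment correction from Lemma~\ref{lem: popriskexpression}, and analogously for $\tilde\Sigma$. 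Writing $\mathcal{R}_m^{\tilde\Sigma}(\widehat\theta)-\mathcal{R}_m^{\Sigma}(\widehat\theta)=[\mathcal{R}_\infty^{\tilde\Sigma}(\widehat\theta)-\mathcal{R}_\infty^{\Sigma}(\widehat\theta)]+\frac{1}{m}[\mathcal{V}^{\tilde\Sigma}(\widehat\theta)-\mathcal{V}^{\Sigma}(\widehat\theta)]$ reduces the claim to two perturbation estimates, with prefactors governed uniformly by $\|\widehat P\|_F,\|\widehat Q\|_F\leq M$ and $\|A^{-1}\|_{\op}\leq c_\A$.

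\textbf{Step 1 (limiting-risk difference).} Expanding the squared Frobenius norm absorbs the square root, so
\[
\mathcal{R}_\infty^{\Sigma}(\theta)\;=\;\trace\!\bigl(\E_A[A^{-T}A^{-1}]\,\Sigma\bigr)-2\,\E_A\!\trace(Q^T\Sigma A^{-T}A^{-1}\Sigma)+\E_A\!\trace(Q^T\Sigma A^{-T}P^TPA^{-1}\Sigma Q\Sigma),
\]
a polynomial of degree at most three in $\Sigma$ sandwiched by $P,Q,A^{-1}$. Using the telescoping identity $\Sigma^3-\tilde\Sigma^3=(\Sigma-\tilde\Sigma)\Sigma^2+\tilde\Sigma(\Sigma-\tilde\Sigma)\Sigma+\tilde\Sigma^2(\Sigma-\tilde\Sigma)$ termwise, combined with $|\trace(XY)|\leq\|X\|_{\op}\|Y\|_*$ and the uniform bounds on $\widehat P,\widehat Q,A^{-1}$, I obtain
\[
\bigl|\mathcal{R}_\infty^{\tilde\Sigma}(\widehat\theta)-\mathcal{R}_\infty^{\Sigma}(\widehat\theta)\bigr|\;\lesssim\;c_\A^2 M^4\max(\|\Sigma\|_{\op},\|\tilde\Sigma\|_{\op})^2\,\|\Sigma-\tilde\Sigma\|_{\op},
\]
which contributes the non-$1/m$ term in the claimed bound.

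\textbf{Step 2 (variance-term difference).} The moment term $\mathcal{V}^{\Sigma}(\theta)$ is a sum of products of traces of the form $\trace(P A^{-1}\Sigma K \Sigma A^{-1}P^T)$ and $\trace(\Sigma Q\Sigma Q^T)\,\trace(PA^{-1}\Sigma A^{-1}P^T)$. In each summand I factor out one copy of $\tilde\Sigma$ to produce $\trace(\tilde\Sigma)$ via the bound $|\trace(\tilde\Sigma M)|\leq\|M\|_{\op}\trace(\tilde\Sigma)$; the remaining $\Sigma$'s are replaced by $\tilde\Sigma$ one at a time by telescoping as in Step 1. The novel point is how to bound $\trace((\Sigma-\tilde\Sigma)M)$ using the eigendecompositions $\Sigma=W\Lambda W^T$, $\tilde\Sigma=U\tilde\Lambda U^T$: write
\[
\trace\bigl((\Sigma-\tilde\Sigma)M\bigr)\;=\;\trace\bigl((\Lambda-\tilde\Lambda)\,W^T M W\bigr)\;+\;\trace\bigl(\tilde\Lambda\,(U^T M U-W^T M W)\bigr),
\]
after aligning bases. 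The first piece is controlled by $\|W^T M W\|_{\op}\|\Lambda-\tilde\Lambda\|_1$ and the second by $\trace(\tilde\Sigma)\|M\|_{\op}\|U-W\|_{\op}$ (using a standard Lipschitz estimate for the map $V\mapsto V^T M V$ on the orthogonal group together with $\trace(\tilde\Lambda)=\trace(\tilde\Sigma)$). Together with the uniform bounds on $\widehat P,\widehat Q,A^{-1}$, this gives the stated $\frac{1}{m}$-prefactor in the theorem.

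\textbf{Main obstacle.} The most delicate step is Step 2: to obtain a bound involving $\trace(\tilde\Sigma)$ rather than $d\|\Sigma-\tilde\Sigma\|_{\op}$, one must at every stage of the telescoping keep one $\tilde\Sigma$-factor inside a trace (rather than passing to operator norm) and then split the eigenvalue perturbation from the eigenvector perturbation as above. Bookkeeping which of the three pieces (eigenvalue shift, eigenvector rotation, or plain operator-norm perturbation) is most efficient for each of the several summands in $\mathcal{V}^{\Sigma}-\mathcal{V}^{\tilde\Sigma}$, while maintaining the cleanly factorized constants $c_\A^2 M^4\max(\|\Sigma\|_{\op},\|\tilde\Sigma\|_{\op})^2\trace(\tilde\Sigma)$, is the principal technical difficulty. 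Combining Steps 1 and 2 and trivially rearranging $\mathcal{R}_m^{\tilde\Sigma}(\widehat\theta)\leq\mathcal{R}_m^{\Sigma}(\widehat\theta)+|\mathcal{R}_m^{\tilde\Sigma}(\widehat\theta)-\mathcal{R}_m^{\Sigma}(\widehat\theta)|$ yields Theorem~\ref{covarianceshiftformal}.
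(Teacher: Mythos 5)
Your proposal is correct and follows essentially the same route as the paper: both decompose the $m$-sample risk via Lemma \ref{lem: popriskexpression} into a limiting risk plus a $\frac{1}{m}$ correction, bound the limiting-risk difference termwise by telescoping in $\Sigma$ with H\"older and the uniform bounds $\|A^{-1}\|_{\op}\leq c_{\A}$, $\|P\|,\|Q\|\leq M$, and handle the $\frac{1}{m}$ term by splitting the perturbation of the weighted trace into an eigenvalue part ($\|\Lambda-\tilde{\Lambda}\|_1$) and an eigenvector part ($\|W-\tilde{W}\|_{\op}$), which is exactly the content of the paper's Lemmas \ref{normboundxi} and \ref{technicallemma2}. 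The only cosmetic difference is that the paper first passes to a supremum over the ball $\{\|\theta\|\leq M\}$ before estimating the difference, which does not change the argument.
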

Theorem \ref{thm: covariateshift} then follows from Theorem \ref{covarianceshiftformal} by bounding $\|\Lambda - \tilde{\Lambda}\|_1 \lesssim \|\Sigma - \tilde{\Sigma}\|_{\textrm{op}}$, merging the term
$$ \frac{1}{m} \cdot c_A^2 M^4 \max(\|\Sigma\|_{\textrm{op}}, \|\tilde{\Sigma}\|_{\textrm{op}})^2 \trace(\tilde{\Sigma}) \Big( \|\Sigma - \tilde{\Sigma}\|_{\textrm{op}} + \|\Lambda - \tilde{\Lambda}\|_1 \Big)
$$
into the second term, and omitting the constant factors.

\begin{proof}[Proof of Theorem \ref{covarianceshiftformal}]
        By the triangle inequality, we have
    \begin{equation}\label{trineq} \mathcal{R}_m^{\tilde{\Sigma}}(\widehat{P},\widehat{Q}) \leq \mathcal{R}_m^{\Sigma}(\widehat{P},\widehat{Q}) + \sup_{\|P\|_{\textrm{op}}, \|Q\|_{\textrm{op}} \leq M} \Big| \mathcal{R}_m^{\tilde{\Sigma}}(P,Q) - \mathcal{R}_m^{\Sigma}(P,Q) \Big|.
    \end{equation}
    It therefore suffices to bound the second term. From the proof of Proposition \ref{lem: popriskexpression}, we know that
    \begin{align}\label{traceequation} \mathcal{R}_m^{\Sigma}(P,Q) &= \E_{A}\Big[\frac{m+1}{m}  \trace(PA^{-1} \Sigma Q \Sigma Q^T \Sigma A^{-1}P^T +\frac{\trace_{\Sigma}(Q\Sigma Q^T)}{m} \trace(PA^{-1} \Sigma A^{-1} P^T )\Big] \\
    &+ \E_{A} \Big[ \trace(A^{-1} \Sigma A^{-1}) - \trace(PA^{-1} \Sigma Q \Sigma A^{-1}) - \trace(A^{-1} \Sigma Q^T \Sigma A^{-1} P^T) \Big].
    \end{align}

    Similarly, we have
    \begin{align}\label{traceequation2} \mathcal{R}_m^{\tilde{\Sigma}}(P,Q) &= \E_{A}\Big[\frac{m+1}{m}  \trace(PA^{-1} \tilde{\Sigma} Q \tilde{\Sigma} Q^T \tilde{\Sigma} A^{-1}P^T +\frac{\trace_{\tilde{\Sigma}}(Q\tilde{\Sigma} Q^T)}{m} \trace(PA^{-1} \tilde{\Sigma} A^{-1} P^T )\Big] \\
    &+ \E_{A} \Big[ \trace(A^{-1} \tilde{\Sigma} A^{-1}) - \trace(PA^{-1} \tilde{\Sigma} Q \tilde{\Sigma} A^{-1}) - \trace(A^{-1} \tilde{\Sigma} Q^T \tilde{\Sigma} A^{-1} P^T) \Big].
    \end{align}
    We seek to bound the difference $\Big| \mathcal{R}_m^{\Sigma}(\theta) - \mathcal{R}_m^{\tilde{\Sigma}}(\theta) \Big|$ by bounding the respective differences of each term appearing in the expressions for $\mathcal{R}_m^{\Sigma}$ and $\mathcal{R}_m^{\tilde{\Sigma}}.$ By a simple applications of H\"older's inequality and the triangle inequality, we see that
    \begin{align*}
        \E_{A}\trace(PA^{-1}( \Sigma Q \Sigma - \tilde{\Sigma} Q \tilde{\Sigma}) A^{-1}) &\leq \E_{A}\|A^{-1} P A^{-1}\|_F \| \Sigma Q \Sigma - \tilde{\Sigma} Q \tilde{\Sigma}\|_F \\
        &\leq c_A^2 \|P\|_F \Big( \|(\Sigma - \tilde{\Sigma})Q \Sigma\|_F + \|\tilde{\Sigma} Q ( \Sigma - \tilde{\Sigma})\|_F\Big)  \\
        &\leq c_A^2 \|P\|_F \Big(\|Q \Sigma\|_F + \|\tilde{\Sigma} Q\|_F \Big) \|\Sigma-\tilde{\Sigma}\|_{\textrm{op}} \\
        &\leq 2 c_A^2 \|P\|_F \|Q_f \max(\|\Sigma\|_{\textrm{op}}, \|\tilde{\Sigma}\|_{\textrm{op}}) \|\Sigma - \tilde{\Sigma}\|_{\textrm{op}} \\
        &= 2 c_A^2 M^2  \max(\|\Sigma\|_{\textrm{op}}, \|\tilde{\Sigma}\|_{\textrm{op}}) \|\Sigma - \tilde{\Sigma}\|_{\textrm{op}}.
    \end{align*}
    Analogous arguments can be used to prove the bounds
    $$ \E_{A} \trace(A^{-1} (\Sigma Q^T \Sigma - \tilde{\Sigma} Q^T \tilde{\Sigma}) A^{-1} P^T) \leq 2 c_A^2 M^2  \max(\|\Sigma\|_{\textrm{op}}, \|\tilde{\Sigma}\|_{\textrm{op}}) \|\Sigma - \tilde{\Sigma}\|_{\textrm{op}},
    $$
    $$ \E_{A} \trace(A^{-1} (\Sigma - \tilde{\Sigma}) A^{-1}) \leq c_A^2 \|\Sigma - \tilde{\Sigma}\|_{\textrm{op}}
    $$
    and
    \begin{align*}
        \E_{A} \trace(PA^{-1} (\Sigma Q \Sigma Q^T \Sigma - \tilde{\Sigma} Q  \tilde{\Sigma} Q^T  \tilde{\Sigma}) A^{-1} P^T) \leq c_A^2 M^4 \max(\|\Sigma\|_{\textrm{op}}, \|\tilde{\Sigma}\|_{\textrm{op}})^2 \|\Sigma - \tilde{\Sigma}\|_{\textrm{op}}.
    \end{align*}
    Notice that the term above dominates each of the preceding three terms. For the final term, we have
    \begin{align*}
        &\trace_{\Sigma}(Q \Sigma Q^T) \trace(P A^{-1} \Sigma A^{-1} P^T) - \trace_{\tilde{\Sigma}}(Q \tilde{\Sigma} Q^T) \trace(P A^{-1} \tilde{\Sigma} A^{-1} P^T) \\ &\leq \Big|\trace_{\Sigma}(Q \Sigma Q^T) - \trace_{\tilde{\Sigma}}(Q \tilde{\Sigma} Q^T) \Big| \Big|\trace(P A^{-1} \Sigma A^{-1} P^T) \Big| \\
        & \qquad + \Big| \trace_{\tilde{\Sigma}}(Q \tilde{\Sigma} Q^T) \Big| \Big|\trace(P A^{-1} (\Sigma - \tilde{\Sigma}) A^{-1} P^T) \Big|.
    \end{align*}
    By Lemma \ref{normboundxi} and Holder's inequality, the second term satisfies
    \begin{align*}
        &\Big| \trace_{\tilde{\Sigma}}(Q \tilde{\Sigma} Q^T) \Big| \Big|\trace(P A^{-1} (\Sigma - \tilde{\Sigma}) A^{-1} P^T) \Big| \leq c_A^2 M^4 \|\tilde{\Sigma}\|_{\textrm{op}} \trace(\tilde{\Sigma}) \cdot \|\Sigma - \tilde{\Sigma}\|_{\textrm{op}}.
    \end{align*}
    Similarly, using Lemma \ref{technicallemma2}, the first term satisfies
    \begin{align*}
       & \Big|\trace_{\Sigma}(Q \Sigma Q^T) - \trace_{\tilde{\Sigma}}(Q \tilde{\Sigma} Q^T) \Big| \Big|\trace(P A^{-1} \Sigma A^{-1} P^T) \Big| \\ &\leq c_A^2 M^4 \|\Sigma\|_{\textrm{op}} \Big( \trace(\tilde{\Sigma}) \|\Sigma - \tilde{\Sigma}\|_{\textrm{op}} + \|\Sigma\|_{\textrm{op}} \Big(\|\Lambda - \tilde{\Lambda}\|_1 + 2 \trace(\tilde{\Sigma}) \|W - \tilde{W}\|_{\textrm{op}} \Big) \Big)
    \end{align*}
    Combining the estimates for each individual term and taking the supremum over the all $P,Q$ with Frobenius norm bounded by $M$ yields the final bound
    \begin{align*}
        \mathcal{R}_m^{\tilde{\Sigma}}(\widehat{P},\widehat{Q}) &\lesssim \mathcal{R}_m^{\Sigma}(\widehat{P},\widehat{Q}) + c_A^2 M^4 \max(\|\Sigma\|_{\textrm{op}}, \|\tilde{\Sigma}\|_{\textrm{op}})^2 \|\Sigma - \tilde{\Sigma}\|_{\textrm{op}} \\ &+ \frac{1}{m} \cdot c_A^2 M^4 \max(\|\Sigma\|_{\textrm{op}}, \|\tilde{\Sigma}\|_{\textrm{op}})^2 \trace(\tilde{\Sigma}) \Big( \|\Sigma - \tilde{\Sigma}\|_{\textrm{op}} + \|\Lambda - \tilde{\Lambda}\|_1 + \|W - \tilde{W}\|_{\textrm{op}} \Big).
    \end{align*}
\end{proof}

\section{Proofs for Section \ref{sec:icon}}

\begin{proof}[Proof of Corollary \ref{cor: generalizationellipticpde}]
    We first bound the quantities $\epsilon_{\mathcal{E},\mathcal{D}}$ and $C_{\mathcal{D}}$ when $\mathcal{T}$ is the set of solution operators corresponding to 1D uniformly-elliptic PDEs, the embedding $\mathcal{E}$ is the projection operator onto the first $d$ elements of a $P^1$ finite element basis, and the norm $\| \cdot \|_{\my}$ is the $H^1$-norm. To bound the discretization error constant $\epsilon_{\mathcal{E},\mathcal{D}}$, let $u_d$ denote the solution to the elliptic PDE \eqref{eqn: ellipticpde} when the coefficients are projected onto the span of the first $d$ finite elements. Then by Theorem 3.16 in \cite{ern2004theory}, the estimate
    $$ \|u-u_d\|_{H^1} \lesssim \frac{1}{d} \|u\|_{H^2}
    $$
    holds. Moreover, by elliptic regularity, we have
    $$ \|u\|_{H^2} \lesssim \|f\|_{L^2}.
    $$
    It follows that
    \begin{align*}
        \E\left[ \|u-u_d\|_{H^1}^2 \right] \lesssim \frac{1}{d^2} \E \left[ \|u\|_{H^2}^2\right] \lesssim \frac{1}{d^2} \E[\|f\|_{L^2}^2] = \frac{\trace(\Sigma_{\mathcal{X}})}{d^2},
    \end{align*}
    where $\Sigma_{\mathcal{X}}: L^2 \rightarrow L^2$ denotes the trace class covariance of the infinite-dimensional Gaussian covariate $f$. This proves that
    $$ \epsilon_{\mathcal{E},\mathcal{D}}^2 \lesssim \frac{\trace(\Sigma_{\mathcal{X}})}{d^2}.
    $$
    Next, we seek a bound on the constant $C_{\mathcal{D}}$ such that $\|g\|_{H^1} \leq C_{\mathcal{D}} \|g\|_{L^2}$ for all functions $g$ in the span of the first $d$ finite elements $\{\phi_k\}_{k=1}^{d}$. To this end, let $\Phi^\prime$ be the stiffness matrix defined by $\Phi^\prime_{ij} = \langle \phi^\prime_i, \phi^\prime_j \rangle_{L^2(\Omega)}$ and let $\Phi$ be the mass matrix defined by $\Phi_{ij} = \langle \phi_i, \phi_j\rangle_{L^2(\Omega)},$ both of which are symmetric positive-definite matrices. Then for any $g = \sum_{k=1}^{d} c_k \phi_k \in \textrm{span}\{\phi_k\}_{k=1}^{d},$ we have
        \begin{align*}
        \|g\|_{H^1(\Omega)}^2 &= \|g\|_{L^2(\Omega)}^2 + \Big\| \sum_{k=1}^{d} c_k \phi_k'(x) \Big\|_{L^2(\Omega)}^2 \\
        &=   c^T (\Phi + \Phi^\prime) c \\
        &= \tilde{c}(\mathbf{I}_d + \Phi^{-1/2}\Phi^\prime\Phi^{-1/2})\tilde{c}\\
        & \leq ( 1+ \lambda_{\max} (\Phi^{-1/2}\Phi^\prime\Phi^{-1/2})) \|\tilde{c}\|^2\\
        &  =(1+ \lambda_{\max} (\Phi^{-1/2}\Phi^\prime\Phi^{-1/2})) \|g\|^2_{L^2(\Omega)},
    \end{align*}
    where $\tilde{c} = \Phi c.$ It can be shown that for piecewise linear FEM on 1D, the stiffness and mass matrices satisfy $\lambda_{\max} (\Phi^{-1/2}\Phi^\prime\Phi^{-1/2}) \lesssim d^2$ (see e.g. equation (2.4) of \cite{boffi2010finite}). It follows that $C_{\mathcal{D}} \lesssim d^2.$ By Theorem \ref{thm: indomaingenoperator}, we have 
    \begin{align*}
         \E_{f_1, \dots, f_{m+1} \sim P_{\mx}, T \sim P_{\T}} \left[\left\|\md\left(y_{m+1}^{\widehat{\theta}}\right) - T(f_{m+1}) \right\|_{H^1(\Omega)}^2 \right] \lesssim \frac{1}{d^2} + d^2 \mathcal{R}_m(\widehat{\theta}). 
    \end{align*}
    Theorem \ref{thm:indomaingen} bounds the in-domain generalization error $\mathcal{R}_m(\widehat{\theta})$ by 
    $$ \mathcal{R}_m(\widehat{\theta}) \lesssim \frac{1}{m} + \frac{1}{n^2} + \frac{1}{\sqrt{N}}.
    $$
    However, the implicit constants hide a dependence on the discretization size $d.$ Looking at the proof of Theorem \ref{thm:indomaingen}, we see that
    \begin{align*}
         \mathcal{R}_m(\widehat{\theta}) \lesssim \frac{1}{m} + \frac{C_{\A}^4 \|\Sigma^{-1}\|_{\textrm{op}}}{n^2} + \frac{d^2 \|\Sigma^{-1}\|_{\op}^4}{\sqrt{N}},
    \end{align*}
    where the remaining implicit constants are bounded as $d \rightarrow \infty.$ The constant $C_{\mathcal{A}}$ is a bound on the FEM-discretization of the elliptic differential operator defining equation \eqref{eqn: ellipticpde}, which can be shown to be of order $d^2$ ($C_{\A}$ is bounded, e.g., by a constant multiple of the maximum eigenvalue of the discrete Laplace matrix). The matrix $\Sigma$ is the covariance of the source term $f \in L^2$ after being projected onto the span of $\{\phi_k\}_{k=1}^{d}.$ When $\Sigma_{\mathcal{X}} = \left( -\Delta + I \right)^{-\alpha}$ for some $\alpha>0$ which controls the smoothness of the source term, it follows from the inverse inequalities \cite[Lemma 12.1]{ern2004theory} that $\|\Sigma^{-1}\|_{\textrm{op}} \lesssim d^{2\alpha}$. Inserting these bounds into the previous estimate gives the final bound on the solution recovery error
    \begin{align*}
        \E \left[\left\|\md\left(y_{m+1}^{\widehat{\theta}}\right) - T(f_{m+1}) \right\|_{H^1(\Omega)}^2 \right] \lesssim \frac{1}{d^2} + \frac{d^2}{m} + \frac{d^{10+4\alpha}}{n^2} + \frac{d^{4+8\alpha}}{\sqrt{N}},
    \end{align*}
    where we have hidden all constants that do not depend on $d$. We conclude the proof.
\end{proof}

\section{Auxiliary lemmas}
We make frequent use of the following lemma to compute expectations of products of empirical covariance matrices.

\begin{lemma}\label{technicallemma}
    Let $\{x_1, \dots, x_n\} \subseteq \R^d$ be iid samples from $\mathcal{N}(0,\Sigma)$ and assume that $\Sigma = W \Lambda W^T$, where $\Lambda = \diag(\sigma_1^2, \dots, \sigma_d^2).$ Let $X_n = \frac{1}{n} \sum_{k=1}^{n} x_k x_k^T$ associated to $\{x_1, \dots, x_n\}$ and let $K \in \R^{d \times d}$ denote a deterministic symmetric matrix. Then
    $$ \E[X_n K X_n] = \frac{n+1}{n} \Sigma K \Sigma + \frac{\trace_{\Sigma}(K)}{n} \Sigma,
    $$
    where $\trace_{\Sigma}(K) := \sum_{\ell=1}^{d} \sigma_{\ell}^2 \langle K \varphi_{\ell},\varphi_{\ell} \rangle$ and $\varphi_{\ell} := W e_{\ell}$ denote the eigenvectors of $\Sigma.$
\end{lemma}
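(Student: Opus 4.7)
The plan is to expand $X_n$ as a sum of rank-one terms and reduce the computation to second- and fourth-moment identities for centered Gaussian vectors. Specifically, writing $X_n K X_n = \frac{1}{n^2}\sum_{i,j=1}^n x_i x_i^T K x_j x_j^T$ and taking expectations term by term, the sum splits cleanly into off-diagonal contributions with $i \neq j$ and diagonal contributions with $i = j$. By independence of distinct $x_i, x_j$, each of the $n(n-1)$ off-diagonal terms contributes exactly $\E[x_i x_i^T] K \E[x_j x_j^T] = \Sigma K \Sigma$, so these combine to give $\frac{n-1}{n}\Sigma K \Sigma$. The whole problem therefore reduces to evaluating the single quartic expectation $\E[x x^T K x x^T]$ for $x \sim \mathcal{N}(0,\Sigma)$.

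To compute that quartic expectation I would diagonalize: substitute $x = W y$ with $y \sim \mathcal{N}(0,\Lambda)$ and set $\tilde{K} = W^T K W$, so that
\begin{equation*}
    \E[x x^T K x x^T] = W \,\E[y y^T \tilde{K} y y^T]\, W^T.
\end{equation*}
Since the components of $y$ are independent with $\E[y_i y_j] = \delta_{ij}\sigma_i^2$, the $(i,j)$-entry of $\E[y y^T \tilde{K} y y^T]$ equals $\sum_{k,\ell} \tilde{K}_{k\ell}\,\E[y_i y_k y_\ell y_j]$, which I would evaluate using Isserlis' theorem (Wick's formula): $\E[y_i y_k y_\ell y_j] = \sigma_i^2 \sigma_\ell^2 \delta_{ik}\delta_{\ell j} + \sigma_i^2 \sigma_k^2 \delta_{i\ell}\delta_{kj} + \sigma_i^2 \sigma_k^2 \delta_{ij}\delta_{k\ell}$. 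Summing against $\tilde{K}_{k\ell}$ and using the symmetry of $\tilde{K}$ collapses this to $2\sigma_i^2 \sigma_j^2 \tilde{K}_{ij} + \delta_{ij}\sigma_i^2 \sum_k \sigma_k^2 \tilde{K}_{kk}$, i.e.\ $2\Lambda \tilde{K} \Lambda + \operatorname{tr}_\Sigma(K)\,\Lambda$ in matrix form (using that $\tilde{K}_{kk} = \langle K \varphi_k, \varphi_k\rangle$ by definition of $\varphi_k = W e_k$).

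Conjugating back by $W$ yields $\E[x x^T K x x^T] = 2\Sigma K \Sigma + \operatorname{tr}_\Sigma(K)\,\Sigma$. Combining with the off-diagonal contribution gives
\begin{equation*}
    \E[X_n K X_n] = \frac{n(n-1)}{n^2}\Sigma K \Sigma + \frac{1}{n^2}\cdot n\bigl(2\Sigma K \Sigma + \operatorname{tr}_\Sigma(K)\,\Sigma\bigr) = \frac{n+1}{n}\Sigma K \Sigma + \frac{\operatorname{tr}_\Sigma(K)}{n}\Sigma,
\end{equation*}
which is the stated identity. The only nontrivial step is the application of Isserlis' theorem to bookkeep the three Wick pairings; everything else is linearity and the independence of distinct samples, so I expect no real obstacle beyond carefully tracking the symmetry of $K$ (which is what allows the two off-diagonal Wick contributions to merge into the factor of $2$).
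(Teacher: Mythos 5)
Your proposal is correct and follows essentially the same route as the paper's proof: both split the double sum over sample indices into the independent ($i\neq j$) and coincident ($i=j$) parts, reduce the latter to a Gaussian fourth-moment computation, and handle general $\Sigma$ by conjugating with $W$ to pass to the diagonal case. The only difference is presentational — you invoke Isserlis' theorem to organize the Wick pairings, whereas the paper carries out the same moment computation entry by entry with an explicit $i=j$ versus $i\neq j$ case analysis.
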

\begin{proof}
        Let us first consider the case that $W = \mathbf{I}_d$, so that the covariance is diagonal with entries $\sigma^2_1, \dots, \sigma^2_d$. Observe that
    \begin{align*}
        \E[(X_n K X_n)_{ij}] &= \E\Bigg[\sum_{\ell, \ell'=1}^{d} \frac{1}{n^2}\Bigg( \sum_{k \neq k'} \langle x_k, e_i \rangle \langle x_{k'}, e_j \rangle \langle x_k, e_{\ell} \rangle \langle x_{k'}, e_{\ell'}, \rangle K_{\ell, \ell'} \\ &+ \sum_{k=1}^{n} \langle e_i, x_k \rangle \langle e_j, x_k \rangle \langle e_{\ell}, x_k \rangle \langle e_{\ell'}, x_k \rangle K_{\ell, \ell'} \Bigg)\Bigg].
    \end{align*}
    When $i \neq j$, we compute that
    \begin{align*}
        \sum_{\ell, \ell'=1}^{d} \E\Big[\langle x_k, e_i \rangle \langle x_{k'}, e_j \rangle \langle x_k, e_{\ell} \rangle \langle x_{k'}, e_{\ell'}, \rangle K_{\ell, \ell'} \Big] = \sigma_i^2 \sigma_j^2 K_{i,j}
    \end{align*}
    and 
    $$ \sum_{\ell, \ell'=1}^{d} \E\Big[\langle x_k, e_i \rangle \langle x_{k}, e_j \rangle \langle x_k, e_{\ell} \rangle \langle x_{k}, e_{\ell'}, \rangle K_{\ell, \ell'} \Big] = 2 \sigma_i^2 \sigma_j^2 K_{i,j}.
    $$
For $i = j$, we have
$$ \sum_{\ell,\ell'=1}^{d} \E \Big[\langle x_k, e_i \rangle \langle y_{k'}, e_i \rangle \langle x_k, e_{\ell} \rangle \langle x_{k'}, e_{\ell'}, \rangle K_{\ell, \ell'} \Big] = \sigma_i^4 K_{i,i}
$$
and
\begin{align*}
    \sum_{\ell,\ell'=1}^{d} \E \Big[ \langle x_k, e_i \rangle^2 \langle x_k, e_{\ell} \rangle \langle x_{k}, e_{\ell'}, \rangle K_{\ell, \ell'} \Big] = 2\sigma_i^4 K_{i,i} + \sigma_i^2 \sum_{\ell=1}^{d} \sigma_{\ell}^2 K_{\ell,\ell}.
\end{align*}
Putting everything together, we have shown that
$$ \E(X_n K X_n)_{i,j}] = \frac{n+1}{n} \sigma_i^2 \sigma_j^2 K_{i,j} + \delta_{ij} \cdot \frac{\trace_{\Sigma}(K)}{n} \sigma_i^2.
$$
The result then follows since $(\Sigma K \Sigma)_{i,j} = \sigma_i^2 \sigma_j^2 K_{i,j}.$ For general covariance $\Sigma = W \Lambda W^T$, we have $X_n K X_n = W (Z_n W^T K W Z_n) W^T$, where $Z_n$ is the empirical covariance matrix associated to $\{W^T x_1, \dots, W^T x_n\}$. Noting that $W^T x \sim N(0, \Lambda)$ for $x \sim N(0, \Sigma)$, we can apply the above result to $W^T K W:$
\begin{align*}
    \E[X_n K X_n] &= W \E[Z_n (W^T K W) Z_n] W^T \\
    &= W \Big( \frac{n+1}{n} \Lambda W^T K W \Lambda + \frac{\trace_{\Sigma}(K)}{n} \Lambda \Big) W^T \\
    &= \frac{n+1}{n} \Sigma K \Sigma + \frac{\trace_{\Sigma}(K)}{n} \Sigma.
\end{align*}
\end{proof}

We quickly put Lemma \ref{technicallemma} to work to give a tractable expression for the population risk.
\begin{lemma}\label{lem: popriskexpression}
    For $\theta = (P,Q)$, we have
    \begin{align*} \mathcal{R}_n(\theta) &:= \E_{A, X_n}[\|(PA^{-1}X_n Q - A^{-1})\Sigma^{1/2}\|_F^2] = \E_{A}[\|(PA^{-1}\Sigma Q - A^{-1})\Sigma^{1/2}\|_F^2] \\ &+ \frac{1}{n}\E_{A} \Big[\trace(PA^{-1} \Sigma Q \Sigma Q^T \Sigma A^{-1} P^T) + \trace_{\Sigma}(Q \Sigma Q^T) \trace(P A^{-1} \Sigma A^{-1} P^T) \Big].
    \end{align*}
\end{lemma}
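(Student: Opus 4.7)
The plan is to integrate out the Gaussian query $x_{n+1}$ first, then expand the resulting Frobenius norm, and finally integrate out the empirical covariance $X_n$ using Lemma \ref{technicallemma}.

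First, I would start from the definition
$$\mathcal{R}_n(\theta) = \E_{A, X_n, x_{n+1}}\bigl[\|(PA^{-1}X_n Q - A^{-1}) x_{n+1}\|^2\bigr].$$
Conditioning on $(A, X_n)$ and using the fact that $x_{n+1} \sim \mathcal{N}(0,\Sigma)$ is independent of $(A, \{x_1,\dots,x_n\})$, the identity $\E[\|Mx\|^2] = \trace(M \Sigma M^T) = \|M\Sigma^{1/2}\|_F^2$ for a deterministic matrix $M$ gives
$$\mathcal{R}_n(\theta) = \E_{A, X_n}\bigl[\|(PA^{-1}X_n Q - A^{-1})\Sigma^{1/2}\|_F^2\bigr],$$
which recovers the stated intermediate form.

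Next, I would expand the Frobenius norm squared via $\|M\Sigma^{1/2}\|_F^2 = \trace(M \Sigma M^T)$ with $M = PA^{-1}X_n Q - A^{-1}$. This produces four terms: one quadratic in $X_n$ of the form $\trace(PA^{-1} X_n (Q\Sigma Q^T) X_n A^{-1} P^T)$, two linear terms of the form $\trace(PA^{-1} X_n Q \Sigma A^{-1})$ and its transpose, and one $X_n$-independent term $\trace(A^{-1}\Sigma A^{-1})$. The linear terms are handled simply by $\E[X_n] = \Sigma$. The quadratic term is the main computation: using cyclicity of the trace to put it in the form $\trace\bigl(\E[X_n (Q\Sigma Q^T) X_n] \cdot A^{-1}P^T P A^{-1}\bigr)$, and then applying Lemma \ref{technicallemma} with the symmetric kernel $K = Q\Sigma Q^T$, I obtain
$$\E\bigl[X_n (Q\Sigma Q^T) X_n\bigr] = \frac{n+1}{n}\,\Sigma Q \Sigma Q^T \Sigma + \frac{\trace_{\Sigma}(Q\Sigma Q^T)}{n}\,\Sigma.$$

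Finally I would split $\frac{n+1}{n} = 1 + \frac{1}{n}$ and collect terms. The pieces with coefficient $1$ reassemble exactly into the expanded expression for $\trace((PA^{-1}\Sigma Q - A^{-1})\Sigma(Q^T \Sigma A^{-1} P^T - A^{-1})) = \|(PA^{-1}\Sigma Q - A^{-1})\Sigma^{1/2}\|_F^2$, yielding the first term of the claim. The pieces with coefficient $\frac{1}{n}$ are precisely $\trace(PA^{-1}\Sigma Q \Sigma Q^T \Sigma A^{-1} P^T) + \trace_{\Sigma}(Q\Sigma Q^T)\,\trace(PA^{-1}\Sigma A^{-1} P^T)$, giving the second term. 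Taking $\E_A$ on both sides concludes the proof. There is no genuine obstacle here; the only place requiring care is bookkeeping the cyclic-trace rearrangement so that Lemma \ref{technicallemma} is applied to the correct symmetric matrix, and verifying that the $\frac{n+1}{n}$ grouping recombines cleanly with the linear and constant terms into the stated limiting Frobenius norm.
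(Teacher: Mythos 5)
Your proposal is correct and follows essentially the same route as the paper: expand the Frobenius norm into a trace, use $\E[X_n]=\Sigma$ for the linear terms, apply Lemma \ref{technicallemma} with $K=Q\Sigma Q^T$ to the quadratic term, and split $\tfrac{n+1}{n}=1+\tfrac{1}{n}$ to recombine into the limiting risk plus the $O(1/n)$ correction. The only addition is your preliminary step integrating out the Gaussian query $x_{n+1}$, which the lemma's statement already absorbs into the definition of $\mathcal{R}_n$.
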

\begin{proof}
    This follows from a direct computation of the expectation with respect to $X_n:$
    \begin{align*}
        &\E_{A, X_n}[\|(PA^{-1}X_n Q - A^{-1})\Sigma^{1/2}\|_F^2] = \E_{A,X_n}[\trace((PA^{-1}X_n Q - A^{-1})\Sigma(Q^T X_n A^{-1} P^T - A^{-1}))] \\
        &= \E_{A,X_n}[\trace(A^{-1}\Sigma A^{-1} + PA^{-1}X_nQ\Sigma Q^T X_n A^{-1} P^T -PA^{-1}X_n Q \Sigma A^{-1} - A^{-1} \Sigma Q^T X_n A^{-1} P^T)] \\
        &= \E_{A}[\trace(A^{-1} \Sigma A^{-1} - PA^{-1} \Sigma Q \Sigma A^{-1} - A^{-1} \Sigma Q^T \Sigma A^{-1} P^T] \\
        & \qquad + \E_{A,X_n}[\trace(PA^{-1}X_nQ\Sigma Q^T X_n A^{-1} P^T)] \\
        &= \E_{A}[\trace(A^{-1} \Sigma A^{-1} - PA^{-1} \Sigma Q \Sigma A^{-1} - A^{-1} \Sigma Q^T \Sigma A^{-1} P^T] \\ &+ \frac{n+1}{n} \E_{A}[\trace(PA^{-1} \Sigma Q \Sigma Q^T \Sigma A^{-1} P^T)] + \frac{1}{n}\E_{A}[\trace_{\Sigma}(Q\Sigma Q^T) \trace(PA^{-1}\Sigma A^{-1} P^T)] \\
        &= \E_{A}[\|(PA^{-1}\Sigma Q - A^{-1})\Sigma^{1/2}\|_F^2] \\
        & \qquad + \frac{1}{n}\E_{A} \Big[\trace(PA^{-1} \Sigma Q \Sigma Q^T \Sigma A^{-1} P^T) + \trace_{\Sigma}(Q \Sigma Q^T) \trace(P A^{-1} \Sigma A^{-1} P^T) \Big],
    \end{align*}
    where we used Lemma \ref{technicallemma} to compute the expectation over $X_n$ in the second-to-last line.
\end{proof}

It will also be useful to derive a simpler expression for the population risk $\mathcal{R}_m(\theta)$ when $\theta$ belongs to the set $\Theta_{\Sigma} = \{(c \mathbf{I}_d, c^{-1} \Sigma^{-1}): c \in \R \backslash \{0\} \}.$

\begin{lemma}\label{specialpopriskexpression}
    Let $P = c \mathbf{I}_d$, $Q = c^{-1} \Sigma^{-1}$ for $c \in \R \backslash \{0\}.$ Then $$\mathcal{R}_m(\theta) = \frac{d+1}{n} \E_A \Big[ \trace \Big(A^{-1} \Sigma A^{-1} \Big) \Big].$$
\end{lemma}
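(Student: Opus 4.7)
The plan is to apply Lemma \ref{lem: popriskexpression} (in its $m$-sample form, where $n$ is replaced by $m$) directly to the special parameterization $\theta = (c\mathbf{I}_d, c^{-1}\Sigma^{-1})$ and watch the terms simplify. The key observation is that with this choice,
$$ PA^{-1}\Sigma Q = c \, A^{-1} \Sigma \cdot c^{-1}\Sigma^{-1} = A^{-1},$$
so the ``bias'' term $\E_A[\|(PA^{-1}\Sigma Q - A^{-1})\Sigma^{1/2}\|_F^2]$ in the expression for $\mathcal{R}_m$ vanishes identically, and only the $1/m$ variance-type contribution survives. I expect the statement of the lemma to have a minor typo (the denominator should be $m$ rather than $n$, since the risk is $\mathcal{R}_m$); the computation below yields $(d+1)/m$.

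Next, I would simplify each of the two surviving traces in the variance term. For the first,
$$ \trace\bigl(PA^{-1}\Sigma Q \Sigma Q^T \Sigma A^{-1} P^T\bigr) = \trace\bigl(A^{-1}\cdot \Sigma \cdot c^{-1}\Sigma^{-1} \cdot \Sigma \cdot c \, A^{-1}\bigr) = \trace(A^{-1}\Sigma A^{-1}),$$
using again that $PA^{-1}\Sigma Q = A^{-1}$ and $P^T = c\mathbf{I}_d$, $Q^T = c^{-1}\Sigma^{-1}$. For the second, I would compute $Q\Sigma Q^T = c^{-2}\Sigma^{-1}\Sigma \Sigma^{-1} = c^{-2}\Sigma^{-1}$, and then, using the definition $\trace_\Sigma(K) = \sum_{\ell=1}^d \sigma_\ell^2 \langle K\varphi_\ell,\varphi_\ell\rangle$ from Lemma \ref{technicallemma} together with the fact that the $\varphi_\ell$ are the eigenvectors of $\Sigma$ with eigenvalues $\sigma_\ell^2$,
$$ \trace_\Sigma(Q\Sigma Q^T) = c^{-2}\sum_{\ell=1}^d \sigma_\ell^2 \cdot \sigma_\ell^{-2} = c^{-2}\, d.$$
Combining with $\trace(PA^{-1}\Sigma A^{-1}P^T) = c^2\,\trace(A^{-1}\Sigma A^{-1})$, the cross term evaluates to $c^{-2} d \cdot c^2 \trace(A^{-1}\Sigma A^{-1}) = d\,\trace(A^{-1}\Sigma A^{-1})$.

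Putting the two contributions together, the bracket in Lemma \ref{lem: popriskexpression} equals $(1+d)\trace(A^{-1}\Sigma A^{-1})$, and dividing by $m$ and taking the expectation over $A$ gives
$$ \mathcal{R}_m(c\mathbf{I}_d, c^{-1}\Sigma^{-1}) = \frac{d+1}{m}\,\E_A\bigl[\trace(A^{-1}\Sigma A^{-1})\bigr],$$
as claimed. There is no genuine obstacle: the proof is essentially bookkeeping, and the only thing to be careful about is using the $\Sigma$-weighted trace identity correctly and tracking the cancellation $PA^{-1}\Sigma Q = A^{-1}$, which eliminates the ``approximation'' term and makes the final formula independent of $c$.
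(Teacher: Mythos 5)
Your proof is correct and follows essentially the same route as the paper: both apply the closed-form population risk expression (Lemma \ref{lem: popriskexpression}, itself built on Lemma \ref{technicallemma}), observe that $PA^{-1}\Sigma Q = A^{-1}$ kills the bias term, and reduce the remaining traces to $(1+d)\trace(A^{-1}\Sigma A^{-1})$ via $\trace_{\Sigma}(\Sigma^{-1}) = d$. You are also right that the denominator in the stated lemma should be $m$, not $n$; the paper's own proof carries this same typo by reusing the $n$-sample formula verbatim while working with $\mathcal{R}_m$.
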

\begin{proof}
    Using Lemma \ref{technicallemma} to compute the expectations defining $\mathcal{R}_m,$ we have
    \begin{align*} \mathcal{R}_m(\theta) &= \E_{A}[\trace(A^{-1} \Sigma A^{-1} - PA^{-1} \Sigma Q \Sigma A^{-1} - A^{-1} \Sigma Q^T \Sigma A^{-1} P^T] \\ &+ \frac{n+1}{n} \E_{A}[\trace(PA^{-1} \Sigma Q \Sigma Q^T \Sigma A^{-1} P^T)] + \frac{1}{n}\E_{A}[\trace_{\Sigma}(Q\Sigma Q^T) \trace(PA^{-1}\Sigma A^{-1} P^T)].
    \end{align*}
    Since $P = c \mathbf{I}_d$ and $Q = c^{-1} \Sigma^{-1}$, we have that $P A^{-1} \Sigma Q \Sigma A^{-1}$, $A^{-1} \Sigma Q^T \Sigma A^{-1} P^T$, and $P A^{-1} \Sigma Q \Sigma Q^T \Sigma A^{-1} P^T$ are all equal to $A^{-1} \Sigma A^{-1},$ and $$\E_A \trace_{\Sigma}(Q \Sigma Q^T) \trace(P A^{-1} \Sigma A^{-1} P^T) = \E_A\trace_{\Sigma}(\Sigma^{-1}) \trace(A^{-1} \Sigma A^{-1}).$$ Therefore, after some algebra, the population risk simplifies to
    $$ \mathcal{R}_m(\theta) = \frac{1 + \trace_{\Sigma}(\Sigma^{-1})}{n} \E_A \Big[ \trace \Big(A^{-1} \Sigma A^{-1} \Big) \Big].
    $$
    Noting that $\trace_{\Sigma}(\Sigma^{-1}) = d$, we conclude the expression for $\mathcal{R}_m(\theta)$ as stated in the lemma.
\end{proof}

We quote the following result from Theorem 2.1 of \cite{rudelson2013hanson}.

\begin{lemma}\label{gaussianconc}[Gaussian concentration bound]
    Let $y \sim \mathcal{N}(0,\Sigma)$. Then
    $$ \mathbb{P}\left\{\|y\| \geq \sqrt{\trace(\Sigma)} + t \right\} \leq 2 \exp \Big(-\frac{t^2}{C \|\Sigma\|_{\textrm{op}}} \Big),
    $$
    where $C > 0$ is a constant independent of $\Sigma$ and $d.$
\end{lemma}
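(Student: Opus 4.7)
The plan is to reduce the statement to the standard Gaussian concentration inequality for Lipschitz functions (Borell--TIS), combined with a simple moment bound on $\mathbb{E}\|y\|$. The key observation is that $\|y\|$ is a Lipschitz function of an underlying standard Gaussian, so its deviations from its mean are sub-Gaussian with a scale parameter determined by the operator norm of $\Sigma$, while its mean is controlled in terms of the trace of $\Sigma$.

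First, I would write $y = \Sigma^{1/2} g$ where $g \sim \mathcal{N}(0, \mathbf{I}_d)$, and define $f(g) := \|\Sigma^{1/2} g\|$. For any $g, g' \in \mathbb{R}^d$, the reverse triangle inequality combined with the sub-multiplicativity of the operator norm gives
\begin{equation*}
    |f(g) - f(g')| \leq \|\Sigma^{1/2}(g-g')\| \leq \|\Sigma^{1/2}\|_{\textrm{op}} \|g - g'\| = \sqrt{\|\Sigma\|_{\textrm{op}}} \, \|g-g'\|,
\end{equation*}
so $f$ is $L$-Lipschitz with $L = \sqrt{\|\Sigma\|_{\textrm{op}}}$. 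Next, by Jensen's inequality and the identity $\mathbb{E}\|y\|^2 = \trace(\Sigma)$, I would bound $\mathbb{E} f(g) = \mathbb{E}\|y\| \leq \sqrt{\mathbb{E}\|y\|^2} = \sqrt{\trace(\Sigma)}$.

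Then I would invoke the Borell--Tsirelson--Ibragimov--Sudakov inequality: for any $L$-Lipschitz function $f : \mathbb{R}^d \to \mathbb{R}$ and $g \sim \mathcal{N}(0, \mathbf{I}_d)$,
\begin{equation*}
    \mathbb{P}\bigl(f(g) - \mathbb{E} f(g) \geq t\bigr) \leq \exp\bigl(-t^2 / (2L^2)\bigr).
\end{equation*}
Combining this with the bound on $\mathbb{E} f(g)$ yields
\begin{equation*}
    \mathbb{P}\bigl(\|y\| \geq \sqrt{\trace(\Sigma)} + t\bigr) \leq \mathbb{P}\bigl(\|y\| - \mathbb{E}\|y\| \geq t\bigr) \leq \exp\bigl(-t^2/(2\|\Sigma\|_{\textrm{op}})\bigr),
\end{equation*}
which is the stated bound with constant $C = 2$ (the factor of $2$ in front of the exponential is not needed here since we only want the one-sided bound, but it is included as a harmless convention matching two-sided deviations).

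The only real step that requires care is identifying the correct Lipschitz constant of $g \mapsto \|\Sigma^{1/2} g\|$; everything else is a standard application of Gaussian concentration and Jensen's inequality. The argument is essentially self-contained once Borell--TIS is accepted as a black box, and it gives the universal constant $C = 2$ with no hidden dependence on $d$ or $\Sigma$, as required by the statement.
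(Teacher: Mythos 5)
Your proof is correct, but it takes a different route from the paper: the paper does not prove this lemma at all, it simply quotes it as a consequence of Theorem 2.1 of Rudelson--Vershynin (the Hanson--Wright inequality), which yields concentration of $\|\Sigma^{1/2}g\|$ for general sub-Gaussian vectors $g$ with an unspecified absolute constant $C$. Your argument instead gives a short self-contained derivation specific to the Gaussian case: the Lipschitz constant $\sqrt{\|\Sigma\|_{\textrm{op}}}$ of $g \mapsto \|\Sigma^{1/2}g\|$ is computed correctly, the bound $\E\|y\| \leq \sqrt{\trace(\Sigma)}$ by Jensen is exactly what is needed to replace the mean by the trace term, and Borell--TIS then gives the one-sided tail with the explicit constant $C = 2$, which is even slightly stronger than the stated two-sided-style bound. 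The trade-off is scope versus transparency: the Hanson--Wright route used by the paper extends verbatim to sub-Gaussian (non-Gaussian) covariates and to quantities like $\|X_n\|_{\textrm{op}}$ treated elsewhere in the same truncation argument, whereas your Borell--TIS argument relies on the exact Gaussianity of $y$ (which is guaranteed here by Assumption \ref{assumption:taskanddatadistr}) but has the advantage of being elementary, fully explicit in its constants, and independent of the dimension $d$, as the statement requires.
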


We use the following result to control the error between $Q_n$ and $\Sigma^{-1}$.

\begin{lemma}\label{perturbationlemma}
    \item Let $Q_n = B\Big(\frac{n+1}{n} B \Sigma + \frac{\trace_{\Sigma}(B)}{n} \Sigma \Big)^{-1}$. Assume that $n$ satisfies
    $$ \frac{\|\Sigma^{-1}\|_{\textrm{op}}\Big\|\Sigma\Big( \mathbf{I}_d + \trace_{\Sigma}(B) B^{-1} \Big)\Big\|_{\textrm{op}}}{n} \leq \frac{1}{2}.
    $$
    Then we can write
    $$ Q_n = \Sigma^{-1} + \frac{1}{n} \mathcal{E}_1,
    $$
    where $\mathcal{E}_1$ satisfies
    $$ \|\mathcal{E}_1\| \lesssim \|\Sigma^{-1}\|_{\textrm{op}} \|\Sigma\|_{\textrm{op}} \Big(1 + \trace_{\Sigma}(B) \Big) C_A^2.
    $$
\end{lemma}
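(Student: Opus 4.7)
The plan is to perform a resolvent (first-order Neumann series) expansion of the matrix inverse defining $Q_n$ about its $n\to\infty$ limit $\Sigma^{-1}$. Concretely, I would first factor the matrix inside the inverse as
\[
\tfrac{n+1}{n}\Sigma B + \tfrac{\trace_\Sigma(B)}{n}\Sigma \;=\; \Sigma B + \tfrac{1}{n}\Sigma\bigl(B + \trace_\Sigma(B)\mathbf{I}_d\bigr) \;=\; \Sigma B\bigl(\mathbf{I}_d + \tfrac{1}{n}M\bigr),
\]
where $M := \mathbf{I}_d + \trace_\Sigma(B)\, B^{-1}$. Inverting this factorization and multiplying on the left by $B$ produces
\[
Q_n \;=\; B\bigl(\mathbf{I}_d + \tfrac{1}{n}M\bigr)^{-1} B^{-1}\Sigma^{-1},
\]
whose leading-order term is $B\cdot \mathbf{I}_d\cdot B^{-1}\Sigma^{-1} = \Sigma^{-1}$, as required.

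Next, I would apply the resolvent identity $(\mathbf{I}_d + \tfrac{1}{n}M)^{-1} = \mathbf{I}_d - \tfrac{1}{n}M(\mathbf{I}_d + \tfrac{1}{n}M)^{-1}$ to isolate the order-$\tfrac{1}{n}$ correction. Comparing with the target decomposition $Q_n = \Sigma^{-1} + \tfrac{1}{n}\mathcal{E}_1$ yields the explicit formula
\[
\mathcal{E}_1 \;=\; -\,B\,M\,\bigl(\mathbf{I}_d + \tfrac{1}{n}M\bigr)^{-1} B^{-1}\Sigma^{-1}.
\]
To bound $\|\mathcal{E}_1\|_{\op}$, the stated smallness hypothesis on $n$ combined with the trivial submultiplicative inequality $\|M\|_{\op} = \|\Sigma^{-1}\Sigma M\|_{\op} \leq \|\Sigma^{-1}\|_{\op}\|\Sigma M\|_{\op}$ implies $\|\tfrac{1}{n}M\|_{\op}\leq \tfrac{1}{2}$; a standard Neumann estimate then gives $\|(\mathbf{I}_d + \tfrac{1}{n}M)^{-1}\|_{\op} \leq 2$. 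Combining this with the uniform bounds $\|B\|_{\op} \leq c_\A^2$ (from $\|A^{-1}\|_{\op}\leq c_\A$ and Jensen), a corresponding bound on $\|B^{-1}\|_{\op}$ in terms of $C_\A$ (using $\|A\|_{\op}\leq C_\A$ to lower-bound the spectrum of $A^{-2}$), and the triangle bound $\|M\|_{\op}\leq 1 + \trace_\Sigma(B)\|B^{-1}\|_{\op}$ yields the claimed operator-norm bound on $\mathcal{E}_1$ up to absolute constants.

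The main obstacle will be matching the exact form of the hypothesized smallness condition to the $\|\tfrac{1}{n}M\|_{\op}\leq \tfrac{1}{2}$ that the Neumann expansion requires. The hypothesis is stated in the slightly unusual form involving $\|\Sigma(\mathbf{I}_d + \trace_\Sigma(B)B^{-1})\|_{\op}$ rather than $\|\mathbf{I}_d + \trace_\Sigma(B)B^{-1}\|_{\op}$ directly, so one has to verify that the former, multiplied by $\|\Sigma^{-1}\|_{\op}$, indeed dominates the latter. A secondary subtlety is that $B$ and $\Sigma$ generally do not commute, so care must be taken to keep the outer $B$ and $B^{-1}$ on the correct sides of $(\mathbf{I}_d + \tfrac{1}{n}M)^{-1}$ throughout the computation and not to mix up the left- and right-factorizations of the denominator.
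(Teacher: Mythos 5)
Your proposal is correct and is essentially the paper's own argument: the paper likewise rewrites the denominator as $\Sigma\bigl(\mathbf{I}_d+\tfrac{1}{n}M\bigr)$ with $M=\mathbf{I}_d+\trace_{\Sigma}(B)B^{-1}$ (your conjugation by $B$ is actually vacuous, since $M$ is a polynomial in $B^{-1}$ and hence commutes with $B$) and then applies a first-order Neumann/resolvent bound under the identical smallness hypothesis. The only differences are presentational — the paper outsources the resolvent estimate to its Lemma \ref{matrixinvperturbation} rather than writing $\mathcal{E}_1$ explicitly — and your constant bookkeeping ($\|B\|_{\op}\le c_{\A}^2$, $\|B^{-1}\|_{\op}\le C_{\A}^2$) matches the stated bound to the same up-to-constants precision as the paper's own derivation.
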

\begin{proof}
    Using some algebra, we find
    \begin{align*}
        Q_n &= B\Big(\frac{n+1}{n} B \Sigma + \frac{\trace_{\Sigma}(B)}{n} \Sigma \Big)^{-1} \\
        &= \Big(\frac{n+1}{n} \Sigma + \frac{\trace_{\Sigma}(B)}{n} \Sigma B^{-1} \Big)^{-1} \\
        &= \Big(\Sigma + \frac{1}{n} \Sigma \Big(\mathbf{Id} + \trace_{\Sigma}(B) B^{-1} \Big) \Big)^{-1}.
    \end{align*}
    By Lemma \ref{matrixinvperturbation}, we have
    $$ \| Q_n - \Sigma^{-1}\|_{\textrm{op}} \leq \|\Sigma^{-1}\|_{\textrm{op}} \cdot \frac{\epsilon^{\ast}}{1-\epsilon^{\ast}},
    $$
    where
    $$ \epsilon^{\ast} = \frac{\|\Sigma^{-1}\|_{\textrm{op}}\Big\|\Sigma\Big( \mathbf{I}_d + \trace_{\Sigma}(B) B^{-1} \Big)\Big\|_{\textrm{op}}}{n}.
    $$
    This gives the final bound
    $$ \| Q_n - \Sigma^{-1}\|_{\textrm{op}} \lesssim \frac{\|\Sigma^{-1}\|_{\textrm{op}}\Big\|\Sigma\Big( \mathbf{I}_d + \trace_{\Sigma}(B) B^{-1} \Big)\Big\|_{\textrm{op}}}{n} \leq \frac{\|\Sigma^{-1}\|_{\textrm{op}} \|\Sigma\|_{\textrm{op}} \Big(1 + \trace_{\Sigma}(B)\|B^{-1}\|_{\textrm{op}} \Big) }{n},
    $$
    Here, we used the bound $\frac{\epsilon}{1-\epsilon} \lesssim\epsilon$ which holds for $\epsilon$ sufficiently small; in particular, for $\epsilon \in (0,1/2),$ we have $\frac{\epsilon}{1-\epsilon} \leq 2 \epsilon.$
\end{proof}

The following result, used to bound the inverse of a perturbed matrix, is a standard application of matrix power series.
\begin{lemma}\label{matrixinvperturbation}
    Suppose that $A$ is an invertible $d \times d$ matrix and $D \in \R^{d \times d}$ satisfies $\|D\|_{\textrm{op}} \leq \frac{\epsilon}{\|A^{-1}\|_{\textrm{op}}}$ for some $\epsilon < 1.$ Then 
    $$ \|(A+D)^{-1} - A^{-1} \|_{\textrm{op}} \leq \|A^{-1}\|_{\textrm{op}} \cdot \frac{\epsilon}{1-\epsilon}.
    $$
\end{lemma}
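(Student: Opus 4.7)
The plan is to derive this perturbation bound via the Neumann series representation of the inverse, which is the standard tool when the perturbation is small relative to the unperturbed inverse. The hypothesis $\|D\|_{\textrm{op}} \leq \epsilon / \|A^{-1}\|_{\textrm{op}}$ guarantees the submultiplicative bound $\|A^{-1}D\|_{\textrm{op}} \leq \epsilon < 1$, which is precisely the convergence condition needed for the Neumann expansion.

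First I would factor $A + D = A(I + A^{-1}D)$ and note that $I + A^{-1}D$ is invertible with
\begin{equation*}
(I + A^{-1}D)^{-1} = \sum_{k=0}^{\infty} (-A^{-1}D)^k,
\end{equation*}
where the series converges absolutely in operator norm because $\|A^{-1}D\|_{\textrm{op}} \leq \epsilon < 1$. Consequently $(A+D)^{-1} = (I + A^{-1}D)^{-1} A^{-1}$, and subtracting $A^{-1}$ from both sides and isolating the $k=0$ term gives the telescoping identity
\begin{equation*}
(A+D)^{-1} - A^{-1} = \sum_{k=1}^{\infty} (-A^{-1}D)^k A^{-1}.
\end{equation*}

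Taking operator norms, applying the triangle inequality term by term, and using submultiplicativity yields
\begin{equation*}
\|(A+D)^{-1} - A^{-1}\|_{\textrm{op}} \leq \|A^{-1}\|_{\textrm{op}} \sum_{k=1}^{\infty} \|A^{-1}D\|_{\textrm{op}}^k \leq \|A^{-1}\|_{\textrm{op}} \sum_{k=1}^{\infty} \epsilon^k = \|A^{-1}\|_{\textrm{op}} \cdot \frac{\epsilon}{1-\epsilon},
\end{equation*}
which is exactly the claimed bound. There is no real obstacle here: the only thing to verify carefully is that the hypothesis is used to control $\|A^{-1}D\|_{\textrm{op}}$ rather than $\|D\|_{\textrm{op}}$ directly, but this follows immediately from $\|A^{-1}D\|_{\textrm{op}} \leq \|A^{-1}\|_{\textrm{op}} \|D\|_{\textrm{op}}$ and the assumed bound on $\|D\|_{\textrm{op}}$. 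The entire argument is self-contained and does not require any prior lemmas from the paper.
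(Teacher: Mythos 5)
Your proof is correct and follows essentially the same Neumann-series argument as the paper; the only cosmetic difference is that you factor $A+D = A(I+A^{-1}D)$ while the paper writes $(I+DA^{-1})A$, which leads to the same bound via submultiplicativity.
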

\begin{proof}
    Note that $A+D = (\mathbf{I}_d + DA^{-1})A$. Under our assumption on $D$, we have $\|DA^{-1}\|_{\textrm{op}} \leq \|D\|_{\textrm{op}} \|A^{-1}\|_{\textrm{op}} < 1$, which implies the series expansion
    $$ (\mathbf{I}_d+DA^{-1})^{-1} = \sum_{k=0}^{\infty} (-DA^{-1})^k.
    $$
    It follows that
    \begin{align*}
        (A+D)^{-1} &= \Big( \Big(\mathbf{I}_d + DA^{-1} \Big) A \Big)^{-1} \\
        &= A^{-1} \Big(\mathbf{I}_d+DA^{-1} \Big)^{-1} \\
        &= A^{-1} \sum_{k=0}(-DA^{-1})^k.
    \end{align*} 
    In turn, this gives the bound
    \begin{align*}
        |(A+D)^{-1} - A^{-1} \|_{\textrm{op}} &= \Big\| A^{-1} \sum_{k=1}^{\infty} (-DA^{-1})^k 
        \Big\|_{\textrm{op}} \\
        &\leq \|A^{-1}\|_{\textrm{op}} \sum_{k=1}^{\infty} \|DA^{-1}\|_{\textrm{op}}^k \\
        &\leq  \|A^{-1}\|_{\textrm{op}} \sum_{k=1}^{\infty} \epsilon^k \\
        &= \|A^{-1}\|_{\textrm{op}} \frac{\epsilon}{1-\epsilon}.
    \end{align*}
\end{proof}

Recall that for a positive definite matrix $\Sigma = W \Lambda W^T$ and a symmetric matrix $K$, $$\trace_{\Sigma}(K) = \sum_{i=1}^{d} \sigma_i^2 \langle K \varphi_i, \varphi_i \rangle,$$ where $\sigma_1^2, \dots, \sigma_d^2$ are the eigenvalues of $\Sigma$ and $\varphi_i = W e_i$ are the eigenvectors of $\Sigma.$

\begin{lemma}\label{normboundxi}
    For any symmetric matrix $K,$ we have $$\trace_{\Sigma}(K) \leq \|K\|_{\textrm{op}} \trace(\Sigma).$$
\end{lemma}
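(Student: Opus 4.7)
The plan is to bound each term in the sum defining $\trace_{\Sigma}(K)$ by $\|K\|_{\textrm{op}}$ and then factor the operator norm out of the sum. Since $\Sigma$ is symmetric positive definite (being a covariance matrix), its eigenvalues $\sigma_1^2, \dots, \sigma_d^2$ are nonnegative, and its eigenvectors $\{\varphi_i\}_{i=1}^{d}$ form an orthonormal basis of $\R^d$.

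First I would invoke the variational characterization of the operator norm for a symmetric matrix: for any unit vector $v \in \R^d$, the bilinear form satisfies $\langle Kv, v \rangle \leq \|K\|_{\textrm{op}}$. Applied to each eigenvector $\varphi_i$ (which has unit norm), this yields $\langle K \varphi_i, \varphi_i \rangle \leq \|K\|_{\textrm{op}}$ for every $i$.

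Next, because $\sigma_i^2 \geq 0$, I would multiply the inequality above by $\sigma_i^2$ without flipping the direction, sum over $i$, and recognize $\sum_{i=1}^{d} \sigma_i^2 = \trace(\Sigma)$. This gives
\begin{equation*}
\trace_{\Sigma}(K) = \sum_{i=1}^{d} \sigma_i^2 \langle K \varphi_i, \varphi_i \rangle \leq \|K\|_{\textrm{op}} \sum_{i=1}^{d} \sigma_i^2 = \|K\|_{\textrm{op}} \trace(\Sigma),
\end{equation*}
which is the desired bound. There is essentially no obstacle here; the only subtlety to flag is the use of the positivity of the eigenvalues of $\Sigma$, which is what allows the termwise bound to be summed without introducing absolute values on the scalar coefficients.
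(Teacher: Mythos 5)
Your proof is correct and follows essentially the same route as the paper: bound each term via $\langle K\varphi_i,\varphi_i\rangle \leq \|K\|_{\textrm{op}}$ (the paper gets this from Cauchy--Schwarz, you from the variational characterization of the operator norm — the same fact), then multiply by the nonnegative eigenvalues $\sigma_i^2$ and sum. No gaps.
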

\begin{proof}
    For each $1 \leq i \leq d,$ we have $\langle K \varphi_i, \varphi_i \rangle \leq \|K \varphi_i\| \|\varphi_i\| \leq \|K\|_{\textrm{op}}$. Therefore, 
    $$ \trace_{\Sigma}(K) = \sum_{i=1}^{d} \sigma_i^2 \langle K \varphi_i, \varphi_i \rangle \leq \|K\|_{\textrm{op}} \sum_{i=1}^{d} \sigma_i^2 = \|K\|_{\textrm{op}} \trace(\Sigma).$$
\end{proof}

In order to prove Theorem \ref{thm: covariateshift}, we also need the following stability bound of $\trace_{\Sigma}(K)$ with respect to perturbations of both $\Sigma$ and $K.$

\begin{lemma}\label{technicallemma2}
    Let $\Sigma = W \Lambda W^T$ and $\tilde{\Sigma} = \tilde{W} \tilde{\Lambda} \tilde{W}^T$ be two symmetric positive definite matrices and $K, \tilde{K}$ two symmetric matrices, let $\{\sigma_i^2\}_{i=1}^{d}$ and $\{\tilde{\sigma}_i^2\}_{i=1}^{d}$ be the respective eigenvalues of $\Sigma$ and $\tilde{\Sigma}$ and let $\{\varphi_i\}_{i=1}^{d}$ and $\{\tilde{\varphi}_i\}_{i=1}^{d}$ be the respective eigenvectors. Then
    $$ \Big| \trace_{\Sigma}(K) - \trace_{\tilde{\Sigma}} \tilde{K} \Big| \leq \trace(\tilde{\Sigma}) \|K-\tilde{K}\|_{\textrm{op}} + \|K\|_{\textrm{op}} \Big( \|\Lambda - \tilde{\Lambda}\|_1 + 2 \trace(\tilde{\Sigma}) \|W - \tilde{W}\|_{\textrm{op}} \Big).
    $$
\end{lemma}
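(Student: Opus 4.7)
The plan is to prove the bound by a standard add-and-subtract argument that separates the error from perturbing $K$ from the error from perturbing $\Sigma$, and then further splits the latter into an eigenvalue-perturbation piece and an eigenvector-perturbation piece. Specifically, using that $\trace_{\tilde \Sigma}(K) = \sum_i \tilde\sigma_i^2 \langle K \tilde\varphi_i, \tilde \varphi_i\rangle$, I write
\begin{equation*}
|\trace_{\Sigma}(K) - \trace_{\tilde \Sigma}(\tilde K)| \;\leq\; |\trace_{\tilde \Sigma}(K) - \trace_{\tilde \Sigma}(\tilde K)| \;+\; |\trace_{\Sigma}(K) - \trace_{\tilde \Sigma}(K)|.
\end{equation*}
The first term is $|\trace_{\tilde \Sigma}(K - \tilde K)|$, which is immediately bounded by $\trace(\tilde \Sigma)\|K-\tilde K\|_{\textrm{op}}$ via Lemma \ref{normboundxi}. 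This accounts for the first summand in the claimed bound.

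For the second term, I insert the hybrid expression $\sum_i \tilde\sigma_i^2 \langle K \varphi_i, \varphi_i\rangle$ and apply the triangle inequality:
\begin{equation*}
|\trace_{\Sigma}(K) - \trace_{\tilde \Sigma}(K)| \;\leq\; \Bigl|\sum_{i=1}^d (\sigma_i^2 - \tilde \sigma_i^2) \langle K\varphi_i,\varphi_i\rangle\Bigr| \;+\; \Bigl|\sum_{i=1}^d \tilde \sigma_i^2 \bigl( \langle K\varphi_i,\varphi_i\rangle - \langle K\tilde\varphi_i,\tilde\varphi_i\rangle \bigr)\Bigr|.
\end{equation*}
The first sum is bounded by $\|K\|_{\textrm{op}} \sum_i |\sigma_i^2 - \tilde \sigma_i^2| = \|K\|_{\textrm{op}} \|\Lambda - \tilde \Lambda\|_1$, using $|\langle K\varphi_i,\varphi_i\rangle| \leq \|K\|_{\textrm{op}}$. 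For the second sum, I use the identity $\langle K\varphi_i,\varphi_i\rangle - \langle K\tilde\varphi_i,\tilde\varphi_i\rangle = \langle K(\varphi_i - \tilde \varphi_i),\varphi_i\rangle + \langle K\tilde\varphi_i, \varphi_i - \tilde \varphi_i\rangle$, so each summand is bounded by $2\|K\|_{\textrm{op}} \|\varphi_i - \tilde \varphi_i\|$. Since $\varphi_i - \tilde \varphi_i = (W - \tilde W)e_i$, we have $\|\varphi_i - \tilde \varphi_i\| \leq \|W - \tilde W\|_{\textrm{op}}$, and summing against $\tilde \sigma_i^2$ yields $2\|K\|_{\textrm{op}} \trace(\tilde \Sigma)\|W - \tilde W\|_{\textrm{op}}$. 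Combining these gives the second bracketed term of the claimed inequality.

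There is no real obstacle here; the estimate is a textbook perturbation bound built from three applications of the triangle inequality and the operator norm bound $|\langle Kv,w\rangle| \leq \|K\|_{\textrm{op}}\|v\|\|w\|$. The only minor subtlety is the implicit pairing of eigenvalues/eigenvectors of $\Sigma$ and $\tilde \Sigma$: the inequality is stated with respect to a fixed labeling (so that $\|\Lambda - \tilde \Lambda\|_1$ and $\|W - \tilde W\|_{\textrm{op}}$ are well-defined), which I will adopt implicitly from the orthogonal decompositions given in the hypothesis.
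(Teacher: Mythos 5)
Your proof is correct and follows essentially the same route as the paper's: the identical two-step triangle-inequality decomposition (first separating the $K$-perturbation via Lemma \ref{normboundxi}, then splitting $|\trace_{\Sigma}(K) - \trace_{\tilde{\Sigma}}(K)|$ into an eigenvalue piece bounded by $\|K\|_{\textrm{op}}\|\Lambda-\tilde{\Lambda}\|_1$ and an eigenvector piece bounded via $\varphi_i - \tilde{\varphi}_i = (W-\tilde{W})e_i$). Your closing remark about the implicit pairing of the eigendecompositions is a fair observation that the paper leaves tacit as well.
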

\begin{proof}
    We have \begin{equation}\label{technicallemma2eq1} \trace_{\Sigma}(K) - \trace_{\tilde{\Sigma}}(\tilde{K}) \leq \Big|\trace_{\Sigma}(K) - \trace_{\tilde{\Sigma}}(K) \Big| + \Big| \trace_{\tilde{\Sigma}}(K - \tilde{K}) \Big|.\end{equation} 
The second term in \eqref{technicallemma2eq1} can be bounded by an application of Lemma \ref{normboundxi}, which yields $$\Big| \trace_{\tilde{\Sigma}}(K - \tilde{K}) \Big| \leq \trace(\tilde{\Sigma}) \|K- \tilde{K}\|_{\textrm{op}}.$$ To bound the first term in \eqref{technicallemma2eq1}, we first use the estimate
\begin{align*}
    \Big|\trace_{\Sigma}(K) - \trace_{\tilde{\Sigma}}(K) \Big| &\leq \Big|\sum_{i=1}^{d} \Big(\sigma_i^2 - \tilde{\sigma}_i^2 \Big) \langle K \varphi_i, \varphi_i \rangle  \Big| + \Big| \sum_{i=1}^{d} \tilde{\sigma}_i^2 \Big( \langle K (\varphi_i - \tilde{\varphi}_i), \varphi_i \rangle + \langle K \tilde{\varphi}_i, \varphi_i - \tilde{\varphi_i} \rangle\Big) \Big|.
\end{align*}
The first term above can be bounded by
\begin{equation}\label{technicallemma2eq2} \Big|\sum_{i=1}^{d} \Big(\sigma_i^2 - \tilde{\sigma}_i^2 \Big) \langle K \varphi_i, \varphi_i \rangle  \Big| \leq \|K\|_{\textrm{op}} \cdot \sum_{i=1}^{d} \Big| \sigma_i^2 - \tilde{\sigma}_i^2 \Big| = \|K\|_{\textrm{op}} \cdot \|\Lambda - \tilde{\Lambda}\|_{1}.
\end{equation}
To bound the second term in \eqref{technicallemma2eq2}, note that for any $1 \leq i \leq d,$ we have $$\langle K(\varphi_i - \tilde{\varphi}_i, \varphi_i \rangle \leq \|K\|_{\textrm{op}} \|\varphi_i-\varphi_i\| \leq \|K\|_{\textrm{op}} \|W-\tilde{W}\|_{\textrm{op}},$$ and similarly $\langle K \tilde{\varphi}, \varphi-\tilde{\varphi} \rangle \leq \|K\|_{\textrm{op}} \|W - \tilde{W}\|_{\textrm{op}}.$ It therefore holds that
\begin{align*} 
    \Big| \sum_{i=1}^{d} \tilde{\sigma}_i^2 \Big( \langle K (\varphi_i - \tilde{\varphi}_i), \varphi_i \rangle + \langle K \tilde{\varphi}_i, \varphi_i - \tilde{\varphi_i} \rangle\Big) \Big| &\leq 2 \|K\|_{\textrm{op}} \trace(\tilde{\Sigma}) \|W - \tilde{W}\|_{\textrm{op}}.
\end{align*}
Combining all terms yields the final estimate
\begin{align*}
    \Big| \trace_{\Sigma}(K) - \trace_{\tilde{\Sigma}} \tilde{K} \Big| \leq \trace(\tilde{\Sigma}) \|K-\tilde{K}\|_{\textrm{op}} + \|K\|_{\textrm{op}} \Big( \|\Lambda - \tilde{\Lambda}\|_1 + 2 \trace(\tilde{\Sigma}) \|W - \tilde{W}\|_{\textrm{op}} \Big).
\end{align*}
\end{proof}

The following lemma is an adaptation of Wald's consistency theorem of M-estimators \cite[Theorem 5.14]{van2000asymptotic}. We use it to prove the convergence in probability of empirical risk minimizers to population risk minimizers.

\begin{lemma}\label{wald}
    Let $\theta \in \R^m$, $x \in \R^d$, and suppose $\ell(\cdot,\cdot): \R^{d} \times \R^m \rightarrow [0,\infty)$ is lower semicontinuous in $\theta$. Let $m_0 = \textrm{min}_{\theta} \E[\ell(x,\theta)]$ for some fixed distribution on $x$, and let $\Theta_0 = \textrm{argmin}_{\theta} \E[\ell(x,\theta)]$. Let $\{\theta_N\}_{N \in \mathbb{N}}$ be a collection of estimators such that $\sup_N \|\theta_N\| < \infty$ and $$m_0 - \E_N[\ell(x,\theta_0)] = o_P(1)$$ Then $\textrm{dist}(\theta_N,\Theta_0) \inp 0.$
\end{lemma}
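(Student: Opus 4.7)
The plan is to follow the classical Wald consistency argument for M-estimators: combine lower semicontinuity of the population criterion with a compactness-and-separation argument to transfer closeness in objective value into closeness in parameter space.

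First I would introduce the population functional $L(\theta) := \E[\ell(x,\theta)]$ and observe, via Fatou's lemma applied to the nonnegative lower semicontinuous integrand $\ell(\cdot,\theta)$, that $L$ is itself lower semicontinuous in $\theta$. Since $\sup_N \|\theta_N\| =: R < \infty$, the sequence $\{\theta_N\}$ lies in the compact ball $B_R = \{\theta \in \R^m : \|\theta\| \leq R\}$. The hypothesis that $m_0 - \E_N[\ell(x,\theta_N)] = o_P(1)$ says that the empirical objective at $\theta_N$ approaches the population minimum, and to connect this with $L$ I would invoke a uniform law of large numbers over the compact set $B_R$, yielding $\sup_{\theta \in B_R}|\E_N[\ell(x,\theta)] - L(\theta)| \inp 0$ and hence $L(\theta_N) \inp m_0$.

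Next, I would fix an arbitrary $\eta > 0$ and define the ``bad'' set $K_\eta := \{\theta \in B_R : \textrm{dist}(\theta,\Theta_0) \geq \eta\}$, which is closed and bounded, hence compact. By definition of $\Theta_0$ as the set of minimizers, $L(\theta) > m_0$ for every $\theta \in K_\eta$; combined with the lower semicontinuity of $L$ on the compact set $K_\eta$, this yields that the infimum is attained and $m_\eta := \inf_{\theta \in K_\eta} L(\theta) > m_0$. Therefore on the event $\{\textrm{dist}(\theta_N,\Theta_0) \geq \eta\}$ we have $\theta_N \in K_\eta$ and $L(\theta_N) \geq m_\eta$, so
\begin{equation*}
\mathbb{P}\bigl(\textrm{dist}(\theta_N,\Theta_0) \geq \eta\bigr) \leq \mathbb{P}\bigl(L(\theta_N) \geq m_\eta\bigr) \longrightarrow 0,
\end{equation*}
since $L(\theta_N) \inp m_0$ and $m_\eta - m_0 > 0$ is a fixed positive gap. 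As $\eta > 0$ was arbitrary, this establishes $\textrm{dist}(\theta_N,\Theta_0) \inp 0$.

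The main obstacle is justifying the uniform law of large numbers used to pass from the empirical to the population criterion; this requires a mild envelope condition on $\ell$ over $B_R$, which in the paper's application holds because $\ell$ is polynomial in $\theta$ with data-dependent coefficients possessing finite moments under the Gaussian covariate law. A cleaner alternative that sidesteps uniform convergence is a subsequence argument: if the conclusion fails, extract a subsequence $\theta_{N_k}$ with $\mathbb{P}(\textrm{dist}(\theta_{N_k},\Theta_0) \geq \eta) \geq \delta$, pass along a Bolzano--Weierstrass subsequence on a suitable good event to a deterministic limit $\theta^\star \in K_\eta$, and apply lower semicontinuity at $\theta^\star$ together with the hypothesis to contradict $L(\theta^\star) \leq m_0 < m_\eta$.
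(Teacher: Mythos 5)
The paper gives no proof of this lemma; it is quoted as an adaptation of Wald's consistency theorem and referred to van der Vaart, Theorem 5.14. Your skeleton --- restrict to the compact ball $B_R$, separate the ``bad'' set $K_\eta=\{\theta\in B_R:\textrm{dist}(\theta,\Theta_0)\ge\eta\}$ from the minimum using lower semicontinuity and compactness, and conclude from convergence of the criterion value --- is exactly the right shape, and the steps ``$L$ is lsc by Fatou'', ``$m_\eta:=\inf_{K_\eta}L>m_0$'', and the final union-of-events estimate are all fine.

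The gap is the bridge from the hypothesis $\E_N[\ell(x,\theta_N)]\inp m_0$ to control of the \emph{population} criterion at the random point $\theta_N$. You invoke a two-sided uniform law of large numbers $\sup_{\theta\in B_R}|\E_N[\ell(x,\theta)]-L(\theta)|\inp 0$, but the lemma assumes only that $\ell$ is lower semicontinuous and nonnegative in $\theta$: there is no continuity, no bracketing, and no integrable envelope, and for a merely lsc class a two-sided uniform LLN is simply false in general (you acknowledge the envelope issue but the problem is deeper than an envelope --- continuity in $\theta$ is also needed). Your fallback subsequence argument does not repair this: the $\theta_{N_k}$ are random, so Bolzano--Weierstrass does not produce a deterministic limit $\theta^\star$, and even granting a limit you would still need $\liminf_k \E_{N_k}[\ell(x,\theta_{N_k})]\ge L(\theta^\star)$, which is precisely the missing uniform/epigraphical convergence. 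The standard Wald proof (the one behind the cited Theorem 5.14, adapted to minimization) avoids all of this: for each $\theta\in K_\eta$, lower semicontinuity plus monotone convergence of the nonnegative envelopes $\inf_{\theta'\in U}\ell(x,\theta')$ as $U\downarrow\{\theta\}$ gives a ball $U_\theta$ with $\E[\inf_{\theta'\in U_\theta}\ell(x,\theta')]>m_0$; cover the compact $K_\eta$ by finitely many such balls and apply the ordinary one-sided LLN for nonnegative variables to each envelope, obtaining $\inf_{\theta\in K_\eta}\E_N[\ell(x,\theta)]\ge\min_j\E_N[\inf_{U_j}\ell]\inp\min_j\E[\inf_{U_j}\ell]>m_0$, which contradicts the hypothesis on the event $\{\theta_N\in K_\eta\}$. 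This covering step is the idea your write-up is missing; with it, no uniform LLN and no envelope integrability are required. (Separately, you silently read the paper's hypothesis ``$m_0-\E_N[\ell(x,\theta_0)]=o_P(1)$'' as involving $\theta_N$ rather than $\theta_0$; that is the correct reading given how the lemma is used in Theorem \ref{convergenceofminimizers}, but it is worth stating explicitly.)
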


\begin{theorem}\label{convergenceofminimizers}
  For any sequence $\{\widehat{\theta}_{n,N}\}_{n,N \in \mathbb{N}}$ of minimizers of the empirical risk $\mathcal{R}_{n,N}$ with $\sup_N \|\widehat{\theta}_{n,N}\| < \infty$ for all $n$, we have
    $$ \lim_{n \to \infty} \lim_{N to \infty} \textrm{dist}(\widehat{\theta}_{n,N}, \mathcal{M}_{\infty}) = 0, \; \textrm{in probability}.
    $$
\end{theorem}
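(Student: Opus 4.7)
The plan is to decompose the iterated convergence into a standard M-estimation step that handles the inner limit in $N$ and a deterministic variational step that handles the outer limit in $n$. Writing $\mathcal{M}_n := \arg\min_{\|\theta\| \leq M} \mathcal{R}_n(\theta)$, the triangle inequality gives
\begin{equation*}
    \textrm{dist}(\widehat{\theta}_{n,N}, \mathcal{M}_\infty) \leq \textrm{dist}(\widehat{\theta}_{n,N}, \mathcal{M}_n) + \sup_{\theta \in \mathcal{M}_n} \textrm{dist}(\theta, \mathcal{M}_\infty),
\end{equation*}
so it suffices to show that the first term tends to $0$ in probability as $N \to \infty$ for each fixed $n$, and that the deterministic second term tends to $0$ as $n \to \infty$.

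For the inner limit, I would apply Wald's theorem (Lemma \ref{wald}) with the polynomial (hence continuous) loss $\ell_\theta(A,X_n,x_{n+1})$ restricted to the compact set $\{\|\theta\| \leq M\}$. Lemma \ref{lem: staterrorbd} already supplies the required ingredient: its Rademacher bound, together with the boundedness of $\ell_\theta$ on the truncation event, yields $\sup_{\|\theta\| \leq M} |\mathcal{R}_{n,N}(\theta) - \mathcal{R}_n(\theta)| = o_P(1)$, and combining this uniform deviation bound with the defining optimality of $\widehat{\theta}_{n,N}$ for $\mathcal{R}_{n,N}$ gives $\mathcal{R}_n(\widehat{\theta}_{n,N}) - \min_{\|\theta\| \leq M} \mathcal{R}_n(\theta) = o_P(1)$. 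The hypothesis $\sup_N \|\widehat{\theta}_{n,N}\| \leq M$ holds by construction, so Lemma \ref{wald} yields $\textrm{dist}(\widehat{\theta}_{n,N}, \mathcal{M}_n) \inp 0$ as $N \to \infty$.

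For the outer limit, I would use Lemma \ref{lem: popriskexpression} to write
\begin{equation*}
    \mathcal{R}_n(\theta) - \mathcal{R}_\infty(\theta) = \frac{1}{n} \E_A\!\left[ \trace(PA^{-1}\Sigma Q \Sigma Q^T \Sigma A^{-1} P^T) + \trace_\Sigma(Q \Sigma Q^T) \trace(PA^{-1}\Sigma A^{-1} P^T) \right],
\end{equation*}
and bound the right-hand side uniformly on $\{\|\theta\| \leq M\}$ using Assumption \ref{assumption:taskanddatadistr} to obtain $\sup_{\|\theta\| \leq M} |\mathcal{R}_n(\theta) - \mathcal{R}_\infty(\theta)| \leq C/n$ for a constant $C$ depending polynomially on $M$, $c_{\A}$, and the norms of $\Sigma$. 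For any $\theta_n \in \mathcal{M}_n$ and any $\theta^* \in \mathcal{M}_\infty$ (with $\mathcal{R}_\infty(\theta^*) = 0$), this uniform bound together with $\mathcal{R}_n(\theta_n) \leq \mathcal{R}_n(\theta^*)$ forces $\mathcal{R}_\infty(\theta_n) \leq 2C/n \to 0$. A standard subsequence-and-compactness argument, using continuity of $\mathcal{R}_\infty$ and the fact that $\mathcal{M}_\infty$ is closed (being a level set of a continuous function, as characterized in Proposition \ref{minimizercharacterization}), then upgrades this to $\sup_{\theta \in \mathcal{M}_n} \textrm{dist}(\theta, \mathcal{M}_\infty) \to 0$.

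The main obstacle will be the outer convergence step, because $\mathcal{M}_n$ is generally not a singleton (as the task-diversity analysis makes clear) and hence one needs one-sided Hausdorff rather than pointwise convergence. The usual trap is that a near-minimizer of $\mathcal{R}_n$ could drift to the boundary of $\{\|\theta\| \leq M\}$, but this is excluded because the uniform $1/n$ bound together with compactness forces any limit point to lie in $\mathcal{M}_\infty$. Once that compactness argument is in place, combining the two steps with $N \to \infty$ then $n \to \infty$ delivers the claim, and the estimate $\sup |\mathcal{R}_n - \mathcal{R}_\infty| \lesssim 1/n$ in fact gives an explicit rate for the outer limit.
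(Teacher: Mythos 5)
Your proposal is correct and follows essentially the same route as the paper: Wald's consistency theorem (Lemma \ref{wald}) handles the inner limit in $N$, and the uniform convergence of $\mathcal{R}_n$ to $\mathcal{R}_\infty$ on $\{\|\theta\|\leq M\}$ (coming from the $1/n$ term in Lemma \ref{lem: popriskexpression}) handles the outer limit. The only differences are matters of detail: you verify the Wald hypothesis via a uniform deviation bound where the paper uses the pointwise law of large numbers at a population minimizer together with the optimality of $\widehat{\theta}_{n,N}$, and you spell out the compactness argument showing $\sup_{\theta\in\mathcal{M}_n}\textrm{dist}(\theta,\mathcal{M}_\infty)\to 0$, which the paper compresses into a single sentence about convergence of the polynomial coefficients of $\mathcal{R}_n$.
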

\begin{proof}
    For each fixed $n \in \mathbb{N}.$ we can apply Lemma \ref{wald} to the empirical risk minimizer $\widehat{\theta}_{n,N}$. In this context, the condition of the lemma amounts to the condition that $\mathcal{R}_n(\theta_{\ast}) - \mathcal{R}_{n,N}(\widehat{\theta}_{n,N}) = o_P(1),$ for any $\theta_{\ast} \in \textrm{argmin}_{\theta} \mathcal{R}_n,$ which is satisfied since
$$ \mathcal{R}_n(\theta_{\ast}) - \mathcal{R}_{n,N}(\widehat{\theta}_{n,N}) = \Big( \mathcal{R}_n(\theta_{\ast}) - \mathcal{R}_{n,N}(\theta_{\ast}) \Big) + \Big(\mathcal{R}_{n,N}(\theta_{\ast}) -  \mathcal{R}_{n,N}(\widehat{\theta}_{n,N})\Big).
$$
The first term tends to zero in probability by the law of large numbers, and the second term is non-negative by the minimality of $\widehat{\theta}_{n,N}.$ This proves that
$$ \lim_{N to \infty} \textrm{dist}(\widehat{\theta}_{n,N}, \mathcal{M}_{n}) = 0, \; \textrm{in probability},
$$
where $\mathcal{M}_n = \textrm{argmin}_{\theta} \mathcal{R}_n(\theta).$ Consequently, since $\mathcal{R}_n$ and $\mathcal{R}_{\infty}$ are polynomials in $\theta$ such that the coefficients of $\mathcal{R}_n$ converge to the coefficients of $\mathcal{R}_{\infty}$ as $n \rightarrow \infty,$ we have by the triangle inequality that
    $$\lim_{n \to \infty} \lim_{N \to \infty} \textrm{dist}(\widehat{\theta}_{n,N}, \mathcal{M}_{\infty}) = 0, \; \textrm{in probability}. $$
\end{proof}

\section{Additional numerical experiments on OOD generalization for ICL of PDEs} \label{sec:addnum} In this section, we provide some additional numerical results to validate the importance of task diversity for the in-context operator learning of linear elliptic PDE. We set $a(x) \equiv0.1$, and let source term $f(x)$ to be white noise. We train a linear transformer on task distribution $V = c\cdot  \bI_d, c \sim U[10,20]$, and then evaluate it on more diverse task distribution $V(x) \sim \mathbf{U}_d[a,b]$ across various $(a,b)$, and the results are presented in \ref{fig:PDE_not_diverse}. Similar to \ref{subsec: ood_not_diverse}, when training tasks lack of diversity the loss admits infinitely many minimizers, namely any diagonal matrix pair $(K, K^{-1})$ with $K \sim \mathbf{U}_d[10, 20]$, but only the minimizer $(\bI_d, \bI_d)$ generalizes to more diverse downstream tasks. In the left plot of \ref{fig:PDE_not_diverse}, a transformer initialized near $(\bI_d, \bI_d)$ generalizes to the downstream tasks, whereas the one initialized near $(K, K^{-1})$ (right plot) suffers degraded performance. These results are consistent with \ref{thm: oodgennecessary}, which emphasizes the importance of task diversity in achieving the OOD generalization.

\begin{figure}[H]
  \centering
  \includegraphics[width=\linewidth]{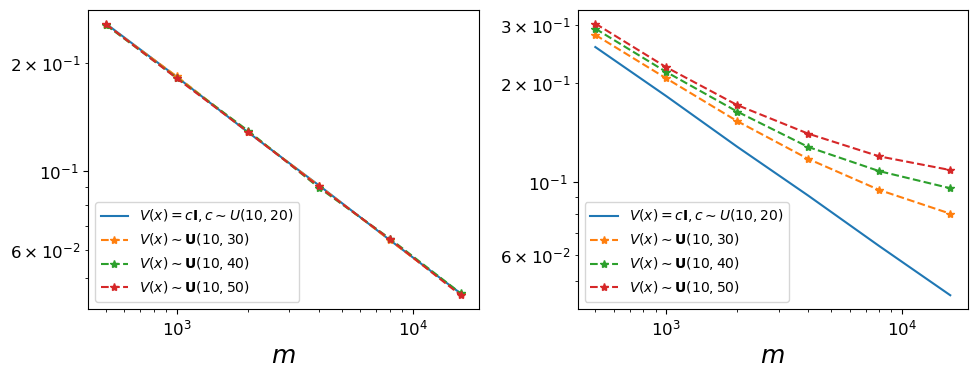}
  \caption{Diversity test with $d=32$, training prompt length $n=2000$, number of tasks $N=20000$. Both plots show the relative $H^1$ error with respect to varying testing prompt length $m$. The left plot corresponds to a transformer initialized near the minimizer $(P, Q) = (\mathbf{I}_d, \mathbf{I}_d)$, while the right plot uses initialization $(P, Q) = (K, K^{-1})$, where $K$ is a diagonal matrix with entries sampled from the uniform distribution $U(10, 20)$. }
  \label{fig:PDE_not_diverse}
\end{figure}


\end{document}